\documentclass{article}


\usepackage[preprint]{neurips_2025} 


\usepackage[utf8]{inputenc} 
\usepackage[T1]{fontenc}    
\usepackage{hyperref}       
\usepackage{url}            
\usepackage{booktabs}       
\usepackage{amsfonts}       
\usepackage{nicefrac}       
\usepackage{microtype}      
\usepackage{xcolor}         
\usepackage{longtable}
\usepackage{multirow}
\usepackage[table]{xcolor}
\usepackage{pdfpages}



\usepackage{graphicx}
\usepackage{float}
\usepackage{caption}
\usepackage{subcaption}
\usepackage{mathtools}
\usepackage{chngcntr}
\usepackage[capitalize,noabbrev]{cleveref}
\crefname{equation}{}{} 
\usepackage{algorithm}
\usepackage{algpseudocode}
\usepackage{amsthm,amsfonts,amssymb,amsmath,bbm,mathrsfs}
\usepackage{tcolorbox}
\usepackage{tikz}
\usetikzlibrary{arrows.meta,calc,positioning}

\setcitestyle{authoryear,open={(},close={)}}


\definecolor{attentionblue}{RGB}{0,102,204}
\definecolor{attentiongray}{RGB}{240,240,240}

\newtcbox{\tokenbox}{%
  fontupper=\ttfamily,
  colback=gray!10,
  boxrule=0pt,             
  arc=2pt,
  boxsep=0pt,
  frame empty,
  left=2pt,
  right=2pt,
  top=2pt,                 
  bottom=2pt,              
  nobeforeafter,
  valign=center,
  baseline,
  tcbox raise base,
  verbatim,                
  before upper={\vphantom{Äg}},
}

\newtcbox{\tokenboxline}{%
  fontupper=\ttfamily,
  colback=gray!10,
  boxrule=0.5pt,           
  arc=2pt,
  boxsep=0pt,
  left=2pt,
  right=2pt,
  top=2pt,                 
  bottom=2pt,              
  nobeforeafter,
  valign=center,
  baseline,
  tcbox raise base,
  verbatim,
  before upper={\vphantom{Äg}},
}

\NewDocumentCommand{\hatlambda}{O{}}{\hat{\lambda}(#1)}
\NewDocumentCommand{\Ln}{O{}}{L_n(#1)}

\newcommand{\E}{\mathbb{E}}

\newcommand{\dataset}[1]{\texttt{#1}}
\newcommand{\pilesub}[1]{\texttt{#1}}


\newcommand\legendbox[2]{%
    \tcbox[%
        nobeforeafter,tcbox raise base,size=fbox,
        colback=#1,colframe=#1!70!black,
    ]{#2}
}

\definecolor{patternWordPart}{HTML}{E377C2}
\definecolor{patternInduction}{HTML}{FF7F0E}
\definecolor{patternFormatting}{HTML}{686868}
\definecolor{patternSpacing}{HTML}{2CA02C}
\definecolor{patternWordStart}{HTML}{9467BD}
\definecolor{patternRightDelimiter}{HTML}{D62728}
\definecolor{patternLeftDelimiter}{HTML}{D62728}
\definecolor{patternNumeric}{HTML}{BCBD22}
\definecolor{patternWordEnd}{HTML}{1F77B4}
\definecolor{patternFunction}{HTML}{21d6d9}
\definecolor{patternCapitalized}{HTML}{a31bb5}




\NewDocumentCommand{\task}{O{}}{\mathbf{t}_{#1}}
\NewDocumentCommand{\residactvec}{O{(l)} O{i}}{\mathbf{z}^{#1}_{#2}}
\NewDocumentCommand{\residactcomp}{O{(l))} O{i} O{j}}{z^{#1}_{#2,#3}}

\newcommand{\batchsize}{m}
\NewDocumentCommand{\batchloss}{O{}}{L_{\batchsize}^{#1}}

\newcommand{\tok}[1]{\tokenbox{#1}}

\newcommand{\clu}[1]{\textbf{C#1}}
\newcommand{\cluq}[1]{\textbf{#1}}
\newcommand{\sae}[2]{\textbf{L#1-#2}}

\usepackage[table]{xcolor}

\definecolor{clusterBlue}{RGB}{235,245,255}   
\definecolor{clusterPurple}{RGB}{245,240,255}   
\definecolor{clusterGreen}{RGB}{240,250,240} 
\definecolor{clusterOrange}{RGB}{255,245,235}    
\definecolor{clusterGray}{RGB}{245,245,245}    

\newlength{\CARDHEIGHT}
\setlength{\CARDHEIGHT}{3cm}

\newcommand{\clustercard}[7]{%
\colorbox{#7}{%
\parbox[t][\CARDHEIGHT][t]{0.48\linewidth}{%
\footnotesize
\vspace{2pt}%
\textbf{#1}\hfill{\scriptsize(#2)}\\
{\raggedright\emph{#3}\par}%
\vspace{4pt}

\texttt{#4}\\
\texttt{#5}\\
\texttt{#6}%
\vfill%
}%
}%
}

\usepackage{tabularx}

\usepackage{pifont}

\usepackage{xcolor}
\definecolor{tokyellow}{RGB}{255,242,170}
\newcommand{\finaltok}[1]{\begingroup\setlength{\fboxsep}{1pt}\colorbox{tokyellow}{#1}\endgroup}

\usepackage{listings}
\lstset{
  basicstyle=\ttfamily\footnotesize,
  columns=fullflexible,
  keepspaces=true,
  showstringspaces=false,
  escapeinside={(*@}{@*)}, 
}



\renewcommand{\paragraph}[1]{\textbf{#1\ \ \ }}


\theoremstyle{plain}
\newtheorem{theorem}{Theorem}[section]

\newtheorem{lemma}[theorem]{Lemma}

\theoremstyle{definition}
\newtheorem{definition}[theorem]{Definition}

\theoremstyle{remark}

\setcounter{tocdepth}{2}


\title{%
    Towards Spectroscopy: Susceptibility Clusters in Language Models
}
\author{
  Andrew Gordon$^{=}$ \\
  Timaeus \\
  \texttt{andrew@timaeus.co}
  \And
  Garrett Baker$^{=}$ \\
  Timaeus \\
  \texttt{garrett@timaeus.co}
  \And
  George Wang \\
  Timaeus \\
  \texttt{george@timaeus.co}
  \AND
  William Snell \\
  Timaeus \\
  \texttt{william@timaeus.co}
  \And
  Stan van Wingerden \\
  Timaeus \\
  \texttt{stan@timaeus.co}
  \And
  Daniel Murfet \\
  Timaeus \\
  \texttt{daniel@timaeus.co}
}

\begin{document}

\maketitle

\begin{abstract}
Spectroscopy infers the internal structure of physical systems by measuring their response to perturbations. We apply this principle to neural networks: perturbing the data distribution by upweighting a token $y$ in context $x$, we measure the model's response via susceptibilities $\chi_{xy}$, which are covariances between component-level observables and the perturbation computed over a localized Gibbs posterior via stochastic gradient Langevin dynamics (SGLD). Theoretically, we show that susceptibilities decompose as a sum over \emph{modes} of the data distribution, explaining why tokens that follow their contexts ``for similar reasons'' cluster together in susceptibility space. Empirically, we apply this methodology to Pythia-14M, developing a conductance-based clustering algorithm that identifies 510 interpretable clusters ranging from grammatical patterns to code structure to mathematical notation. Comparing to sparse autoencoders, 50\% of our clusters match SAE features, validating that both methods recover similar structure.
\end{abstract}

\section{Introduction}

A fundamental challenge in science is inferring the internal structure of complex systems. In deep learning this takes the form of interpretability, which seeks to understand internal structure in neural networks and how it underlies generalization. For physical matter a key interpretability technique is \emph{spectroscopy}. This is a system of methodologies for probing systems with external perturbations, measuring their responses, and inferring facts about structure from this data. In this paper we follow \citet{baker2025studyingsmalllanguagemodels} in applying the same conceptual framework to neural network interpretability for language models from the Pythia family \citep{biderman2023pythia}.

Shine light of frequency $\omega$ on a molecule: at most frequencies nothing interesting happens, but at certain frequencies the molecule absorbs or emits strongly. These peaks reveal internal structure: the existence of structural groups, the nature of bonds and the geometry of the molecule. These peaks exist because physical systems have \emph{normal modes}, characteristic patterns of collective vibration each with its own natural frequency $\omega_\alpha$. When a probe matches a mode's natural frequency, energy transfers efficiently. The spectrum of responses maps the modes and thus the structure.

\textbf{What is the spectrum of a language model?} If we take a neural network as our complex material then the natural probes are perturbations to the data distribution -- specifically, upweighting particular tokens $y$ in contexts $x$. The ``response'' is how the model's internal components adjust under this perturbation, measured by susceptibilities $\chi_{xy}$ as introduced in \citet{baker2025studyingsmalllanguagemodels}. And the ``normal modes'' are patterns in the data distribution that explain why certain continuations follow certain contexts. It is a natural hypothesis that internal structure in neural networks (e.g. features and circuits) form over training in response to such patterns, and so mapping these modes is a promising route to understanding internal structure. The spectrum of responses in this case is the set of susceptibility vectors $\chi_{xy}$ for sequences $xy \sim q(x,y)$ sampled from a distribution like the Pile \citep{gao2020pile} and in this paper we study this ``spectrum'' extensively for Pythia-14M (\cref{fig:pythia14m_umap1}).

\textbf{Our main finding is that Pythia-14M has developed specialized responses to hundreds of identifiable patterns in its training data}. We find 510 interpretable clusters in susceptibility space, ranging from linguistic structures (sentence boundaries, prepositions) to dataset-specific regularities (LaTeX math mode, code syntax). These clusters reflect modes in the data distribution and internal structure that has formed in response to those modes. To justify this claim theoretically, we introduce the appropriate analogue of ``normal modes'' for a language model. Following \citet{modes2}, the conditional distribution $q(y|x)$ admits a singular value decomposition into modes indexed by $\alpha, \beta$. Each context-token pair $(x,y)$ has a \emph{propensity profile} $\{s_{\alpha\beta}(xy)\}_{\alpha, \beta}$ measuring how strongly each mode explains this particular continuation. The susceptibility decomposes as:
\begin{equation}\label{eq:chi_decomposition}
\chi_{xy} = \sum_{\alpha,\beta} s_{\alpha\beta}(xy) \chi_{\alpha\beta} - \bar{\chi}
\end{equation}
where $\chi_{\alpha\beta}$ is the model's response to the mode pair $(\alpha,\beta)$ and $\bar{\chi}$ is defined by $\E_{xy}[ \chi_{xy} ] = 0$. As in physical spectroscopy, it is difficult to prepare a perfectly pure probe because we do not know the modes of the material \emph{a priori}. We make do with what we have (i.e. polychromatic light sources, individual token sequences) and so the measured response is a \emph{superposition} like \eqref{eq:chi_decomposition} of responses $\chi_{\alpha\beta}$ to all the modes in the probe signal, weighted by the probe's spectral weights $s_{\alpha\beta}$.

Every token pair is an impure probe, exciting a mixture of modes. But if two pairs $xy$ and $x'y'$ share similar propensity profiles, that is, if $y$ follows $x$ for ``similar reasons'' to why $y'$ follows $x'$, then their susceptibility vectors will be nearby $\chi_{xy} \approx \chi_{x'y'}$. If a set of token sequences $\{ x_i y_i \}_{i=1}^m$ all follow a particularly strong pattern $\alpha$, and the model is sensitive to this pattern $\| \chi_{\alpha \alpha} \| \gg 0$, then \eqref{eq:chi_decomposition} will push the vectors $\chi_{x_i y_i}$ in a similar direction. \textbf{This is our proposed mechanism for cluster formation, and for the relation of clusters to modes: clusters are the spectral signature of patterns.}

Our contributions:

\begin{itemize}
    \item \textbf{We develop a clustering methodology} based on conductance that identifies clusters directly from the susceptibility data without relying on visual inspection, yielding 510 highly interpretable clusters (\cref{section:methodology-clustering}).

    \item \textbf{We provide a theoretical link between clusters and patterns} in the data distribution via the mode decomposition above (\cref{sec:susceptibilities-modes}). Simple clusters form when token pairs share a dominant mode; the cluster's semantic content reflects that mode.

    \item \textbf{We discover networks of linked clusters} corresponding to structured phenomena like HTML markup and code blocks, suggesting that linked modes in the data give rise to interacting structure in the model (\cref{section:networks_linked}).

    \item \textbf{We compare susceptibility clusters to SAE features}, finding that 50\% of clusters have a matched feature in Pythia-70M SAEs, validating that both methods recover similar structure (\cref{section:sae-methodology}).

    \item \textbf{We show that these patterns persist at scale}: measuring conductance of Pythia-14M clusters in larger models (up to 1.4B), we find they remain coherent, indicating the clusters are genuine rather than artifacts of the small model (\cref{section:scaling}).
\end{itemize}

In \citet{baker2025studyingsmalllanguagemodels} the discovery of internal structure in neural networks through analysis of susceptibility data was termed \emph{structural inference}. The results of this paper demonstrate that \textbf{structural inference reveals rich, interpretable structure in Pythia-14M}: the model has internalized responses to hundreds of patterns, from basic syntax to domain-specific conventions, and this structure persists across model scales. The framework connecting susceptibilities to modes provides a principled basis for understanding how patterns in training data give rise to organization in trained models.

\begin{figure}[p]
    \centering
    \begin{tikzpicture}
        \begin{scope}
            \clip[rounded corners=12pt] (0,0) rectangle (\textwidth,0.72\textwidth);
            \node[anchor=south west,inner sep=0] (image) at (0,0) 
                {\includegraphics[width=\textwidth]{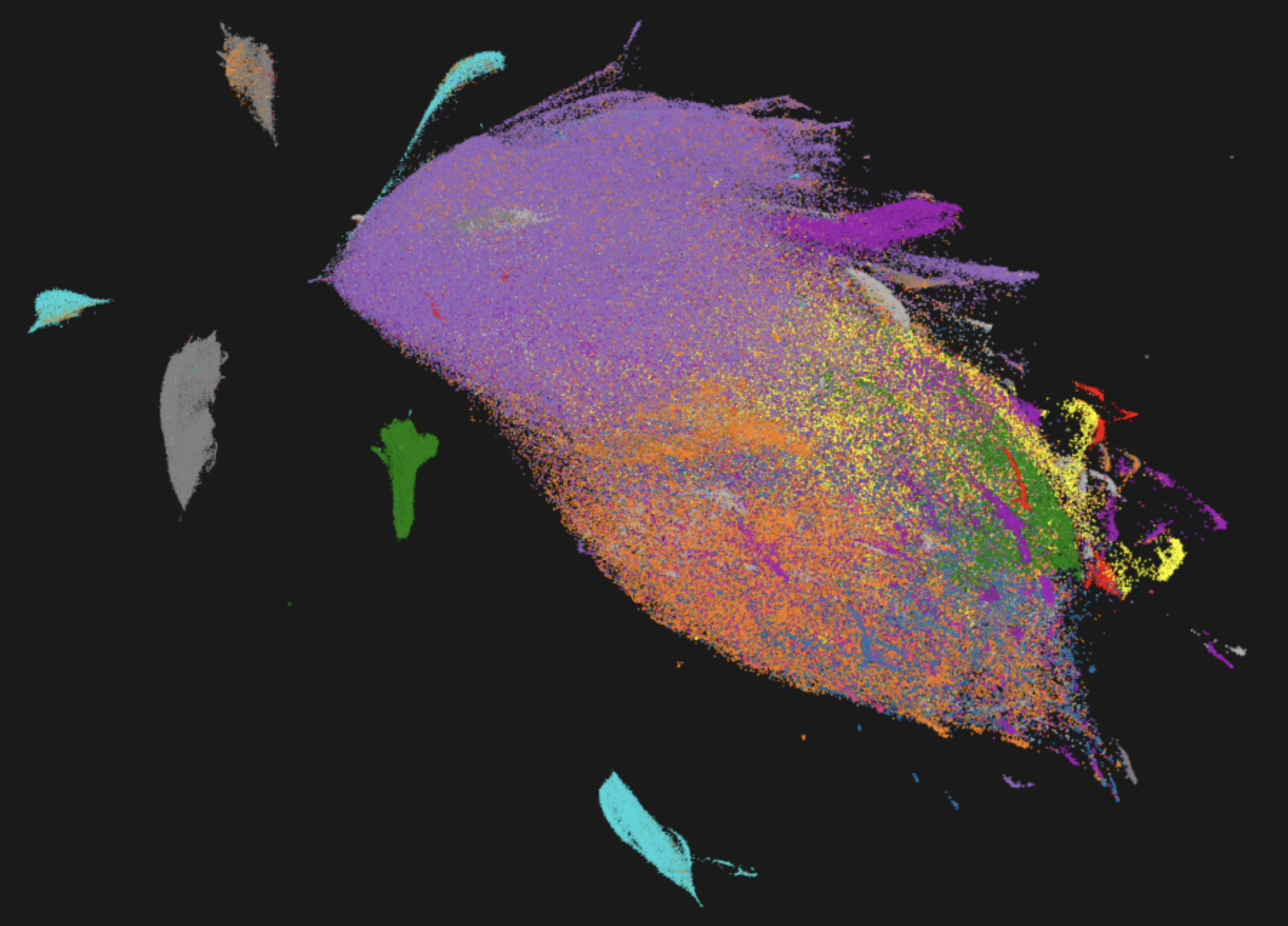}};
        \end{scope}
        
        \draw[white, line width=1pt, rounded corners=12pt] (0,0) rectangle (\textwidth,0.75\textwidth);

        \begin{scope}[shift={(0.5cm,0.5cm)}]
            \clip[rounded corners=6pt] (0,0) rectangle (0.25\textwidth,0.25\textwidth);
            \node[anchor=north west,inner sep=0] at (0,0.25\textwidth) 
                {\includegraphics[width=0.15\textwidth]{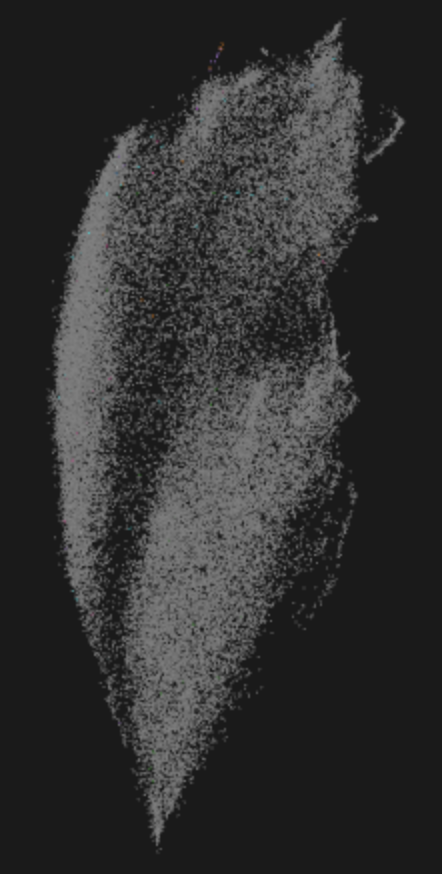}};
            \draw[white, line width=0.5pt, rounded corners=6pt] (0,0) rectangle (0.25\textwidth,0.25\textwidth);
        \end{scope}
        
        \begin{scope}[x={(image.south east)},y={(image.north west)}]
            \node[fill=black, fill opacity=0.7, text=gray, rounded corners=3pt, inner sep=1pt, font=\footnotesize\bfseries] at (0.32,0.38) {\tokenbox{\textbackslash n}};
            \node[fill=black, fill opacity=0.7, text=gray, rounded corners=3pt, inner sep=1pt, font=\footnotesize\bfseries] at (0.2,0.83) {\tokenbox{-}};
            \node[fill=black, fill opacity=0.7, text=gray, rounded corners=3pt, inner sep=1pt, font=\footnotesize\bfseries] at (0.35,0.95) {\tokenbox{~a}};
            \node[fill=black, fill opacity=0.7, text=gray, rounded corners=3pt, inner sep=1pt, font=\footnotesize\bfseries] at (0.14,0.43) {\tokenbox{.},\tokenbox{,}};
            \node[fill=black, fill opacity=0.7, text=gray, rounded corners=3pt, inner sep=1pt, font=\footnotesize\bfseries] at (0.05,0.62) {\tokenbox{~and}};
            \node[fill=black, fill opacity=0.7, text=gray, rounded corners=3pt, inner sep=1pt, font=\footnotesize\bfseries] at (0.47,0.05) {\tokenbox{~the}};
            \node[fill=black, fill opacity=0.7, text=gray, rounded corners=3pt, inner sep=1pt, font=\footnotesize\bfseries] at (0.55,0.95) {\tokenbox{~which}};
            \node[fill=black, fill opacity=0.7, text=gray, rounded corners=3pt, inner sep=1pt, font=\footnotesize\bfseries] at (0.84,0.74) {\tokenbox{~be},\tokenbox{~been}};

            \node[white, font=\bfseries] (A) at (0.7,0.85) {(A)};
            \draw[->,thick,white] (A) -- (0.7,0.77);

            \node[white, font=\bfseries] (B) at (0.65,0.4) {(B)};
            \draw[->,thick,white] (B) -- (0.6,0.53);

            \node[white, font=\bfseries] (C) at (0.95,0.35) {(C)};
            \draw[->,thick,white] (C) -- (0.92,0.4);

            \node[white, font=\bfseries] (D) at (0.85,0.68) {(D)};
            \draw[->,thick,white] (D) -- (0.77,0.65);

            \node[white, font=\bfseries] (E) at (0.65,0.95) {(E)};
            \draw[->,thick,white] (E) -- (0.57,0.87);

            \node[white, font=\bfseries] (F) at (0.91,0.6) {(F)};
            \draw[->,thick,white] (F) -- (0.88,0.55);

            \node[white, font=\bfseries] (G) at (0.93,0.52) {(G)};
            \draw[->,thick,white] (G) -- (0.86,0.53);

            \node[white, font=\bfseries] (H) at (0.95,0.2) {(H)};
            \draw[->,thick,white] (H) -- (0.95,0.28);

            \node[white, font=\bfseries] (L) at (0.24,0.36) {(L)};
            \draw[->,thick,white] (L) -- (0.11,0.36);
            
            \node[white, font=\bfseries] (I) at (0.24,0.32) {(I)};
            \draw[->,thick,white] (I) -- (0.17,0.32);

            \node[white, font=\bfseries] (J) at (0.24,0.16) {(J)};
            \draw[->,thick,white] (J) -- (0.16,0.16);

            \node[white, font=\bfseries] (K) at (0.24,0.2) {(K)};
            \draw[->,thick,white] (K) -- (0.07,0.2);

        \end{scope}
    \end{tikzpicture}
    
\vspace{1em}
\noindent\small
\legendbox{patternWordStart}{\tiny\textcolor{white}{Word start}}
\legendbox{patternWordPart}{\tiny\textcolor{white}{Word part}}
\legendbox{patternWordEnd}{\tiny\textcolor{white}{Word end}}
\legendbox{patternInduction}{\tiny\textcolor{white}{Induction}}
\legendbox{patternSpacing}{\tiny\textcolor{white}{Spacing}}
\legendbox{patternRightDelimiter}{\tiny\textcolor{white}{Delimiter}}
\legendbox{patternFormatting}{\tiny\textcolor{white}{Formatting}}
\legendbox{patternNumeric}{\tiny\textcolor{white}{Numeric}}
\legendbox{patternFunction}{\tiny\textcolor{white}{Function Word}}
\legendbox{patternCapitalized}{\tiny\textcolor{white}{Capitalized}}
\vspace{0.5em}

    \caption{
        \textbf{The spectrum of Pythia-14M.} Representation using UMAP of $780{,}000$ susceptibility vectors $\{ \chi_{x_iy_i} \}_i$ computed for a 14M parameter language model. Shown is one view on a 3D point cloud. Each point represents a token $y$ in context $x$, colored by pattern type (legend above, see \cref{tab:token-patterns}). Marked are some external bodies for token sequences where $y$ is a particular token and $x$ varies (e.g. there is a green body for $y = \tok{\textbackslash n}$). Descriptions of clusters (A)-(L) can be found in Table \ref{tab:umap-clusters}. \emph{Bottom left inset:} zoomed in view of the body of tokens \tok{.},\tok{,}.
    }
    \label{fig:pythia14m_umap1}
\end{figure}

\begin{table}[p]
\centering
\small
\begin{tabular}{@{}cllcc@{}}
\toprule
\textbf{Label} & \textbf{Description} & \textbf{Examples} & \textbf{Single} & \textbf{Cluster}\\
\midrule
(A) & Sentence starters & \tok{.}\tokenboxline{~Finally}, \tok{.}\tokenboxline{~Thus} & No & \clu{180}, \cluq{280}, \cluq{305}\\
(B) & Variable names in mathematics & \tokenboxline{s}\tok{**}\tok{2}, \tok{3}\tok{*}\tokenboxline{t} & No & \clu{51}, \cluq{58}, \cluq{72}\\ 
(C) & Exponent $2$ & \tok{a}\tok{**}\tokenboxline{2}, \tok{b}\tok{**}\tokenboxline{2} & Yes & \clu{148}\\
(D) & Enclitics & \tok{I}\tokenboxline{'m}, \tok{I}\tokenboxline{'ve} & No & \clu{238}\\
(E) & Logical implication & \tokenboxline{~thus}, \tokenboxline{~therefore} & No & \clu{424}\\
(F) & Left brackets in multiple choice & \tokenboxline{(}\tok{a}\tok{)}, \tokenboxline{(}\tok{b}\tok{)} & Yes & \clu{241}, \cluq{316}\\
(G) & Left brackets in functions & \tok{p}\tokenboxline{(}\tok{n}\tok{)}, \tok{w}\tokenboxline{(}\tok{y}\tok{)} & Yes & \clu{81}, \cluq{95}, \cluq{349}\\
(H) & True and False & \tokenboxline{True}, \tokenboxline{False} & No & \clu{250}\\
(I) & Pointed abbreviations & \tok{U}\tok{.}\tok{S}\tokenboxline{.} & Yes & \clu{308}, \cluq{474}\\
(J) & Comma in list of email addresses & \tok{ECT}\tok{@}\tok{ECT}\tokenboxline{,} & Yes & \clu{154}\\
(K) & End of sentence in general language & \tok{~as}\tok{~Trustee}\tokenboxline{.} & Yes & \textbf{-}\\
(L) & End of sentence in math & \tok{0}\tok{=}\tok{g}\tok{*}\tok{d}\tokenboxline{.} & Yes & \clu{94}\\
\bottomrule
\end{tabular}
\vspace{1em}
\caption{\label{tab:umap-clusters}
\textbf{A selection of clusters} from  \cref{fig:pythia14m_umap1}. The clusters span natural language (A, D, E), mathematics (B, C, H, G, L), and code-like syntax (J). In the fourth column we show whether the cluster has a single shared $y$ token. For a complete list of clusters see \cref{section:cluster_table}.}
\end{table}

\section{Background}\label{section:background}

\subsection{Tokens and patterns} \label{section:setup-tokens-and-patterns}

We denote by $\Sigma$ the set of tokens. All Pythia models use the GPT-NeoX tokenizer \citep{biderman2023pythia}, a byte-pair encoding (BPE) tokenizer with $|\Sigma| = 50{,}304$ tokens. We often consider a token $y \in \Sigma$ in context $x \in \Sigma^k$. When presenting these we typically only give $y$ with at most some small segment of $x$, e.g. \tokenbox{~wa}\tokenbox{vel}\tokenboxline{ength} where only the last two tokens of the context $x$ are given and we indicate the token to be predicted with a solid black outline.

Following \citet{baker2025studyingsmalllanguagemodels,wang2025lang2pt5} we present susceptibility UMAPs (such as \cref{fig:pythia14m_umap1}) by coloring tokens according to \emph{token pattern categories} that were found to be meaningful for small language models. We extend the eight categories from \citet{wang2025lang2pt5} with two additional categories -- \emph{Function} (function words like \tok{~the}, \tok{~and}) and \emph{Capitalized} (capitalized words and acronyms) -- for a total of ten. In the same way that we organize visible light into broad categories (red, blue, etc.) within which there is much finer variation, so too here we find that clusters often tend to be monochromatic in this coloring scheme: the finer distinctions made by the model often subdivide these token categories, although not always. Full definitions are given in \cref{appendix:token-definitions}.

\subsection{Susceptibility Space}

The central tool is the \emph{susceptibility vector} $\chi_{xy}$ associated to a token sequence $x \in \Sigma^k$ and next token $y \in \Sigma$. Here $\Sigma$ denotes the set of tokens, $q(x,y)$ the true data distribution over contexts $x \in \Sigma^k$ and continuations $y \in \Sigma$, and $p(y|x,w)$ the model's predictive distribution with parameters $w$. 

Given a component $C$ of the network (such as an attention head) we define the \emph{per-token susceptibility}
\begin{equation}\label{eq:per-token-susceptibility}
\chi^C_{xy} \;=\; -\mathrm{Cov}\Big[\phi_C(w),\; \ell_{xy}(w) - L(w)\Big]
\end{equation}
where $\ell_{xy}(w) = -\log p(y|x,w)$ is the per-token loss, $L(w) = \mathbb{E}_{xy}[\ell_{xy}(w)]$ is the population loss, $\phi_C(w)$ is an observable localised on the component $C$, and the covariance is taken with respect to a tempered posterior distribution over weights; see \citet{baker2025studyingsmalllanguagemodels} for details. Intuitively, $\chi^C_{xy}$ measures the first-order response of the component $C$ to upweighting the token sequence $xy$ in the data distribution: it captures how fluctuations in $C$'s behavior covary with fluctuations in the prediction of $y$ given $x$ specifically. Aggregating over components gives the susceptibility vector
\begin{gather}
\chi_{xy} = \big( \chi^{C_1}_{xy}, \ldots, \chi^{C_H}_{xy} \big) \in \mathbb{R}^H \label{eq:intro_suscep_vector}
\end{gather}
which provides an overall response profile of the model to ``fluctuations'' in the data distribution. The claim in \citet{baker2025studyingsmalllanguagemodels, wang2025lang2pt5} is that this quantity is sensitive to \emph{how} the model computes the prediction of $y$ in context $x$.

We can use susceptibilities to relate \emph{structure in the data} with \emph{structure in the model}: if $\{ x_i y_i \}_{i=1}^n$ is a set of token sequences, then the singular value decomposition of the $n \times H$ response matrix $X$ with rows $\chi_{x_iy_i}$ couples principal components (linear combinations of token sequences) with parts of the model. The main result of \citet{baker2025studyingsmalllanguagemodels} is that in a 3M parameter attention-only transformer this couples induction patterns (structure in the data) to the induction circuit (structure in the model). In \citet{wang2025lang2pt5} this was used to study the development of the induction circuit (and other structure) over training. In short, it has been established in these works that susceptibilities can be used to study internal structure in very small language models.

\paragraph{Visualization.} We visualize the high-dimensional susceptibility vectors $\{ \chi_{x_iy_i} \} \subseteq \mathbb{R}^H$ using UMAP, a standard dimensionality reduction technique. In this paper, the role of UMAP is purely illustrative: no claims are based solely on examining the UMAP. It serves as a visual device for communicating quantitative insights that are derived and supported by other means (principally, the conductance-based clustering of \cref{section:methodology-clustering}). While it is reasonable to question the faithfulness of UMAP embeddings, this concern has no bearing on our main results.

\subsection{The spectrum of a language model}\label{sec:susceptibilities-modes}

Informally, the reason that $y \in \Sigma$ follows $x \in \Sigma^k$ in natural language can be attributed to various \emph{patterns} or \emph{modes} in the data distribution. These modes can be defined formally via singular value decomposition of the conditional distribution $q(y|x)$, viewed as a matrix indexed by contexts and continuations. Such spectral decompositions have a long history in NLP and learning theory \citep{anandkumar2014tensor, hsu2012spectral}; here we follow the formulation of \citet{modes2}.

In real data the mode structure is complex, but simplified distributions illustrate the concept. \citet{modes2} analyze theoretical examples such as ``absolute bigrams'' (where a context uniquely determines its continuation) and compute empirical modes from the Pile, finding patterns involving punctuation, infinitive constructions, and newlines. In \cref{appendix:toy-model} we analyze a distribution where sentences ending in fullstops can continue with either a capitalized word or a newline. The SVD yields two modes: a uniform mode (weighted average of continuations) and a contrast mode that distinguishes capitalizing from newlines. Contexts that consistently prefer one continuation over the other load heavily on the contrast mode, while ambiguous contexts do not.

To explain in more detail, consider the conditional distribution $q(y|x)$ as defining a linear map from contexts to distributions over continuations. Formally, let $\mathscr{H} = L^2(\Sigma^k, q; \mathbb{R}^{\Sigma})$ be the space of functions from contexts to vectors over tokens, equipped with $\langle f, g \rangle_{\mathscr{H}} = \int \langle f(x), g(x) \rangle_{\mathbb{R}^\Sigma} q(x) \, dx$. The conditional distribution defines an element $\mathcal{C} \in \mathscr{H}$ by $\mathcal{C}(x) = \sum_{y} q(y|x) \, y$. The model parameter $w$ also defines an element $\Phi(w)\in \mathscr{H}$, given by $\Phi(w)(x) := \sum_y \ell_{xy}(w) y$.

Applying SVD to $\mathcal{C}$ yields singular values $s_\alpha$ with right singular vectors $v_\alpha$ (context patterns) and left singular vectors $u_\alpha$ (continuation patterns). These define basis elements $e_{\alpha\beta}$ for $\mathscr{H}$ where $e_{\alpha\beta}(x)(y)$ is the loading of $x$ on the right singular vector $v_\alpha$ times the loading of $y$ on the left singular vector $u_\beta$. We call this the \emph{mode basis}. For more details see \cref{appendix:modes}.

The per-token susceptibility \eqref{eq:per-token-susceptibility} corresponds to a perturbation of the data distribution that upweights the single pair $(x,y)$: we show in \cref{appendix:per-token-perturbation} that $\chi_{xy}$ is proportional to the susceptibility for the perturbation $q' = (1-\epsilon)q + \epsilon \delta_{(x,y)}$. Expanding this point-mass perturbation in the mode basis gives coefficients $s_{\alpha\beta}(xy) = e_{\alpha\beta}(x)(y)$, which we call the \emph{propensities}. These measure the extent to which the mode pair $(\alpha,\beta)$ is responsible for $y$ following $x$ in the true distribution, and depend only on the data distribution $q$, not on the model.

\paragraph{Transform from tokens to modes} Each mode pair $(\alpha,\beta)$ induces a characteristic response
\begin{equation}\label{eq:pure-susceptibility}
\chi^C_{\alpha\beta} \;=\; -\mathrm{Cov}\Big[\phi_C(w),\; \Phi_{\alpha\beta}(w)\Big]
\end{equation}
where $\Phi_{\alpha\beta}(w) = \langle \Phi(w), e_{\alpha\beta} \rangle_{\mathscr{H}}$ projects the model's loss profile onto the basis element. The mode susceptibility depends on the model but not on the particular token sequence. Aggregating over components defines the vector $\chi_{\alpha\beta}$. One can show that (Lemma \ref{lemma:formula_chixy_chialphabeta})
\begin{equation}\label{eq:eta-decomposition}
\chi_{xy} = \sum_{\alpha, \beta} s_{\alpha\beta}(xy) \chi_{\alpha\beta} - \bar{\chi}
\end{equation}
where $\bar{\chi}$ is uniquely determined since $\E_q[\chi_{xy}] = 0$.

The diagonal terms $\chi_{\alpha\alpha}$ are particularly important: they measure the model's response to mode $\alpha$. Early in training, before the model has developed internal structure sensitive to a given pattern, we expect $\| \chi_{\alpha\alpha} \| \approx 0$. As structure specialized for that mode emerges, the response grows: $\| \chi_{\alpha\alpha} \| \gg 0$. Empirical evidence for this picture can be found in the susceptibility-based analysis of the emergence of the induction circuit in \citet{baker2025studyingsmalllanguagemodels}. \citet{wang2025lang2pt5} shows that the \emph{per-pattern susceptibilities} (for e.g. induction patterns or delimiters) are small at the beginning of training and then grow quickly as the model develops; see \cref{appendix:modes} for the relation to $\chi_{\alpha\alpha}$.

\paragraph{Similar sequences have similar susceptibilities} The mode basis derived from $q$ provides a nontrivial sense in which $xy$ can be \emph{similar to but distinct from} $x'y'$. Let us define
\[
s(xy) := \big( s_{\alpha\beta}(xy) \big)_{\alpha,\beta}
\]
in the space $P$ of propensity vectors. This map is injective: if $s(xy) = s(x'y')$ then $x = x', y = y'$. If the distinction between $xy$ and $x'y'$ is subtle we might have $s_{\alpha\beta}(xy) \approx s_{\alpha\beta}(x'y')$ for all modes $\alpha,\beta < \gamma$ below some cutoff (arranging modes by singular value) with the distinction lying in some mode $\ge \gamma$. Thus $s(xy) \approx s(x'y')$ if $y$ follows $x$ for reasons similar to why $y'$ follows $x'$. 

If we let $\iota: P \longrightarrow \mathbb{R}^H$ be the linear transformation sending the basis element corresponding to $\alpha,\beta$ to $\chi_{\alpha\beta}$ then \cref{eq:eta-decomposition} says $\chi_{xy} = \iota( s(xy) ) - \bar{\chi}$. Thus if $y$ follows $x$ for reasons similar to why $y'$ follows $x'$ in the data distribution, we will have $s(xy) \approx s(x'y')$ (note that this has nothing to do with the model) and thus $\chi_{xy} \approx \chi_{x'y'}$. In the other direction, if the distinction between $xy$ and $x'y'$ lies only in some mode $\gamma$ the model doesn't ``understand'' (so $\chi_{\gamma\gamma} \approx 0$) then we can have $\chi_{xy} \approx \chi_{x'y'}$. The model \emph{separates token sequences in susceptibility space according to the modes it understands}.

\paragraph{Clusters as spectral lines.} This analysis clarifies the connection between clusters and the spectroscopy framing from the introduction. In physical spectroscopy, a \emph{spectral line} is a strong, consistent response observed across a family of probes -- for instance, strong absorption or emission. The natural interpretation is that these probes share a significant spectral component at that frequency, and the material has a normal mode that resonates there. Analogously, when we observe a cluster of token sequences $\{x_iy_i\}$ the decomposition \eqref{eq:eta-decomposition} provides a candidate explanation: these sequences share strong propensities $s_{\alpha\alpha}(x_iy_i)$ on some mode $\alpha$, and the model has developed a strong response $\|\chi_{\alpha\alpha}\| \gg 0$ to that mode. The cluster is the observable signature of internal structure specialized for pattern $\alpha$.

In conclusion: the $\chi_{\alpha\beta}$ measure the fundamental response of the model to mode pairs in the data distribution. These quantities are not directly observable since computing the modes would require the full conditional distribution $q(y|x)$, which is unknown. Instead, we observe the responses $\chi_{xy}$ and hypothesize that clusters correspond to dominant modes. The theory provides a principled explanation for why clustering should reveal meaningful structure in a neural network.

\section{Methodology}\label{section:methodology}

\subsection{Data and susceptibilities}\label{section:methodology-data}

We sample 60,000 token sequences from each of 13 Pile subsets \citep{gao2020pile}, yielding 780,000 context-continuation pairs $(x,y)$ for analysis. The subsets span diverse domains including code (\pilesub{github-code}), scientific text (\pilesub{arxiv}, \pilesub{pubmed\_central}), legal documents (\pilesub{freelaw}), and general web text (\pilesub{pile-cc}); the full list is given in \cref{appendix:umap}.

For each token sequence, we compute susceptibility vectors following the methodology of \citet{baker2025studyingsmalllanguagemodels}. Susceptibilities are estimated via preconditioned Stochastic Gradient Langevin Dynamics (pSGLD), which samples from a localized posterior centered at the trained model checkpoint. For each component $C$ (an attention head), the per-token susceptibility $\chi^C_{xy}$ is computed as a covariance between the component's observable and the per-token loss, estimated from posterior samples. Aggregating across all $H$ components yields the susceptibility vector $\chi_{xy} \in \mathbb{R}^H$. Full hyperparameter settings are given in \cref{appendix:susceptibilities-hparams}.

\subsection{Clustering}\label{section:methodology-clustering}

To identify interpretable clusters in the high-dimensional susceptibility data, we adapted a PageRank-based local clustering algorithm from \citet{andersen2009local} to an iterated setting.

\paragraph{Preprocessing.} The susceptibility matrix is standardized column-wise (zero mean, unit variance) and then row-wise (zero mean). The row standardization removes the uniform mode that would otherwise dominate Euclidean distances -- this is equivalent to projecting out the first principal component, which captures variation common to all token sequences rather than distinctions between them.

\paragraph{Graph construction.} We represent the preprocessed susceptibility data as a symmetrized $k$-nearest-neighbor graph, where each node corresponds to a token sequence $xy$ and edges connect nearby points in $\mathbb{R}^H$. Edge weights use a self-tuning radial basis function that adapts to local density variations, ensuring meaningful connectivity.

\paragraph{Local clustering via conductance.} The \emph{conductance} of a vertex subset $S$ measures how well-separated it is from the rest of the graph. Roughly
\[
\text{Cond}(S) = \frac{\text{(total degree of edges leaving } S\text{)}}{\text{(total degree of } S\text{)}}
\]
Low-conductance sets have many internal connections but few external ones: precisely our intuition for a cluster. Following \citet{andersen2009local}, we identify such sets using personalized PageRank: given a seed node, we rank all nodes by their PageRank score and search for a minimum-conductance prefix of this ranking.

\paragraph{Iterative discovery.} A single application of local clustering on a random seed node tends to find the dense main body of the UMAP rather than the peripheral structures we seek. We therefore apply the algorithm iteratively: after each clustering attempt, we remove the discovered nodes from consideration. Early iterations typically encounter portions of the main body (and reject as having too high a conductance). As these are progressively excluded, the algorithm identifies the genuine clusters at the boundary. Full algorithmic details and parameter settings are given in \cref{appendix:clustering}.

\section{Results}\label{section:results}

The central question for susceptibility analysis is whether the geometry of susceptibility space reflects the \emph{computational role} of tokens or merely their surface identity. If susceptibilities capture how the model computes predictions, then tokens with the same identity but different functions should be separated in susceptibility space, while tokens with different identities but the same function should be grouped together. We first test these predictions directly, then survey the hundreds of patterns we discover in Pythia-14M. Each cluster we identify is a ``spectral line'' in the sense of \cref{sec:susceptibilities-modes}: the observable signature of a mode that the model has learned to respond to.

\subsection{Susceptibilities reflect structure}\label{section:susceptibilities-structure}

We test whether susceptibility geometry reflects computational role by examining two complementary phenomena: \emph{context sensitivity} (same $y$ token, different clusters) and \emph{functional grouping} (different tokens, same cluster).

\begin{table}[tp]
\centering
\begin{tabular}{llcl}
\toprule
\textbf{Token} & \textbf{Function/Context} & \textbf{Cluster} & \textbf{Description} \\
\midrule
\multirow{2}{*}{\tok{*}} 
 & Math Operator & \clu{15}, \cluq{57} & Multiplication in algebraic expressions \\
 & Formatting & \clu{504} & Markdown emphasis marker \\
\midrule
\multirow{2}{*}{\tok{.}} 
 & Decimal Point & \clu{127} & Occurring inside numeric sequences \\
 & Sentence End & \clu{198} & Terminating a statement (after variables) \\
\midrule
\multirow{3}{*}{\tok{/}} 
 & Division & \clu{224} & Fraction bar in math problems \\
 & Date Separator & \clu{3} & Separating YYYY/MM/DD \\
 & Path/URL & \clu{322} & Separating directories or domains \\
\midrule
\multirow{2}{*}{\tok{(}} 
 & Function Call & \clu{81} & Opening arguments: $f(x)$ \\
 & List Enumeration & \clu{241} & Opening options: $(a)$ \\
\midrule
\multirow{2}{*}{\tok{\$}} 
 & Math Start & \clu{49} & Transitioning text $\to$ math mode \\
 & Math End & \clu{68} & Transitioning math $\to$ text mode \\
\midrule
\multirow{2}{*}{\tok{00}} 
 & Time & \clu{389} & Minutes field in timestamps ($12:00$) \\
 & Hexadecimal & \clu{65} & Byte value after $0x$ prefix \\
\bottomrule
\end{tabular}
\vspace{1em}
\caption{\textbf{Context sensitivity}. Identical $y$ tokens are separated into distinct clusters based on their functional role. For example, the model maps the token \tok{*} to disjoint regions of susceptibility space depending on whether it is used as a mathematical operator (multiplication) or a formatting marker (markdown italics), indicating distinct computational mechanisms.}
\label{tab:context_split}
\end{table}

\paragraph{Context sensitivity.}
As shown in \cref{tab:context_split}, the model places the same $y$ token into completely disjoint clusters depending on its syntactic role. When \tok{*} appears as a multiplication operator (e.g. in $3*4$), it falls into clusters \clu{15} and \clu{57}, which are dominated by mathematical contexts. However, when \tok{*} functions as a markdown formatting marker (e.g. \emph{*italics*}), it appears in the unrelated cluster \clu{504}.

Similarly, the model distinguishes the opening parenthesis \tok{(} when used in a function call ($f(x)$, \clu{81}) versus a list enumeration ($(a)$, \clu{241}). This suggests that these two uses of \tok{(} involve different computational mechanisms. Simpler examples of this kind of differentiation of susceptibility vectors by $y$ token sense were already noticed in a 3M parameter model by \citet[Appendix E.4]{baker2025studyingsmalllanguagemodels}.

\begin{table}[tp]
\centering
\begin{tabular}{lcl}
\toprule
\textbf{Grouped Tokens} & \textbf{Cluster} & \textbf{Context Trigger} \\
\midrule
\tok{h}, \tok{u}, \tok{y}, \tok{o}, \tok{w} & \clu{58} & Following multiplication \tok{*} \\
\midrule
\tok{~will}, \tok{~must}, \tok{~should}, \tok{~would} & \clu{63} & Following a subject noun/pronoun \\
\midrule
\tok{~second}, \tok{~third}, \tok{~remainder} & \clu{223} & Following \tok{~the} in math problems \\
\midrule
\tok{~to}, \tok{~on}, \tok{~by}, \tok{~in} & \clu{11} & Following passive verbs \\
\midrule
\tok{4}, \tok{5}, \tok{6}, \tok{7} & \clu{143} & Following exponent operator \tok{**} \\
\midrule
\tok{~In}, \tok{~It}, \tok{~With}, \tok{~Since} & \clu{509} & Capitalized words at line start \\
\bottomrule
\end{tabular}
\vspace{1em}
\caption{\textbf{Abstraction over token identity.} Distinct $y$ tokens are grouped into single clusters when they share a functional role. For instance, the model treats disparate variable names (\tok{h}, \tok{u}, \tok{y}) as computationally equivalent in the context of multiplication (\clu{58}). This indicates that the susceptibility vector captures the abstract category (e.g., ``variable'' or ``exponent'') rather than just the token.}
\label{tab:functional_grouping}
\end{table}

\paragraph{Functional Grouping.}
Conversely, \cref{tab:functional_grouping} demonstrates that the model abstracts away specific token identities when they share a functional role. For example, the tokens \tok{h}, \tok{u}, \tok{y}, \tok{o}, and \tok{w} are grouped into a single cluster (\clu{58}) specifically when they appear as variables following a multiplication operator.

\begin{figure}[tp]
    \centering
    \begin{tikzpicture}
        \begin{scope}
            \clip[rounded corners=8pt] (0,0) rectangle (0.48\textwidth,0.2\textwidth);
            \node[anchor=south west,inner sep=0] (image1) at (0,0) 
                {\includegraphics[width=0.48\textwidth]{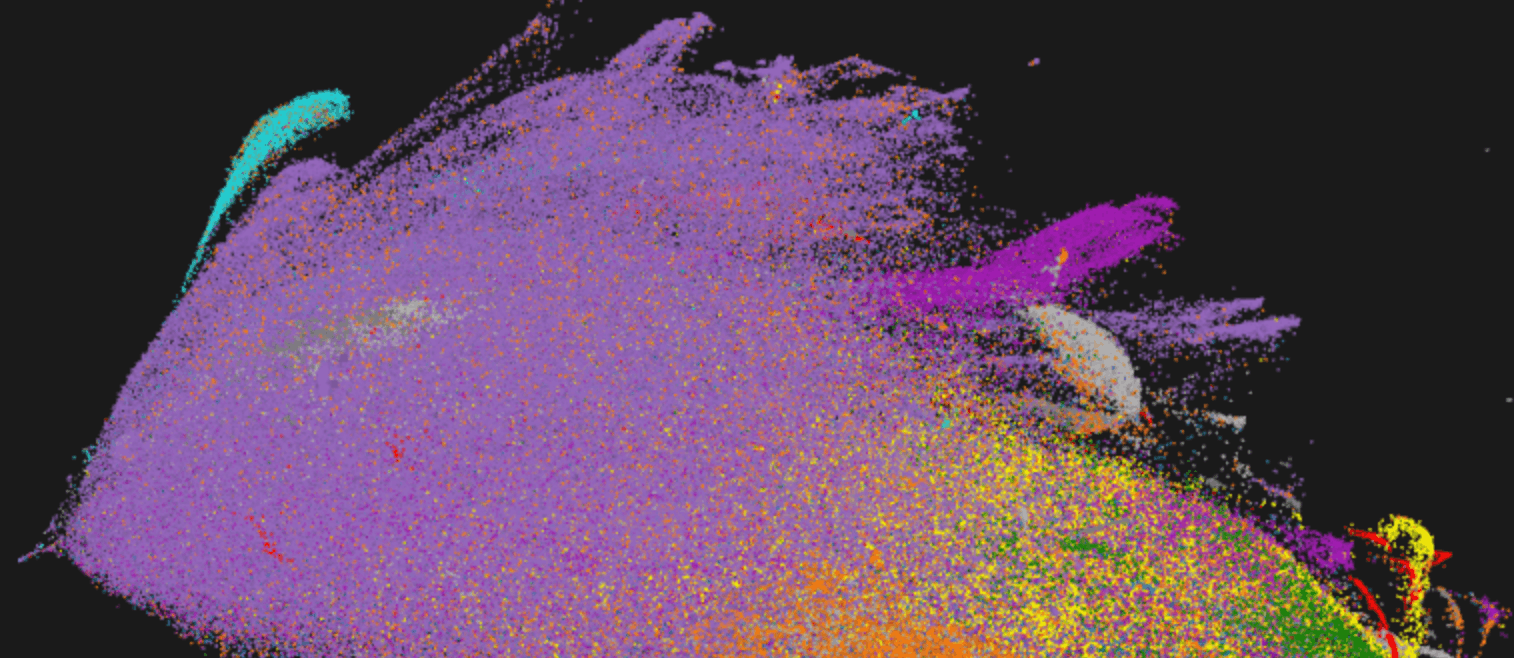}};
        \end{scope}
        \draw[white, line width=0.5pt, rounded corners=8pt] (0,0) rectangle (0.48\textwidth,0.2\textwidth);
        
        \begin{scope}[shift={(0.52\textwidth,0)}]
            \clip[rounded corners=8pt] (0,0) rectangle (0.48\textwidth,0.2\textwidth);
            \node[anchor=south west,inner sep=0] (image2) at (0,0) 
                {\includegraphics[width=0.48\textwidth]{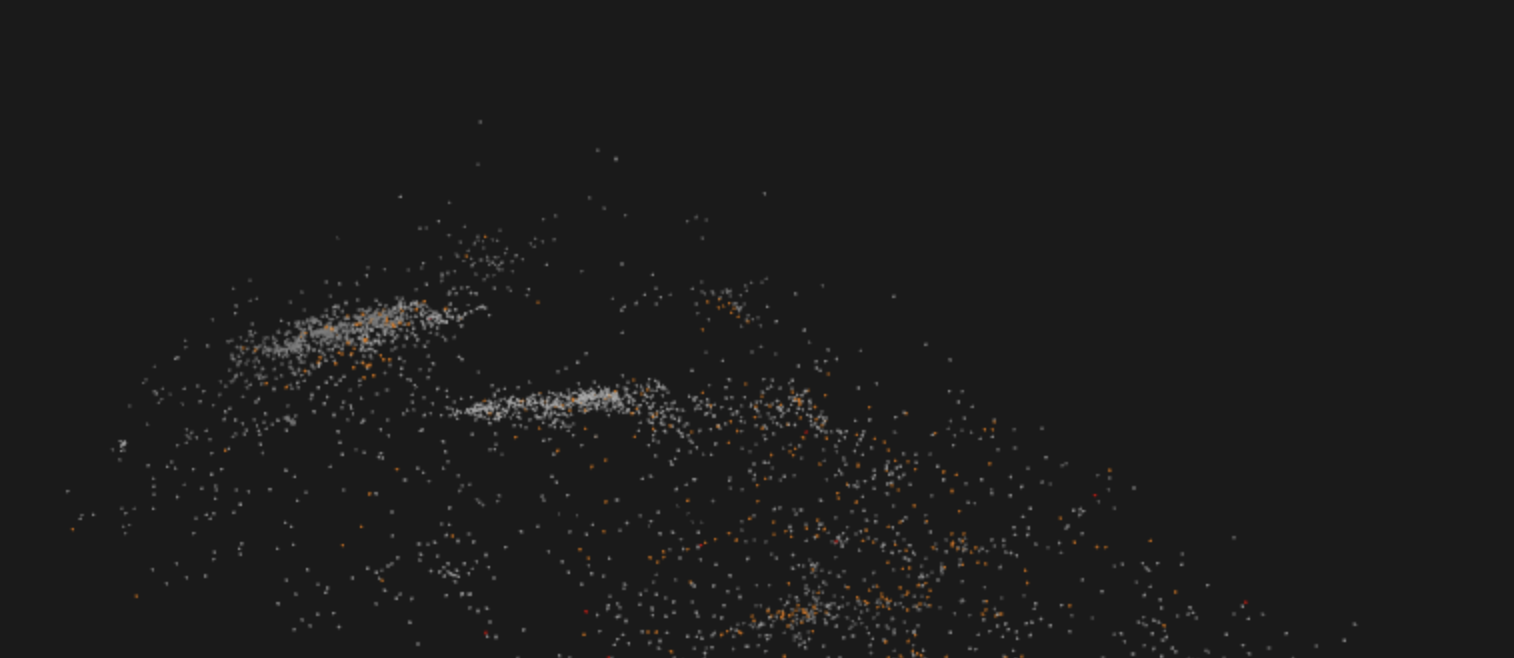}};
        \end{scope}
        \draw[white, line width=0.5pt, rounded corners=8pt] (0.52\textwidth,0) rectangle (\textwidth,0.2\textwidth);
        
        \begin{scope}[shift={(0.52\textwidth,0)}]
            \pgfmathsetmacro{\imgw}{0.48*\textwidth}
            \pgfmathsetmacro{\imgh}{0.2*\textwidth}
            
            \node[white, font=\bfseries\small] (A) at (0.2*0.48\textwidth,0.8*0.2\textwidth) {(A) Closing};
            \draw[->,thick,white] (A) -- (0.22*0.48\textwidth,0.58*0.2\textwidth);
            
            \node[white, font=\bfseries\small] (B) at (0.56*0.48\textwidth,0.65*0.2\textwidth) {(B) Opening};
            \draw[->,thick,white] (B) -- (0.36*0.48\textwidth,0.435*0.2\textwidth);
        \end{scope}
        
        \node[below=0.1cm, font=\small] at (0.24\textwidth,0) {All tokens};
        \node[below=0.1cm, font=\small] at (0.76\textwidth,0) {Double quote tokens};
        
    \end{tikzpicture}
    
\vspace{0.2em}

    \caption{
        \textbf{Double quote token clusters.} 
        \emph{Left:} Full low-dimensional representation of susceptibility vectors computed from Pythia-14M, with points colored by pattern type. 
        \emph{Right:} Filtered view showing only tokens which, when decoded, contain double quotes (e.g. \tok{"}, \tok{."}). 
        (A) Tokens containing double quotes as closing a quotation (\clu{69}, \clu{182}, \clu{372}, \clu{469}).
        (B) Tokens containing double quotes as opening a quotation (\clu{168}, \clu{174}). 
    }
    \label{fig:apostrophe_clusters}
\end{figure}

A demonstration of both phenomena can be found in \cref{fig:apostrophe_clusters}. On the one hand, \tok{"} can both open and close quotations, and we see that these uses are separated into distinct clusters (or rather, groupings of clusters). On the other hand, the closing quotation cluster \clu{182} contains multiple distinct forms of the closing quotation mark \tok{"}, \tok{,"}, \tok{'}, \tok{."}. The region marked (A) in the UMAP also contains \clu{350} which consists of \tok{”} and variations (\tok{.”}, \tok{”.} and so on). Interestingly the region (A) is mostly closing double quotes in normal text: \clu{385}, which is mostly \tok{"} tokens \emph{in code} and HTML, is located elsewhere in the UMAP.

These examples demonstrate that susceptibility geometry reflects internal structure: the model separates tokens by function, not just identity, and groups functionally equivalent tokens together.

\subsection{Patterns in Pythia-14M}\label{section:patterns-catalog}

\definecolor{tokyellow}{RGB}{255,242,170}


\begin{figure*}[p]
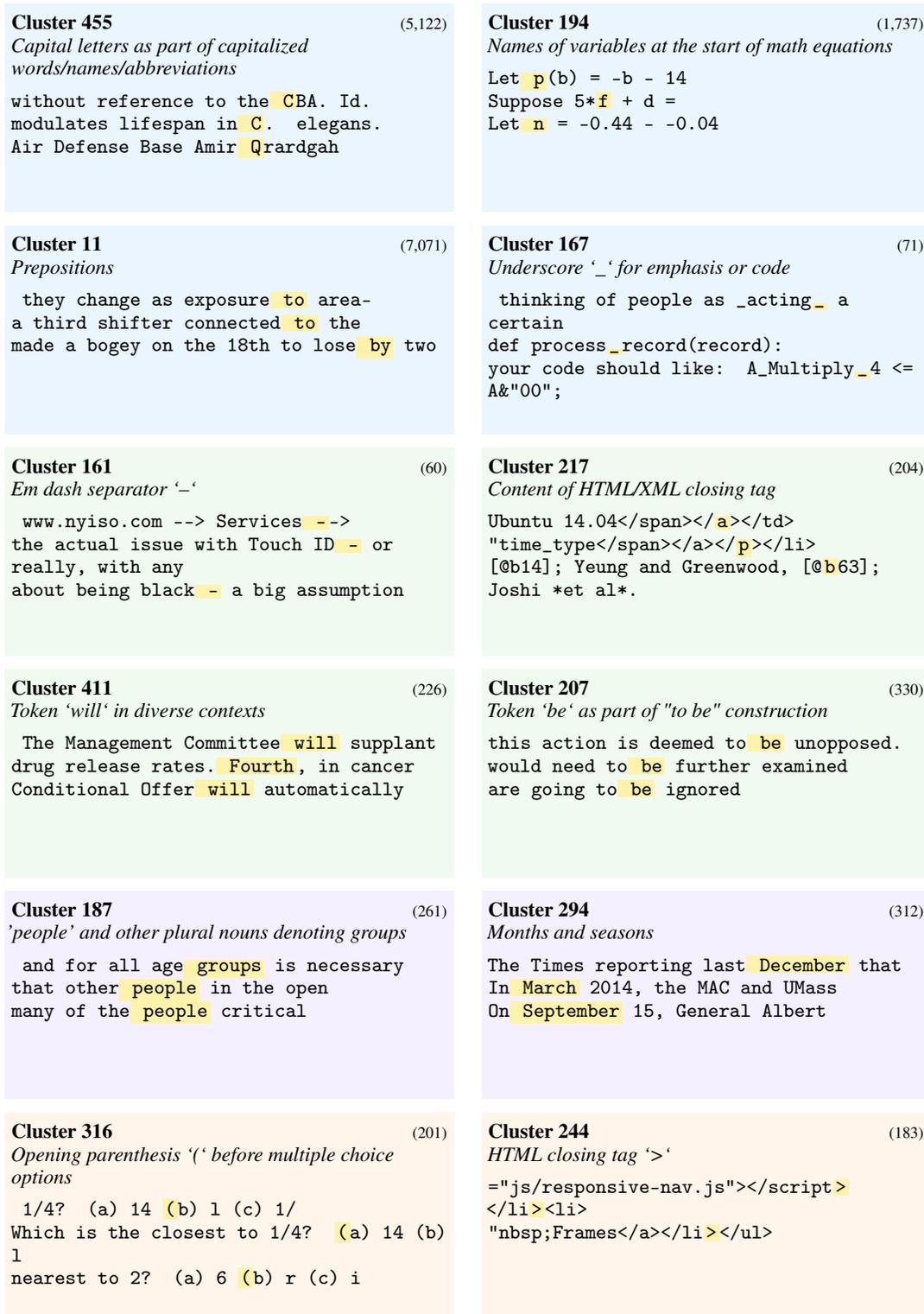

\centering
\setlength{\tabcolsep}{6pt}
\renewcommand{\arraystretch}{1.15}

\begin{tabular}{cc}

\clustercard
{Cluster 455}
{5,122}
{Capital letters as part of capitalized words/names/abbreviations}
{without reference to the\finaltok{ C}BA.  Id.}
{modulates lifespan in\finaltok{ C}. elegans.}
{Air Defense Base Amir\finaltok{ Q}rardgah}
{clusterBlue}
&
\clustercard
{Cluster 194}
{1,737}
{Names of variables at the start of math equations}
{Let\finaltok{ p}(b) = -b - 14}
{Suppose 5*\finaltok{f} + d = }
{Let\finaltok{ n} = -0.44 - -0.04}
{clusterBlue}
\\[8pt]
\clustercard
{Cluster 11}
{7,071}
{Prepositions}
{ they change as exposure\finaltok{ to} area- }
{ a third shifter connected\finaltok{ to} the}
{made a bogey on the 18th to lose\finaltok{ by} two  }
{clusterBlue}
&

\clustercard
{Cluster 167}
{71}
{Underscore `\_` for emphasis or code }
{ thinking of people as \_acting\finaltok{\_} a certain}
{def process\finaltok{\_}record(record):}
{your code should like: A\_Multiply\finaltok{\_}4 <= A\&"00";
}
{clusterBlue}
\\[8pt]
\clustercard
{Cluster 161}
{60}
{Em dash separator `--`}
{ www.nyiso.com ---> Services\finaltok{ --}->}
{ the actual issue with Touch ID\finaltok{ --} or really, with any}
{about being black\finaltok{ --} a
big assumption}
{clusterGreen}
&
\clustercard
{Cluster 217}
{204}
{Content of HTML/XML closing tag}
{Ubuntu 14.04</span></\finaltok{a}></td>}
{"time\_type</span></a></\finaltok{p}></li>}
{ [@b14]; Yeung and Greenwood, [@\finaltok{b}63]; Joshi *et al*.}
{clusterGreen}
\\[8pt]

\clustercard
{Cluster 411}
{226}
{Token `will` in diverse contexts}
{ The Management Committee\finaltok{ will} supplant }
{drug release rates.\finaltok{ Fourth}, in cancer}
{Conditional Offer\finaltok{ will} automatically}
{clusterGreen}
&
\clustercard
{Cluster 207}
{330}
{ Token `be` as part of "to be" construction}
{this action is deemed to\finaltok{ be}
unopposed.}
{  
would need to\finaltok{ be} further examined }
{ are going to\finaltok{ be} ignored }
{clusterGreen}
\\[8pt]
\clustercard
{Cluster 187}
{261}
{'people' and other plural nouns denoting groups}
{ and for all age\finaltok{ groups} is necessary}
{that other\finaltok{ people} in the open}
{ many of the\finaltok{ people} critical}
{clusterPurple}
&

\clustercard
{Cluster 294}
{312}
{Months and seasons}
{The
Times reporting last\finaltok{ December} that }
{ In\finaltok{ March} 2014, the MAC and UMass}
{On\finaltok{ September} 15, General Albert}
{clusterPurple}
\\[8pt]
\clustercard
{Cluster 316}
{201}
{Opening parenthesis `(` before multiple choice options}
{ 1/4?  (a) 14  \finaltok{(}b) l  (c) 1/}
{ Which is the closest to 1/4?  \finaltok{(}a) 14  (b) l  }
{ nearest to 2?  (a) 6  \finaltok{(}b) r  (c) i
}
{clusterOrange}
&
\clustercard
{Cluster 244}
{183}
{HTML closing tag `>`}
{="js/responsive-nav.js"></script\finaltok{>}}
{</li\finaltok{>}<li>}
{"nbsp;Frames</a></li\finaltok{>}</ul>}
{clusterOrange}
\\

\end{tabular}

\caption{
\textbf{Examples of clusters}. For each, the final token is highlighted in a selection of contexts. These clusters were selected for either having high entropy in final or penultimate tokens (blue), completely at random (green), as examples of semantic/meaning based clusters (purple), or as examples of syntactic/grammatical clusters (orange). The size of the cluster is shown in the top right.
}
\label{fig:cluster_cards}
\end{figure*}

Having established that susceptibilities reflect internal structure, we now survey the patterns Pythia-14M has learned to respond to. \cref{fig:cluster_cards} shows example token-context pairs from a selection of clusters, illustrating the range of patterns: from universal linguistic structures (prepositions, sentence boundaries) to dataset-specific conventions (LaTeX math mode, HTML tags, Python syntax). Clusters also vary in their level of abstraction, from low-level syntactic patterns (a specific token in a specific position) to higher-level patterns that abstract over token identity to capture functional roles; see \cref{appendix:cluster-levels} for discussion.

Our clustering algorithm (\cref{section:methodology-clustering}) identifies 510 clusters in the susceptibility data, listed in \cref{table:pythia14m_cluster_list}. Cluster labels were generated by an LLM (Claude Opus 4.5) and manually reviewed for correctness; for auto-interpretability methods see \citet{bills2023, paulo2025automaticallyinterpretingmillionsfeatures}. For a higher-level taxonomy organizing these clusters by type, see \cref{section:results_pythia_14m_taxonomy}.

\cref{appendix:detailed_examples} shows 30 randomly selected clusters with longer token-context pairs, as well as an evaluation of whether or not they adhere to a common theme. Based on evaluating this random sample, we find that the majority of clusters have more than 90\% of their token-context pairs well described by a single theme.

\subsection{Networks of Linked Clusters}\label{section:networks_linked}

Modes of the data distribution (\cref{sec:susceptibilities-modes}) are defined for contexts of a fixed length $k$, but modes for different $k$ can be \emph{linked}. Consider
\[
\underset{x}{\underbrace{\overset{x_{<i}}{\overbrace{x_1 x_2 \cdots x_{i-1}}} x_i \cdots x_k}} y
\]
where $x_i \in \Sigma$ and $y \in \Sigma$. The reason $y$ follows $x$ may depend on both (i) why $x_i$ follows $x_{<i}$ and (ii) the presence of $x_i$ in the context. These clusters form a kind of \emph{network} where we imagine an edge from cluster $\mathcal{C}$ to cluster $\mathcal{C'}$ if the kind of token sequences $xy$ that end up in $\mathcal{C'}$ often have contexts of the form $x = u x'y' v$ where $x'y' \in \mathcal{C}$.

For example, consider the token sequence
\[
\cdots \overset{\clu{18}, \cluq{151}, \cluq{242}, \cluq{506}}{\tok{<}}\underset{\clu{149}}{\tok{a}}\underset{\clu{291}}{\tok{~href}}\underset{\clu{251}}{\tok{="}}\tok{http}\underset{\clu{195},\cluq{286}}{\tok{://}}\underset{\clu{46}}{\tok{www}} \quad \cdots \quad \overset{\clu{190}, \cluq{266}}{\tok{</}}\underset{\clu{217}}{\tok{a}}\overset{\clu{52}, \cluq{170}, \cluq{244}}{\tok{>}}
\]
To predict \tok{</} we need to know about the early tokens that ``open'' the HTML tag, and to know the correct closing tag \tok{a} we need to know about the nature of the opening tag, which itself may be predictable from the earlier context (not shown). 

There is a sequence of clusters corresponding to different positions in the HTML syntax, distinguishing between the opening syntax, the tag identity, and its attributes. The sequence begins with the opening angle bracket \tok{<} (\clu{18}, \cluq{151}, \cluq{242}, \cluq{506}), followed by clusters for tag names like \tok{a}, \tok{div}, \tok{span}, or \tok{li} (\clu{149}, \cluq{377}, \cluq{426}). If attributes are present, there are clusters for keywords like \tok{class} (\clu{330}) or \tok{href} (\clu{197}), followed by the assignment operator \tok{="} (\clu{251}). The opening tag is completed by \tok{>} or compound tokens (\clu{52}, \cluq{56}, \cluq{170}). Finally, a separate set of clusters covers the beginning of the closing tag \tok{</} (\clu{190}, \cluq{266}), the content of the closing tag (e.g. \tok{a} in \tok{</}\tok{a}\tok{>}, \clu{217}), and then \tok{>} (\clu{52}, \cluq{170}, \cluq{244}). A very similar network of SAE features was noted in \citet{bricken2023monosemanticity}.


For additional examples of linked clusters see \cref{appendix:additional_linked_clusters}.

\subsection{Comparison to SAEs}

The susceptibility clusters shown in \cref{tab:context_split,tab:functional_grouping} are recognizably similar to the kinds of patterns found in sparse auto-encoder (SAE) features \citep{yun2021transformer, huben2024sparse, bricken2023monosemanticity}. To validate that our clusters capture structure also identified by other interpretability methods, we give a quantitative analysis of this similarity.

We compare susceptibility clusters in Pythia-14M (\cref{section:cluster_table}) to SAE features for Pythia-70M computed in \citet{lan2025sparse}. We use the residual stream features from layers $2$-$4$ and define a susceptibility cluster to \emph{match} with an SAE feature if the feature has an unusually large activation on the $y$ tokens across the cluster (and has a low baseline activation on the $y$ tokens of a random selection of other clusters). For full details of the methodology see \cref{section:sae-methodology}. \textbf{We find that out of 510 susceptibility clusters, 259 (50.8\%) have a matched SAE feature}.

We caution against overinterpreting the match rate. SAE features are designed to be sparse but not singleton -- multiple features typically fire on any given input, reflecting the hypothesis that inputs are explained by combinations of underlying features. By contrast, our clustering assigns each token pair to exactly one cluster. A token pair explained by multiple overlapping patterns may match an SAE feature for only one of those patterns, or may be assigned to a cluster capturing a different pattern entirely. The comparison validates cross-method consistency but does not establish a one-to-one correspondence between the two representations.

While our quantitative comparison is to the Pythia-70M SAEs of \citet{lan2025sparse}, we note that there is also qualitative similarity to the features in \citet{bricken2023monosemanticity} which studies a one-layer transformer with a 512-neuron transformer MLP. The 4096 features (A/1) released there closely resemble the relatively low-level syntactic and structural features of language captured by our clusters with many ``flavors'' of \tok{.}, \tok{,}, \tok{\textbackslash n}, \tok{~and}, \tok{~of} and parts of mathematics and code.

\subsection{Scaling to larger models}\label{section:scaling}

A natural question is whether the structure we identify in Pythia-14M also exists in larger Pythia models. To test this, we measured the conductance of Pythia-14M clusters when the graph structure is defined by susceptibilities from larger models. If a cluster corresponds to a genuine pattern, tokens in that cluster should remain nearby in susceptibility space even when susceptibilities are computed from a different model. We find this is indeed the case: Pythia-14M clusters have conductance significantly below 1 in all larger models tested, from 31M to 1.4B parameters (\cref{fig:conductance_scatter} in \cref{appendix:larger-models}). Random vertex sets, by contrast, have conductance $\approx 1$. This provides evidence that the clusters in Pythia-14M reflect structure in the data that larger models also learn to respond to.

Clustering on larger models yields fewer clusters (241--358 compared to 510 for Pythia-14M; see \cref{appendix:larger-models}). This may have multiple causes: increased difficulty sampling higher-dimensional parameter spaces, noisier Euclidean metrics in higher dimensions, and the fact that larger models respond to more modes may all contribute to diffusing the cluster structure. Techniques such as dictionary learning on susceptibility vectors may be necessary to recover more structure in larger models.

\section{Related work}

\paragraph{Prior susceptibility work} Susceptibility analysis was introduced by \citet{baker2025studyingsmalllanguagemodels} and applied to study development over training by \citet{wang2025lang2pt5}, both focusing on a 3M parameter attention-only transformer. Related ideas appear in the Bayesian influence function of \citet{kreer2025bayesian} and the loss kernel of \citet{adam2025losskernel}, which also use SGLD sampling to compute covariances for data attribution and interpretability. The present paper extends this work in several directions: we develop the theoretical connection between susceptibilities and modes of the data distribution (\cref{sec:susceptibilities-modes}), introduce a systematic clustering methodology based on conductance (\cref{section:methodology-clustering}), and apply these tools to Pythia-14M, which despite being only modestly larger exhibits significantly richer structure -- yielding 510 interpretable clusters compared to the handful of patterns identified in prior work.

\paragraph{Quanta vs modes} The theoretical contribution of this paper is organized around \emph{modes} of \citet{modes2}. A very similar idea is the \emph{quanta} of \citet{michaud2023the} which are defined in terms of the singular value decomposition of a matrix whose rows are gradients of the loss on a set of tokens in context $xy$. Spectral clustering based on this data is related to clustering by susceptibility vectors (in the sense that if the loss landscape were nondegenerate, there would be a formal relationship between them). While \citet{michaud2023the} do not provide the full set of clusters they do note that ``most clusters involve the prediction of the same token'' (i.e. $y$ in our notation). There are interesting exceptions, including newlines in length limited text (our \textbf{C465}).

\paragraph{Latent concepts} Given that deep neural networks learn representations, it is natural to suppose that the activations of neural language models should group words together in high-dimensional space based on syntactic and semantic relationships, in such a way that clusters represent latent concepts \citep{mikolov2013efficientestimationwordrepresentations, reif2019visualizing, hewitt2019structural}. \citet{dalvi2022discovering} find $1000$ clusters based on activations in a BERT model (110M parameters) and they give 183 labeled clusters in \citet[Appendix B.3]{dalvi2022discovering}. For a survey of methods before SAEs see \citet{10.1162/tacl_a_00519}.

\paragraph{SAEs} \citet{yun2021transformer} apply dictionary learning to the activations of a $12$-layer BERT model and find transformer factors that include low-level word features but also more complex high-level features involving multiple distinct tokens. \citet{huben2024sparse} mainly study Pythia-70M and Pythia-410M but in the paper do not give details on the SAE features discovered. \citet{marks2025sparse} make use of the clustering technique (based on gradients) from \citet{michaud2023the} and then use SAE features evaluated on these clusters to discover circuits. It is known that SAE features capture high level cross-lingual representations of grammatical concepts such as \emph{plural} and \emph{past tense} \citep{brinkmann2025largelanguagemodelsshare}. We note that our clusters are derived from 780,000 token sequences whereas e.g. the SAEs in \citet{bricken2023monosemanticity} were trained with 8B data points.

\paragraph{Limitations} Our study focuses exclusively on the Pythia model family trained on the Pile, so our findings may not generalize to other architectures or training distributions. The clustering results depend on hyperparameter choices (conductance threshold, $k$-nearest neighbors); while we find the results robust to reasonable variations, different settings could yield different cluster boundaries. Our SAE comparison uses features from Pythia-70M rather than Pythia-14M due to availability, which may affect the match rate. Finally, while we provide evidence that Pythia-14M clusters persist in larger models (\cref{section:scaling}), susceptibility analysis has not yet been demonstrated at frontier scale.

\section{Conclusion}

This paper establishes susceptibility analysis as a form of \emph{spectroscopy} for neural networks: a method for inferring internal structure from how the model responds to patterns in the data distribution. The 510 interpretable clusters we identify in Pythia-14M -- 50.8\% of which match SAE features -- reflect the model's learned responses to regularities ranging from universal linguistic structures to dataset-specific patterns. The theoretical framework (\cref{sec:susceptibilities-modes}) provides a principled connection between these patterns, formalized as modes of the data distribution, and the model's internal responses.

Beyond interpretation, susceptibilities may also enable \emph{intervention}: if susceptibilities measure the first-order response of structural coordinates to shifts in the data distribution, this relationship can be inverted to find data modifications that produce desired structural changes \citep{patterning}. We leave exploration of this direction to future work.

Our detailed results focus on Pythia-14M, but we provide evidence that the patterns identified persist in larger models. When we measure the conductance of Pythia-14M clusters using susceptibilities from models up to 1.4B parameters, the clusters remain coherent (\cref{section:scaling}, \cref{appendix:larger-models}). The question of whether susceptibility analysis scales to frontier systems remains open, but these results are encouraging. The hundreds of interpretable clusters we find, and the theoretical framework connecting them to modes of the data distribution, suggest that spectroscopy may be a useful complement to existing interpretability methods.


\bibliographystyle{abbrvnat}
\bibliography{references}

\newpage
\appendix

\section*{Appendix Overview}

The appendix provides supplementary material organized as follows:
\begin{itemize}
    \item \textbf{\cref{appendix:clustering}--\cref{appendix:susceptibilities-hparams}:} Details of the clustering algorithm and susceptibility hyperparameters.
    \item \textbf{\cref{appendix:modes}--\cref{appendix:toy-model}:} Theoretical material on susceptibilities, including the mode decomposition, per-token susceptibilities as data perturbations, per-pattern susceptibilities, and a toy model illustrating mode structure.
    \item \textbf{\cref{appendix:token-definitions}:} Definitions of token pattern categories, with distribution across datasets.
    \item \textbf{\cref{appendix:umap}:} UMAP methodology and hyperparameters.
    \item \textbf{\cref{appendix:larger-models}:} Results from applying susceptibility analysis to larger Pythia models (31M--1.4B parameters).
    \item \textbf{\cref{appendix:additional_linked_clusters}:} Additional examples of linked cluster networks including code blocks, mathematical reasoning, and LaTeX typesetting.
    \item \textbf{\cref{section:sae-methodology}:} Methodology for comparing susceptibility clusters to SAE features.
    \item \textbf{\cref{appendix:gaussian-posterior}:} Comparison of susceptibilities to a simpler Gaussian baseline.
    \item \textbf{\cref{section:cluster_table}--\cref{appendix:detailed_examples}:} Complete list of 510 Pythia-14M clusters, taxonomy, and detailed examples.
\end{itemize}

\section{Experiment Details}
\subsection{Details of the clustering algorithm}\label{appendix:clustering}

\subsubsection{Graph Construction}

The susceptibility matrix $X \in \mathbb{R}^{n \times H}$, has different scales across columns, as well high degree of correlation between columns, both of which make Euclidean distance in $n$-dimensional space a poor basis for clustering.  We preprocess it by standardizing each column to have zero mean and unit variance, then shifting each row to have zero mean. 

We then construct a symmetrized $k$-nearest-neighbor graph with $k = 45$. Edges are weighted using a self-tuning radial basis function following \citet{zelnik2004self}. For two connected points $x$ and $y$, the edge weight is
\[
w(x,y) = \exp\left(-\frac{\|x - y\|^2}{\sigma_x \cdot \sigma_y}\right)
\]
where $\sigma_x$ is the distance from $x$ to its $k$-th nearest neighbor. This self-tuning accounts for the increased distance between points in higher dimensions, and enforces a roughly equal average degree for distance graphs of datasets of varying dimensions.

\subsubsection{Conductance and Local Clustering}

For a proper subset $S$ of vertices, the \emph{conductance} is defined as
\[
\text{Cond}(S) = \frac{w(S, \bar{S})}{\min(\text{vol}(S), \text{vol}(\bar{S}))}
\]
where $w(S, \bar{S})$ is the total weight of edges crossing the boundary and $\text{vol}(S)$ is the sum of degrees within $S$. Conductance lies in $[0,1]$, with low values indicating well-separated clusters. 

The Andersen-Chung-Lang (ACL) Local clustering algorithm \citep{andersen2009local} identifies low conductance sets containing a given seed by ordering all points in the graph with personal PageRank of that seed, and identifying low conductance sets among the prefixes of that ordering.  

The personal PageRank of a seed, with teleport probability $\alpha$,  is the stable state of a random walk on the graph which, at any step, returns to the seed with probability $\alpha$.  Though this PageRank is determined by the global structure of the graph, it can be efficiently computed via the ``push'' approximation \citep{andersen2009local}, whose runtime does not depend on the number of points in the graph. The push approximation depends on a tolerance parameter $\epsilon$. It is designed to only explore nodes of the graph with predicted rank above $\epsilon$. We selected teleportation parameter $\alpha=0.001$ and $\epsilon=10^{-7}$ to cause push personal PageRank to assign nonzero rank to roughly 10k nodes.

\subsubsection{Iterative Cluster Discovery}


Direct application of local clustering suffers from several drawbacks. On the one hand seeds in the dense main body produce candidate clusters encompassing most of the graph: if the seed is central in a large (i.e. much greater than 10k) point cloud of roughly uniform density, then the potential cluster will usually include every point with positive PageRank, since the visited points form a rough ball around the seed, and the volume increases faster than the weight of outgoing edges. On the other hand, if the seed is isolated, the potential cluster will be very small.

We address this with an iterative procedure (\cref{alg:clustering}) that progressively removes visited nodes. The aim is to find reasonable sized clusters while excluding excessively large clusters and isolated points.

Early seed selection almost always chooses points within a large main body, but repeated steps of the algorithm remove such points relatively quickly. After they are gone, the algorithm categorizes the remaining points into clusters easily.

The parameter settings are summarized in \cref{tab:clustering-params}.

\begin{algorithm}[t]
\caption{Iterative Conductance-Based Clustering}
\label{alg:clustering}
\begin{algorithmic}[1]
\Require Susceptibility vectors $\{\chi_i\}_{i=1}^n \subset \mathbb{R}^H$
\Ensure List of clusters $\mathcal{C}$
\State Construct $k$-NN graph $G$ with self-tuning weights
\State $\textit{Unvisited} \gets \{1, \ldots, n\}$
\State $\mathcal{C} \gets [\,]$
\While{$|\textit{Unvisited}| > 0.001 \cdot n$}
    \State Sample seed $s$ uniformly from \textit{Unvisited}
    \State Compute personalized PageRank $\pi$ from $s$ with parameters $(\alpha, \epsilon)$
    \State Let $R \gets \{i : \pi_i > 0\}$ be nodes with positive rank
    \State Find minimum-conductance prefix $P$ of nodes sorted by $\pi$
    \If{$|P| > 0.99 \cdot |R|$} \Comment{Main body detected}
        \State $\textit{Unvisited} \gets \textit{Unvisited} \setminus R$
    \ElsIf{$|P| < 20$} \Comment{Isolated point}
        \State $\textit{Unvisited} \gets \textit{Unvisited} \setminus \{s\}$
    \Else \Comment{Valid cluster found}
        \State Append $P$ to $\mathcal{C}$
        \State $\textit{Unvisited} \gets \textit{Unvisited} \setminus P$
    \EndIf
\EndWhile
\State \Return $\mathcal{C}$
\end{algorithmic}
\end{algorithm}

The key insight is that early iterations almost always select seeds within the large, dense main body of the distribution. These attempts are rejected (the candidate cluster spans nearly all reachable nodes), but the rejection removes those nodes from future consideration. Once the main body is exhausted, subsequent seeds land in the peripheral structures, which the algorithm identifies as genuine low-conductance clusters.

\begin{table}[ht]
\centering
\begin{tabular}{@{}lll@{}}
\toprule
\textbf{Parameter} & \textbf{Value} & \textbf{Role} \\
\midrule
$k$ (neighbors) & 45 & Graph connectivity \\
$\alpha$ (teleport) & 0.001 & PageRank localization \\
$\epsilon$ (tolerance) & $10^{-7}$ & PPR approximation precision \\
Main body threshold & 0.99 & Reject if best prefix contains >99\% of ranked nodes \\
Minimum cluster size & 20 & Reject isolated points \\
Termination threshold & 0.001 & Stop when $<$0.1\% nodes remain \\
\bottomrule
\end{tabular}
\vspace{0.5em}
\caption{Clustering algorithm parameters.}
\label{tab:clustering-params}
\end{table}

\subsubsection{Local Clustering}

Its steps are
\begin{enumerate}
    \item To start, initialize \textit{UnvisitedNodes} as a set of all nodes in the graph, \textit{UnclusteredNodes} as an empty set, and \textit{Clusters} as an empty list.
    \item While \textit{UnvisitedNodes} contains more than .1\% of all nodes, choose a random node $x$ from the \textit{UnvisitedNodes}. 
    \item Determine the potential cluster of $x$ using ACL
    \item If the potential cluster contains more than 99\% of all nodes given positive rank by that round PPR, then we did not find a cluster starting from $x$.  Add all nodes with positive rank to \textit{UnclusteredNodes} and remove them all from \textit{UnvisitedNodes}. Return to step 2
    \item If the potential cluster has size less than 20, then $x$ is an isolated point. Remove $x$ from \textit{UnvisitedNodes} and return to step 2.
    \item If neither of the above is true, the potential cluster found is acceptable. Append the potential cluster (as a set) as a new element to \textit{Clusters} and remove every element in the potential cluster from \textit{UnvisitedNodes}. Return to Step 2    
\end{enumerate}

\subsection{Susceptibilities hyperparameters} \label{appendix:susceptibilities-hparams}

We compute susceptibilities similarly to \citet{baker2025studyingsmalllanguagemodels}, with a few modifications to account for the increased size of the model. 

Susceptibilities were computed using preconditioned Stochastic Gradient Langevin Dynamics (pSGLD) that used the RMSProp algorithm as a preconditioner. For Pythia-14m we used as hyperparameters $\gamma=300$, $n\beta = 3$, $\varepsilon = 1\text{e}-5$, batch size $16$, $4$ chains, and $100$ draws, with $55$ steps taken between each draw.

For the larger models, the hyperparameters are given in \cref{tab:hyperparams}.

\begin{table}[t]
\centering
\caption{Experimental hyperparameters for each model.}
\label{tab:hyperparams}
\small
\begin{tabular}{lccccccc}
\toprule
Model & $\gamma$ & $n\beta$ & $\varepsilon$ & Batch size & Chains & Draws & Steps Between Draws \\
\midrule
Pythia-14M   & 300 & 3 & 1e-5 & 16 & 4 & 100 & 55 \\
Pythia-31M   & 300 & 3 & 1e-5  & 16 & 4 & 100 & 120 \\
Pythia-70M   & 300 & 3 & 1e-5  & 16 & 4 & 100 & 200 \\
Pythia-160M  & 300 & 3 & 1e-5  & 16 & 4 & 100 & 140 \\
Pythia-410M  & 300 & 10 & 1e-5  & 16 & 4 & 100 & 160 \\
Pythia-1.4B  & 300 & 10 & 1e-5  & 16 & 4 & 100 & 160 \\
\bottomrule
\end{tabular}
\end{table}

For additional details on the theory and implementation of susceptibilities used in this paper, please refer to the appendices of \citet{baker2025studyingsmalllanguagemodels}.

\subsection{Compute Use}\label{appendix:compute}

Experiments were conducted on several models from the Pythia family (14M, 70M, 160M, 410M, and 1.4B parameters). For each model, the experiment was parallelized across multiple compute nodes, with each node equipped with four NVIDIA H200 GPUs.

The number of nodes allocated per model was as follows: 10 nodes for the 1.4B-parameter model, 10 nodes for the 410M-parameter model, 4 nodes for the 160M-parameter model, 4 nodes for the 70M-parameter model, and 4 nodes for the 14M-parameter model. Depending on model size, this corresponds to between 16 and 40 GPUs used concurrently.

Each experimental run completed within a maximum of five days of wall-clock time. Exact GPU-hour usage was not logged; however, total compute scaled approximately linearly with the number of nodes allocated per model.

\section{Susceptibilities}

We define the susceptibility $\chi^C_{xy}$ for a component $C$ of a neural network used to predict the next token $y$ given a context $x$, and explain how to think intuitively about what these scalar values mean. For full details see \citet{baker2025studyingsmalllanguagemodels}.

We consider sequence models $p(y|x,w)$ that predict tokens $y \in \Sigma$ given sequences of tokens $x \in \Sigma^k$ for various $1 \le k \le K$ (called \emph{contexts}) where $K$ is the maximum context length and $\Sigma$ is the set of tokens. The true distribution of token sequences $(x,y)$ is denoted $q(x,y)$. The sequence models we have in mind are transformer neural networks, where $w \in W$ is the vector of weights. We set $X$ to be the disjoint union of $\Sigma^k$ over $1 \le k \le K$ and $Y = \Sigma$.

Given a dataset $D_n = \{(x_i,y_i)\}_{i=1}^n$, drawn i.i.d. from $q(x,y)$ we define
\begin{align*}
\ell_{xy}(w) = - \log p(y|x,w)\,,\\
L_n(w) = \frac{1}{n} \sum_{i=1}^n \ell_{x_iy_i}(w)\,.
\end{align*}
The function $L_n(w)$ is the empirical negative log-likelihood and its average over the data distribution is denoted $L(w) = \mathbb{E}_{q(x,y)}[\ell_{xy}(w)]$. By a \emph{component} of the neural network we mean some subset of the weights $C$ associated with a product decomposition $W = U \times C$. Given a parameter $w^* = (u^*, v^*)$ and writing $w = (u,v)$ for the decomposition of a general parameter, we define a generalized function on $W$ by
\begin{equation}\label{eq:centered_deltaloss}
\phi_C(w) = \delta(u - u^*) \Big[ L(w) - L(w^*) \Big]
\end{equation}
where $\delta(u - u^*)$ is one if $u = u^*$ and zero otherwise. Given a prior $\varphi(w)$ on the parameter space, the \emph{quenched posterior} at inverse temperature $\beta > 0$ and sample size $n$ is
\begin{equation}
p_n^\beta(w) = \frac{1}{Z^\beta_n} \exp\{ -n\beta L(w) \} \varphi(w)\,\quad \text{where} \quad
Z_n^\beta = \int \exp\{-n\beta L(w)\}\varphi(w)\,dw.
\end{equation}
In practice, we use a \emph{localized} version of this posterior centered at a trained parameter $w^*$, replacing $\varphi(w)$ with a Gaussian $\exp\{-\frac{\gamma}{2}\|w - w^*\|^2\}$ and the population loss with the empirical loss $L_n(w)$. This ensures sampling remains in a neighborhood of $w^*$; see \citet{baker2025studyingsmalllanguagemodels} and \cref{appendix:susceptibilities-hparams} for details.

Given a generalized function $\phi(w)$ we define the expectation
\begin{equation}\label{eq:expectation_phi}
\langle \phi \rangle_{\beta}
= \int \phi(w) p_n^\beta(w) dw.
\end{equation}
and given a function $\psi(w)$ the covariance with respect to the quenched posterior is
\[
\operatorname{Cov}\big[ \phi, \psi \big] = \big\langle \phi \, \psi \big\rangle_\beta - \big\langle \phi \big\rangle_\beta \big\langle \psi \big\rangle_\beta\,.
\]

\begin{definition}\label{defn:per_token_susceptibility}
The \emph{per-token susceptibility} of $C$ for $(x,y) \in X \times Y$ is 
\begin{equation}\label{eq:per_sample_suscep}
\chi^C_{xy} := - \mathrm{Cov}\Bigl[ \phi_C, \ell_{xy}(w) - L(w) \Bigr]\,.
\end{equation}
\end{definition}

\subsection{Modes}\label{appendix:modes}

We briefly recall the framework of modes from \citet{modes2,baker2025studyingsmalllanguagemodels}. Fix a finite alphabet $\Sigma$ and consider the Hilbert space $\mathscr{H} = L^2(\Sigma^k, q; \mathbb{R}^{\Sigma})$ of functions from contexts $x \in \Sigma^k$ to vectors over tokens, with inner product
\[
\langle f, g \rangle_{\mathscr{H}} = \int \langle f(x), g(x) \rangle q(x) \, dx\,.
\]
The conditional distribution defines an element $\mathcal{C} \in \mathscr{H}$ by $\mathcal{C}(x) = \sum_y q(y|x) \, y$. Following \citet{modes2} we use the inner product on $\mathscr{V} = \mathbb{R}^{\Sigma^k}$ defined on basis elements $x, x' \in \Sigma^k$ by
\[
\langle x, x' \rangle_{\mathscr{V}} = q(x)^{-1} \delta_{x,x'}
\]
where $\delta_{x,x'}$ is the Kronecker delta. This weighting corresponds to the standard whitening procedure for SVD of conditional distributions: contexts are weighted inversely to their frequency, ensuring the decomposition captures structure in the joint distribution rather than being dominated by frequent contexts. We denote by $\langle -, - \rangle$ the standard inner product on $\mathbb{R}^\Sigma$. For $x \in \Sigma^k$ let $\hat{x}^*: \mathscr{V} \to \mathbb{R}$ denote the linear functional $\hat{x}^*(-) = \langle -, x \rangle_{\mathscr{V}}$.

For $y \in \Sigma$ and $x \in \Sigma^k$, define $y \circ \hat{x}^* \in \mathscr{H}$ by $(y \circ \hat{x}^*)(x') = y \cdot \hat{x}^*(x') = y \cdot q(x)^{-1} \delta_{x,x'}$. The norm of this vector is $\| y \circ \hat{x}^* \|_{\mathscr{H}} = q(x)^{-1/2}$ so the elements $\{ q(x)^{1/2} y \circ \hat{x}^*\}_{x \in \Sigma^k, y \in \Sigma}$ form an orthonormal basis of $\mathscr{H}$, which we call the \emph{token basis}.

Applying SVD to $\mathcal{C}$ yields singular values $s_\alpha$ with right singular vectors $v_\alpha$ (context patterns) and left singular vectors $u_\alpha$ (continuation patterns). Here $\Lambda$ indexes right singular vectors, $\Lambda^+$ indexes left singular vectors for nonzero singular values, and $\Lambda^{++} \supseteq \Lambda^+$ is an extension to an orthonormal basis of $\mathbb{R}^\Sigma$. Define $e_{\alpha\beta} \in \mathscr{H}$ by $e_{\alpha\beta}(x)(y) = \hat{v}_\alpha^*(x) \langle u_\beta, y \rangle$ where $\hat{v}_\alpha^*(x) = \langle v_\alpha, x \rangle_{\mathscr{V}}$. The elements $\{e_{\alpha\beta}\}_{\alpha \in \Lambda, \beta \in \Lambda^{++}}$ form an orthonormal basis of $\mathscr{H}$, which we call the \emph{modes basis}.

\paragraph{Transfer coefficients.} The \emph{propensity} $s_{\alpha\beta}(xy)$ is the coefficient relating these two bases:
\begin{equation}\label{eq:propensity}
s_{\alpha\beta}(xy) := \langle y \circ \hat{x}^*, e_{\alpha\beta} \rangle_{\mathscr{H}} = e_{\alpha\beta}(x)(y) = \hat{v}_\alpha^*(x) u_\beta^*(y)
\end{equation}
where $u_\beta^*(y) = \langle u_\beta, y \rangle$. This gives the expansion
\[
y \circ \hat{x}^* = \sum_{\alpha,\beta} s_{\alpha\beta}(xy) e_{\alpha\beta}
\]
where sums over $\alpha, \beta$ mean $\alpha \in \Lambda$ and $\beta \in \Lambda^{++}$.

\begin{lemma}\label{lemma:decomp_ellxyminusL} As functions of $w$ we have
\[
\ell_{xy}(w) - L(w) = \sum_{\alpha, \beta} \big[ s_{\alpha\beta}(xy) - \delta_{\alpha,\beta} s_\alpha\big] \Phi_{\alpha \beta}(w)
\]
where $\Phi(w)(x) = \sum_y \ell_{xy}(w) y$ and $\Phi_{\alpha\beta}(w) = \langle \Phi(w), e_{\alpha\beta} \rangle_{\mathscr{H}}$.
\end{lemma}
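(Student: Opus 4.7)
The plan is to expand both $\ell_{xy}(w)$ and $L(w)$ in the modes basis by writing them as inner products in $\mathscr{H}$ against elements whose mode-basis coefficients are already known.

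First I would show that $\ell_{xy}(w)$ is the inner product of $\Phi(w)$ with the token-basis element $y \circ \hat{x}^*$. By definition of the $\mathscr{H}$-inner product,
\[
\langle \Phi(w), y \circ \hat{x}^* \rangle_{\mathscr{H}} = \sum_{x'} q(x') \, \langle \Phi(w)(x'), (y \circ \hat{x}^*)(x') \rangle,
\]
and since $(y \circ \hat{x}^*)(x') = q(x)^{-1} \delta_{x,x'} y$ and $\langle \Phi(w)(x'), y \rangle = \ell_{x'y}(w)$, the weights $q(x')$ and $q(x)^{-1}$ cancel against the Kronecker $\delta$ to leave $\ell_{xy}(w)$. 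Combining this with the expansion $y \circ \hat{x}^* = \sum_{\alpha,\beta} s_{\alpha\beta}(xy) e_{\alpha\beta}$ recalled from equation \eqref{eq:propensity} yields
\[
\ell_{xy}(w) = \sum_{\alpha,\beta} s_{\alpha\beta}(xy) \Phi_{\alpha\beta}(w).
\]

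Next I would handle $L(w)$ in the same way. Since $L(w) = \sum_{x} q(x) \sum_y q(y|x) \ell_{xy}(w) = \sum_x q(x) \langle \mathcal{C}(x), \Phi(w)(x) \rangle$, we have $L(w) = \langle \mathcal{C}, \Phi(w) \rangle_{\mathscr{H}}$. The SVD of $\mathcal{C}$ gives $\mathcal{C}(x) = \sum_{\alpha} s_\alpha \hat{v}_\alpha^*(x) u_\alpha$, and comparing with $e_{\alpha\alpha}(x) = \hat{v}_\alpha^*(x) u_\alpha$ shows $\mathcal{C} = \sum_\alpha s_\alpha e_{\alpha\alpha}$ in $\mathscr{H}$. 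Taking the inner product with $\Phi(w)$ gives $L(w) = \sum_\alpha s_\alpha \Phi_{\alpha\alpha}(w) = \sum_{\alpha,\beta} \delta_{\alpha,\beta} s_\alpha \Phi_{\alpha\beta}(w)$, using that $s_\alpha = 0$ for $\alpha \in \Lambda \setminus \Lambda^+$ so that indices outside $\Lambda^+$ contribute nothing. Subtracting the two expansions gives the claimed formula.

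The only step that requires genuine care is the SVD expansion of $\mathcal{C}$: one must check that the indexing conventions (with $\Lambda$, $\Lambda^+$, $\Lambda^{++}$) align so that the sum $\sum_\alpha s_\alpha e_{\alpha\alpha}$ really reproduces $\mathcal{C}$ and the diagonal term $\delta_{\alpha,\beta} s_\alpha$ in the final formula is well-defined for all $(\alpha,\beta) \in \Lambda \times \Lambda^{++}$. Once this bookkeeping is in place, the rest is bilinear expansion against the orthonormal modes basis, and the two sums can be combined termwise since both ranges of summation are $\Lambda \times \Lambda^{++}$.
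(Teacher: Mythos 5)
Your proof is correct and takes essentially the same approach as the paper's: both expand $\ell_{xy}$ and $L$ via the identities $\ell_{xy}(w) = \langle \Phi(w), y \circ \hat{x}^* \rangle_{\mathscr{H}}$ and $L(w) = \langle \Phi(w), \mathcal{C} \rangle_{\mathscr{H}}$ together with the mode-basis expansions $y\circ\hat{x}^* = \sum_{\alpha,\beta} s_{\alpha\beta}(xy)\,e_{\alpha\beta}$ and $\mathcal{C} = \sum_\alpha s_\alpha e_{\alpha\alpha}$. The paper folds these two pairings into a single integral computation (citing an external lemma for the kernel identification), whereas you spell them out separately and then subtract; the underlying ingredients and bookkeeping about $\Lambda, \Lambda^+, \Lambda^{++}$ are the same.
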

\begin{proof}
Using the expansion $y \circ \hat{x}^* = \sum_{\alpha, \beta} s_{\alpha\beta}(xy) e_{\alpha\beta}$ from \eqref{eq:propensity}, we repeat the calculation of \citet[Lemma D.4]{baker2025studyingsmalllanguagemodels} with $\Delta L = \ell_{xy} - L$
\begin{align*}
\Delta L &= - \int (\delta_{x,x'}\delta_{y,y'} q(x)^{-1} - q(y'|x')) q(x') \log p(y'|x',w) dx' dy'\\
&= - \int (y \circ \hat{x}^* - \mathcal{C})(x')(y') q(x') \log p(y'|x',w) dx' dy'\\
&= - \sum_{\alpha,\gamma} \big[ s_{\alpha\gamma}(xy) - \delta_{\alpha,\gamma} s_\alpha \big] \int e_{\alpha \gamma}(x')(y') q(x') \log p(y'|x',w) dx' dy'\\
&= \sum_{\alpha,\gamma} \big[ s_{\alpha\gamma}(xy) - \delta_{\alpha,\gamma} s_\alpha \big] \Phi_{\alpha\gamma}(w)
\end{align*}
as claimed.
\end{proof}

\begin{lemma}\label{lemma:formula_chixy_chialphabeta} Hence
\begin{equation}\label{eq:chi_mode_decomp}
\chi_{xy} = \sum_{\alpha,\beta} s_{\alpha\beta}(xy) \chi_{\alpha\beta} - \bar{\chi}
\end{equation}
where $\chi_{\alpha\beta} \in \mathbb{R}^H$ is the vector with components $\chi^C_{\alpha\beta} = -\operatorname{Cov}[\phi_C, \Phi_{\alpha\beta}]$, the susceptibility of component $C$ for mode pair $(\alpha,\beta)$, and
\[
\bar{\chi} = \sum_{\alpha} s_\alpha \chi_{\alpha\alpha}\,.
\]
\end{lemma}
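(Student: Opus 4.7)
The plan is to apply the definition $\chi^C_{xy} = -\operatorname{Cov}[\phi_C, \ell_{xy}(w)-L(w)]$ directly to the expansion provided by Lemma~\ref{lemma:decomp_ellxyminusL}, and then rearrange the result into a ``model'' piece that depends on the token sequence through the propensities $s_{\alpha\beta}(xy)$ and a ``baseline'' piece $\bar\chi$ that does not.

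First I would fix a component $C$ and substitute the identity
\[
\ell_{xy}(w) - L(w) = \sum_{\alpha,\beta} \bigl[s_{\alpha\beta}(xy) - \delta_{\alpha,\beta} s_\alpha\bigr]\Phi_{\alpha\beta}(w)
\]
into the definition of $\chi^C_{xy}$. Since $\Sigma$ is finite, the sums over $\alpha\in\Lambda$ and $\beta\in\Lambda^{++}$ are finite, so linearity of the covariance in the second slot applies without any analytic subtlety and I obtain
\[
\chi^C_{xy} = -\sum_{\alpha,\beta} \bigl[s_{\alpha\beta}(xy) - \delta_{\alpha,\beta} s_\alpha\bigr] \operatorname{Cov}\bigl[\phi_C, \Phi_{\alpha\beta}(w)\bigr].
\]
Using the definition $\chi^C_{\alpha\beta} = -\operatorname{Cov}[\phi_C, \Phi_{\alpha\beta}]$ from \eqref{eq:pure-susceptibility}, this becomes
\[
\chi^C_{xy} = \sum_{\alpha,\beta} s_{\alpha\beta}(xy)\,\chi^C_{\alpha\beta} - \sum_{\alpha} s_\alpha \chi^C_{\alpha\alpha}.
\]

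Next I would identify the second sum with the component of $\bar\chi$, i.e.\ $\bar\chi^C := \sum_\alpha s_\alpha \chi^C_{\alpha\alpha}$, which depends only on the data distribution (through the singular values $s_\alpha$) and the model (through the diagonal mode susceptibilities), not on the particular pair $(x,y)$. Aggregating this scalar identity over all $H$ components yields the stated vector equation
\[
\chi_{xy} = \sum_{\alpha,\beta} s_{\alpha\beta}(xy)\,\chi_{\alpha\beta} - \bar\chi.
\]
Finally, the claim that $\bar\chi$ is uniquely determined by $\mathbb{E}_q[\chi_{xy}] = 0$ (stated in \eqref{eq:eta-decomposition}) can be cross-checked: taking the expectation under $q(x,y)$ of both sides and using $\mathbb{E}_q[s_{\alpha\beta}(xy)] = \langle \mathcal C, e_{\alpha\beta}\rangle_{\mathscr H} = \delta_{\alpha,\beta} s_\alpha$ (the defining relation of the SVD of $\mathcal C$ in the mode basis) recovers $\bar\chi = \sum_\alpha s_\alpha \chi_{\alpha\alpha}$ and confirms consistency.

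The main obstacle is really just bookkeeping: making sure the inner product conventions (with the $q(x)^{-1}$ weighting on $\mathscr V$) are tracked correctly when passing between the token basis and the mode basis, so that the $\delta_{\alpha,\beta} s_\alpha$ term coming out of $-L(w)$ lines up with the definition of $\bar\chi$. There is no analytic difficulty since all sums are finite and the covariance is bilinear; the content of the lemma is algebraic and follows once Lemma~\ref{lemma:decomp_ellxyminusL} is in hand.
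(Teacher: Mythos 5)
Your proposal is correct and matches the paper's intended argument: the paper offers no proof beyond the word ``Hence,'' which signals exactly the substitution of Lemma~\ref{lemma:decomp_ellxyminusL} into Definition~\ref{defn:per_token_susceptibility} followed by linearity of covariance, which is what you carry out. Your closing consistency check against $\mathbb{E}_q[\chi_{xy}] = 0$ via $\mathbb{E}_q[s_{\alpha\beta}(xy)] = \langle \mathcal{C}, e_{\alpha\beta}\rangle_{\mathscr{H}} = \delta_{\alpha,\beta}s_\alpha$ is a nice sanity check that the paper leaves implicit but that ties the formula for $\bar\chi$ to the centering claim in \eqref{eq:eta-decomposition}.
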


\paragraph{Sparsity.} The decomposition \eqref{eq:chi_mode_decomp} is useful when the propensity profile $\{s_{\alpha\beta}(xy)\}_{\alpha,\beta}$ is sparse. This requires: (i) $x$ aligns with few context patterns $v_\alpha$, (ii) $y$ aligns with few continuation patterns $u_\beta$, and (iii) diagonal dominance, meaning $s_{\alpha\alpha}(xy)$ dominates over off-diagonal terms. This occurs when $(x,y)$ ``follows for a clear reason'' -- a small number of patterns coherently explain why $y$ follows $x$.

\subsection{Per-token susceptibilities as data perturbations}\label{appendix:per-token-perturbation}

We explain how to interpret the per-token susceptibility $\chi_{xy}$ in terms of perturbations of the data distribution. This clarifies the connection between per-token susceptibilities and the mode decomposition. Following \citet{baker2025studyingsmalllanguagemodels}, the susceptibility for a perturbation $q \to q'$ of the data distribution decomposes as
\begin{equation}\label{eq:tokenwise-suscep}
\chi = \int q'(x,y) \chi_{xy} dx dy\,.
\end{equation}
That is, the susceptibility is the $q'$-weighted average of per-token susceptibilities. Consider a perturbation that slightly upweights a single pair $(x_0, y_0)$
\[
q' = (1-\epsilon) q + \epsilon \, \delta_{(x_0, y_0)}
\]
where $\delta_{(x_0, y_0)}$ is the point mass at $(x_0, y_0)$. Using \eqref{eq:tokenwise-suscep}:
\begin{align*}
\chi &= (1-\epsilon) \int q(x,y) \, \chi_{xy} \, dx \, dy + \epsilon \, \chi_{x_0 y_0}\\
&= (1-\epsilon) \, \mathbb{E}_q[\chi_{xy}] + \epsilon \, \chi_{x_0 y_0}\\
&= \epsilon \, \chi_{x_0 y_0}
\end{align*}
where we used $\mathbb{E}_q[\chi_{xy}] = 0$ (the per-token susceptibilities are centered). The per-token susceptibility $\chi_{xy}$ is proportional to the susceptibility for a perturbation that upweights the pair $(x,y)$.

\subsection{Inverting the decomposition}

Using the orthonormality $\langle e_{\alpha\beta}, e_{\gamma\delta} \rangle_{\mathscr{H}} = \delta_{\alpha\gamma}\delta_{\beta\delta}$, we can invert \eqref{eq:chi_mode_decomp}. Observe
\begin{align*}
\int s_{\alpha\beta}(xy) q(x) \chi_{xy} \, dx \, dy &= \sum_{\gamma,\delta} \chi_{\gamma\delta} \int e_{\alpha\beta}(x)(y) \big( e_{\gamma\delta}(x)(y) - \delta_{\gamma,\delta} s_\gamma \big) q(x) \, dx \, dy\\
&= \sum_{\gamma, \delta} \chi_{\gamma \delta} \Big[ \langle e_{\alpha\beta}, e_{\gamma\delta} \rangle_{\mathscr{H}} - \int q(x) \delta_{\gamma,\delta} s_\gamma \, dx \, dy \Big]\\
&= \chi_{\alpha\beta} - |\Sigma| \bar{\chi}\,.
\end{align*}

\subsection{Per-pattern susceptibilities}\label{appendix:per-pattern}

The mode structure of a data distribution is not known \emph{a priori}, and one may instead work with patterns defined by interpretable token properties. Given a pattern category $\mathcal{P}$ (e.g., word-start tokens, induction patterns, spacing tokens), \citet{wang2025lang2pt5} define the empirical \emph{per-pattern susceptibility}
\begin{equation}\label{eq:per-pattern-susceptibility}
\hat\chi(\mathcal{P}) = \frac{1}{|\mathcal{P}|}\sum_{(x,y) \in \mathcal{P}} \chi_{xy}
\end{equation}
where the sum is over token pairs classified as following pattern $\mathcal{P}$. This is the average susceptibility vector over tokens in the pattern class. The per-pattern susceptibility can be understood as an empirical approximation to the pure susceptibility $\chi_{\alpha\alpha}$. To see this, substitute \eqref{eq:chi_mode_decomp} into \eqref{eq:per-pattern-susceptibility}:
\[
\hat\chi(\mathcal{P}) = \sum_{\gamma,\delta} \chi_{\gamma\delta} \cdot \frac{1}{|\mathcal{P}|}\sum_{(x,y) \in \mathcal{P}} s_{\gamma\delta}(xy) - \bar{\chi}\,.
\]
The coefficient $\frac{1}{|\mathcal{P}|}\sum_{(x,y) \in \mathcal{P}} s_{\gamma\delta}(xy)$ measures the average propensity of mode $(\gamma,\delta)$ within pattern $\mathcal{P}$. If pattern $\mathcal{P}$ is well-aligned with a mode $\alpha$ in the sense that:
\begin{enumerate}
\item[(i)] tokens $(x,y) \in \mathcal{P}$ have large diagonal propensity $s_{\alpha\alpha}(xy)$, and
\item[(ii)] propensities $s_{\gamma\delta}(xy)$ with $\gamma \neq \alpha$ and $\delta \neq \alpha$ are small relative to $|\mathcal{P}|$,
\end{enumerate}
then $\hat\chi(\mathcal{P}) \approx c \cdot \chi_{\alpha\alpha} - \bar{\chi}$ for some constant $c > 0$ and with the vector $\bar{\chi}$ independent of $\mathcal{P}, \alpha$. In this case, the per-pattern susceptibility captures the same information as the pure susceptibility for the corresponding mode.

The per-pattern approach of \citet{wang2025lang2pt5} uses indicator functions for pattern membership (uniform weights within each class), while the mode decomposition uses theoretically-derived weights $s_{\alpha\beta}(xy)$ from the SVD of the conditional distribution. The latter is more principled but requires knowledge of the mode structure; the former is practical when patterns are defined by interpretable token properties that happen to align with the underlying modes.

\subsection{Toy model of modes}\label{appendix:toy-model}

In this section we study a simplified data distribution and the modes that it determines. As our starting point we take an observation about the clusters in Pythia-14M. The model ``understands'' two distinct ways that a sentence, concluded with a full stop, can be continued: with a capitalized word or a newline. The relevant clusters:
\begin{itemize}
    \item \textbf{Capitalized clusters}: there are a number of clusters consisting of token sequences $xy$ where $x$ ends in a period token (i.e. $x = x'\tok{.}$ for some $x'$) and the token $y$ de-tokenises to a space followed by a capital letter. In short, the clusters capture \emph{capitalized words following full stops}. The examples: \clu{180} (sentence-initial \tok{~These}), \clu{280} (conjunctive adverbs like \tok{~However}, \tok{~Thus}, \tok{~Therefore}), \clu{305} (sentence-initial words like \tok{~In}, \tok{~At}, \tok{~Since}), along with related clusters \clu{36}, \clu{160}, \clu{351}, \clu{363}, and \clu{365} which capture other sentence-initial capitalized words after periods. Most of these appear in the UMAP near \clu{180} as shown in \cref{fig:pythia14m_umap1}.
    \item \textbf{Newline cluster}: the cluster \clu{189} consists mostly of \tok{\textbackslash n} following \tok{.}. This cluster appears in the UMAP with the other \tok{\textbackslash n} tokens as shown in \cref{fig:pythia14m_umap1}.
\end{itemize}

Fix some $k > 0$ and let three distinct contexts $x_C, x_N, x_E \in \Sigma^k$ be chosen. Mathematically we make no further assumptions on these contexts, but informally, we think of these as all ending in \tok{.} with the following distinctions:
\begin{itemize}
    \item $x_C$ is the kind of context that is usually continued with a Capitalized word.
    \item $x_N$ is the kind of context that is usually continued with a Newline.
    \item $x_E$ can be continued Either way.
\end{itemize}

We let $y_C, y_N \in \Sigma$ denote continuations that begin with a capitalized word or a newline, respectively. In practice of course $x_C, x_N, x_E$ and $y_C, y_N$ should be replaced by sets of tokens with their own distribution, but we treat only the simplest case (for the general idea see the collective bigrams of \citet{modes2}). Then as our conditional distributions we take
    \begin{align*}
        q(y_C|x_C) &= 1, \quad q(y_N|x_C) = 0 & &\text{($x_C$ prefers capitalized)}\\
        q(y_C|x_N) &= 0, \quad q(y_N|x_N) = 1 & &\text{($x_N$ prefers newline)}\\
        q(y_C|x_E) &= a, \quad q(y_N|x_E) = b & &\text{($x_E$ is ambiguous)}
    \end{align*}
where $a + b = 1$ and $a, b > 0$, with $q(x_C) = q(x_N) = q(x_E) = \frac{1}{3}$. The operator $\mathcal{C}$ is
\[
M = \begin{array}{c@{\hspace{2pt}}c}
& \begin{array}{ccc} \scriptstyle x_C & \scriptstyle x_N & \scriptstyle x_E \end{array} \\[2pt]
\begin{array}{c} \scriptstyle y_C \\[4pt] \scriptstyle y_N \end{array} &
\begin{pmatrix} 1 & 0 & a \\[4pt] 0 & 1 & b \end{pmatrix}
\end{array}
\]
where columns are contexts and rows are tokens. We compute the SVD of $\frac{1}{\sqrt{3}} M$ \citep[Remark 4.11]{modes2}. The eigenvalues of $MM^T$ are found from the characteristic polynomial
\[
\lambda^2 - (3 - 2ab)\lambda + 2(1 - ab) = 0
\]
which has discriminant $(1 - 2ab)^2$, giving $\lambda_1 = 2(1-ab), \lambda_2 = 1$. The singular values of $\frac{1}{\sqrt{3}}M$ are
\[
s_1 = \sqrt{\frac{2(1-ab)}{3}}, \qquad s_2 = \frac{1}{\sqrt{3}}\,.
\]
The left singular vectors (token patterns) are
\begin{align*}
u_1 &\propto a \cdot y_C + b \cdot y_N & &\text{(weighted average of continuations)}\\
u_2 &\propto b \cdot y_C - a \cdot y_N & &\text{(capitalize vs.\ newline contrast)}
\end{align*}
and the right singular vectors (context patterns) are
\begin{align*}
v_1 &\propto a \cdot x_C + b \cdot x_N + (1-2ab) \cdot x_E\\
v_2 &\propto b \cdot x_C - a \cdot x_N + 0 \cdot x_E
\end{align*}
The key observation is that the second mode $v_2$ has zero weight on $x_E$ regardless of the value of $a$. The ambiguous context does not participate in the discriminating mode -- it has nothing to contribute to the distinction between capitalizing and not capitalizing.

In the special case $a = b = 1/2$ we have $s_1 = 1/\sqrt{2}$, $s_2 = 1/\sqrt{3}$, and $u_1 \propto y_C + y_N$, $u_2 \propto y_C - y_N$, $v_1 \propto x_C + x_N + x_E$, $v_2 \propto x_C - x_N$.

\section{Token pattern definitions}\label{appendix:token-definitions}

Following \citet{baker2025studyingsmalllanguagemodels,wang2025lang2pt5}, we classify tokens according to patterns that capture syntactic and structural properties. \Cref{tab:token-patterns} defines the pattern categories used throughout this paper, with the corresponding colors used in \cref{fig:pythia14m_umap1} and other figures. \Cref{fig:distribution_pattern} shows how these patterns are distributed across the 13 Pile subsets used in our analysis; note that patterns are not mutually exclusive, so percentages need not sum to 100\%.

\begin{figure}[t]
    \centering
    \includegraphics[width=\textwidth]{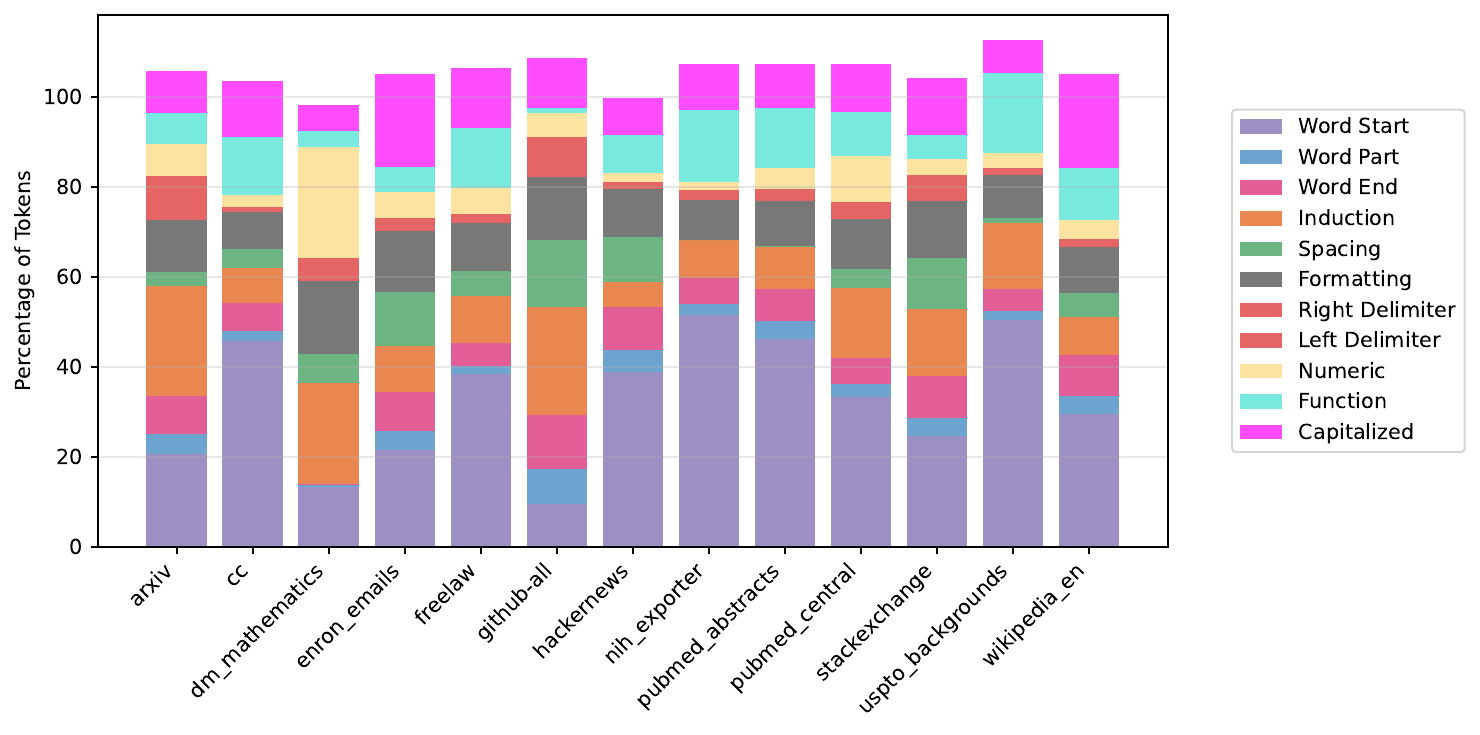}
    \caption{Percentages of tokens in each dataset which follow a given pattern. Note that not all patterns are mutually exclusive.}
    \label{fig:distribution_pattern}
\end{figure}

Note the distinctions with respect to the tokenizer in \citet{wang2025lang2pt5}. The tokenizer used for the Pythia models contains dedicated tokens for spaces of various lengths. Let us denote by $\tokenbox{~} \times n$ a token consisting of $n$ consecutive spaces. The token ID of \tokenbox{~} is $209$ while for $1 < n \le 24$ the token ID of $\tokenbox{~} \times n$ is $50278 - n$. In particular, the token consisting of $24$ consecutive spaces (the largest number encoded by a single token) has ID $50254$.

\begin{table}[t]
\centering
\begin{tabular}{@{}p{2cm}p{6.5cm}p{4cm}@{}}
\toprule
\textbf{Pattern} & \textbf{Definition} & \textbf{Examples} \\
\midrule
\legendbox{patternWordStart}{\textcolor{white}{\textbf{Word start}}} & 
A token that decodes to a space followed by lower or upper case letters & 
\tokenboxline{~be}, \tokenboxline{~R}\tokenbox{ose}, \tokenboxline{~The} \\
\addlinespace[0.5em]

\legendbox{patternWordPart}{\textcolor{white}{\textbf{Word part}}} & 
A non-word-end token that decodes to upper or lower case letters & 
\tokenbox{~S}\tokenboxline{ne}\tokenbox{ed}, \tokenboxline{th}\tokenbox{at}, \tokenbox{st}\tokenboxline{em}\tokenbox{ed} \\
\addlinespace[0.5em]

\legendbox{patternWordEnd}{\textcolor{white}{\textbf{Word end}}} & 
A token that decodes to upper or lower case letters followed by a formatting token, delimiter or space & 
\tokenbox{~el}\tokenbox{im}\tokenboxline{inate}\tokenbox{~}, \tokenbox{~differe}\tokenboxline{nces}\tokenbox{)}, \tokenbox{al}\tokenboxline{bum}\tokenbox{~} \\
\addlinespace[0.5em]

\legendbox{patternInduction}{\textcolor{white}{\textbf{Induction}}} & 
A sequence of tokens $uvUuv$ where $U$ is any sequence, $u,v$ are individual tokens, and $uv$ is not a common bigram ($q(v|u) \leq 0.05$) & 
\small\tokenbox{the}\tokenbox{ cat} $\ldots$ \tokenbox{the}\tokenboxline{ cat} \\
\addlinespace[0.5em]

\legendbox{patternSpacing}{\textcolor{white}{\textbf{Spacing}}} & 
A token made up of one or more characters from space, newline, tab, carriage return, or form feed & 
\tokenbox{~}, \tokenbox{\textbackslash n}, \tokenbox{\textbackslash t}, \tokenbox{\textbackslash{}n\textbackslash{}n} \\
\addlinespace[0.5em]

\legendbox{patternRightDelimiter}{\textcolor{white}{\textbf{Delimiter}}} & 
Brackets and composite tokens including parentheses, brackets, and their combinations & 
\tokenbox{)}, \tokenbox{~)}, \tokenbox{]}, \tokenbox{);}, \tokenbox{(} \\
\addlinespace[0.5em]

\legendbox{patternFormatting}{\textcolor{white}{\textbf{Formatting}}} & 
Tokens used for document structure and formatting beyond simple spacing & 
\tokenbox{.}, \tokenbox{,}, \tokenbox{~//} \\
\addlinespace[0.5em]

\legendbox{patternNumeric}{\textcolor{white}{\textbf{Numeric}}} & 
Tokens containing numerical digits & 
\tokenbox{123}, \tokenbox{~14}, \tokenbox{~2024} \\
\addlinespace[0.5em]

\legendbox{patternFunction}{\textcolor{white}{\textbf{Function}}} & 
Function words & 
\tokenbox{~the}, \tokenbox{~and}, \tokenbox{~to} \\
\addlinespace[0.5em]

\legendbox{patternCapitalized}{\textcolor{white}{\textbf{Capitalized}}} & 
Capitalized words and acronyms & 
\tokenbox{~Denver}, \tokenbox{CBRN}, \tokenbox{~Enron} \\
\bottomrule
\end{tabular}
\vspace{0.5em}
\caption{Token pattern categories and their definitions. Throughout the text we apply the indicated colors to tokens that follow a particular pattern. Boxed tokens indicate the pattern being illustrated.}
\label{tab:token-patterns}
\end{table}

\section{UMAP}\label{appendix:umap}

The data matrix $X$ has $l \times h$ columns (where $l$ is the number of layers and $h$ the number of heads per layer) and approximately $780{,}000$ rows. Each row is the susceptibility vector $\chi_{xy}$ for a fixed neural network parameter $w$ where $(x,y) \sim q^l(x,y)$ as $1 \le l \le 13$ ranges over subsets of the Pile \citep{gao2020pile}: \pilesub{github-code}, \pilesub{pile-cc}, \pilesub{pubmed\_abstracts}, \pilesub{uspto\_backgrounds}, \pilesub{pubmed\_central}, \pilesub{stackexchange}, \pilesub{wikipedia\_en}, \pilesub{freelaw}, \pilesub{arxiv}, \pilesub{dm\_mathematics}, \pilesub{enron\_emails}, \pilesub{hackernews}, and \pilesub{nih\_exporter}. We sample $60{,}000$ token sequences from each dataset. The data matrix $X$ is standardized (that is, the columns have the mean subtracted and are rescaled to have unit standard deviation) before applying the UMAP algorithm.

\subsection{UMAP hyperparameters} \label{appendix:umap-hparams}

The UMAP algorithm depends fundamentally on the choice of \texttt{n\_neighbors} hyperparameter. 
The images in this paper were computed with \texttt{n\_neighbors} equal to $45$.
  
This hyperparameter governs how many nearest neighbors are taken into consideration when computing the local distances in the original embedding that the learned embedding tries to match. 
The value being too low can cause misleading clusters of data points in the visualization. 

\section{Larger Models}\label{appendix:larger-models}

\begin{table}[t]
    \centering
    \begin{tabular}{|c|c|c|}
        \hline
        \textbf{Model Name} & \textbf{Number of Components} & \textbf{Number of Clusters Found} \\
        \hline
\texttt{pythia-14m}  & 32 & 510 \\
\texttt{pythia-31m}  & 56 & 358 \\
\texttt{pythia-70m}  & 56 & 254 \\
\texttt{pythia-160m} & 158 & 311 \\
\texttt{pythia-410m} & 410 & 241 \\
\texttt{pythia-1.4b} & 410 & 249\\
        \hline
    \end{tabular}
    \vspace{0.5em}
    \caption{Pythia models from 14M-1.4B parameters and the results of clustering.}
    \label{larger-models-table}
\end{table}

We repeated this analysis on a collection of five more models with as many as 1.4B parameters (\cref{fig:pythia_family_umaps}). This list of models, as well as number of components per model and clusters found using the clustering algorithm in \cref{appendix:clustering} is shown in \cref{larger-models-table}.

As discussed in \cref{section:scaling}, clustering performed better on Pythia-14M than on the larger models. Nonetheless the clusters found in higher models were highly coherent. They almost all corresponded to observable, interpretable patterns in the data. However, there were fewer overall clusters, and most of the patterns found were already seen in the Pythia-14M clusters. We plan to improve our data collection and analysis methods and hope to achieve similar or higher quality cluster detection models of this scale. See \cref{appendix:compute} for details about the cost of computing susceptibilities on these models.


We took the clusters found for Pythia-14M and measured their conductance where the graph structure on the data is given by the susceptibility values found in the larger models instead, as shown in \cref{fig:conductance_scatter}.


\begin{figure}
    \centering
    \includegraphics[width=0.9\linewidth]{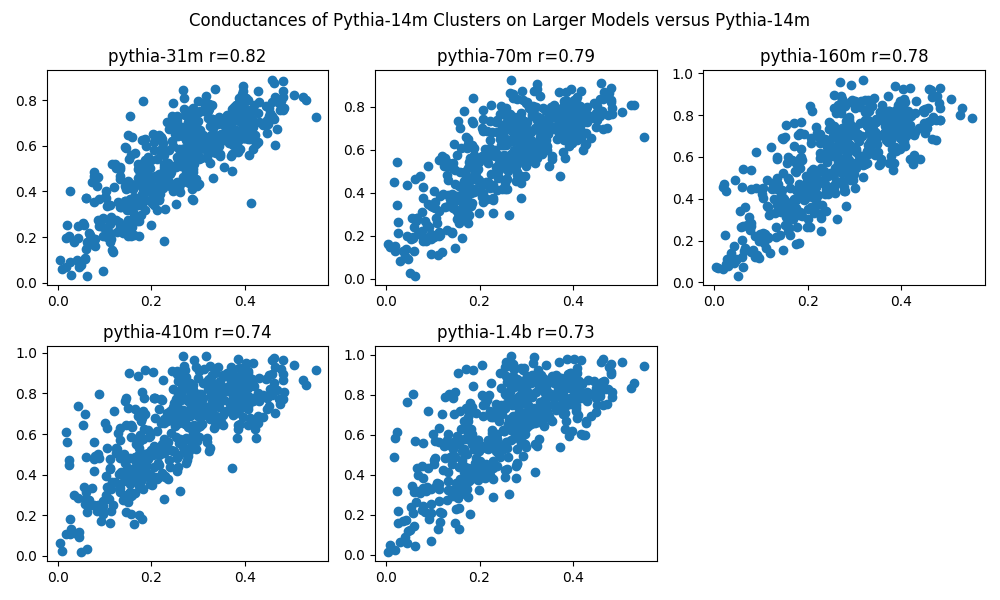}
    \caption{\textbf{Conductance of Pythia-14M clusters in larger models.} Each point represents one of the 510 clusters, with its conductance measured on the susceptibility distance graph of a larger Pythia model (y-axis) versus Pythia-14M (x-axis). Clusters with low conductance in both models correspond to patterns that persist across scale.}
    \label{fig:conductance_scatter}
\end{figure}

\begin{figure}[p]
    \centering
    \begin{tikzpicture}
        \def\imgwidth{0.48\textwidth}
        \def\imgheight{0.38\textwidth}
        \def\hgap{0.04\textwidth}
        \def\vgap{0.06\textwidth}

        \begin{scope}
            \clip[rounded corners=8pt] (0,0) rectangle (\imgwidth,0.9*\imgheight);
            \node[anchor=south west,inner sep=0] at (0,0)
                {\includegraphics[width=\imgwidth]{figures-new/pythia-14m/pythia14_img3.png}};
        \end{scope}
        \draw[white, line width=0.5pt, rounded corners=8pt] (0,0) rectangle (\imgwidth,\imgheight);
        \node[below=0.1cm, font=\small\bfseries] at (0.5*\imgwidth,0) {Pythia-14M};

        \begin{scope}[shift={(\imgwidth+\hgap,0)}]
            \clip[rounded corners=8pt] (0,0) rectangle (\imgwidth,0.9*\imgheight);
            \node[anchor=south west,inner sep=0] at (0,0)
                {\includegraphics[width=\imgwidth]{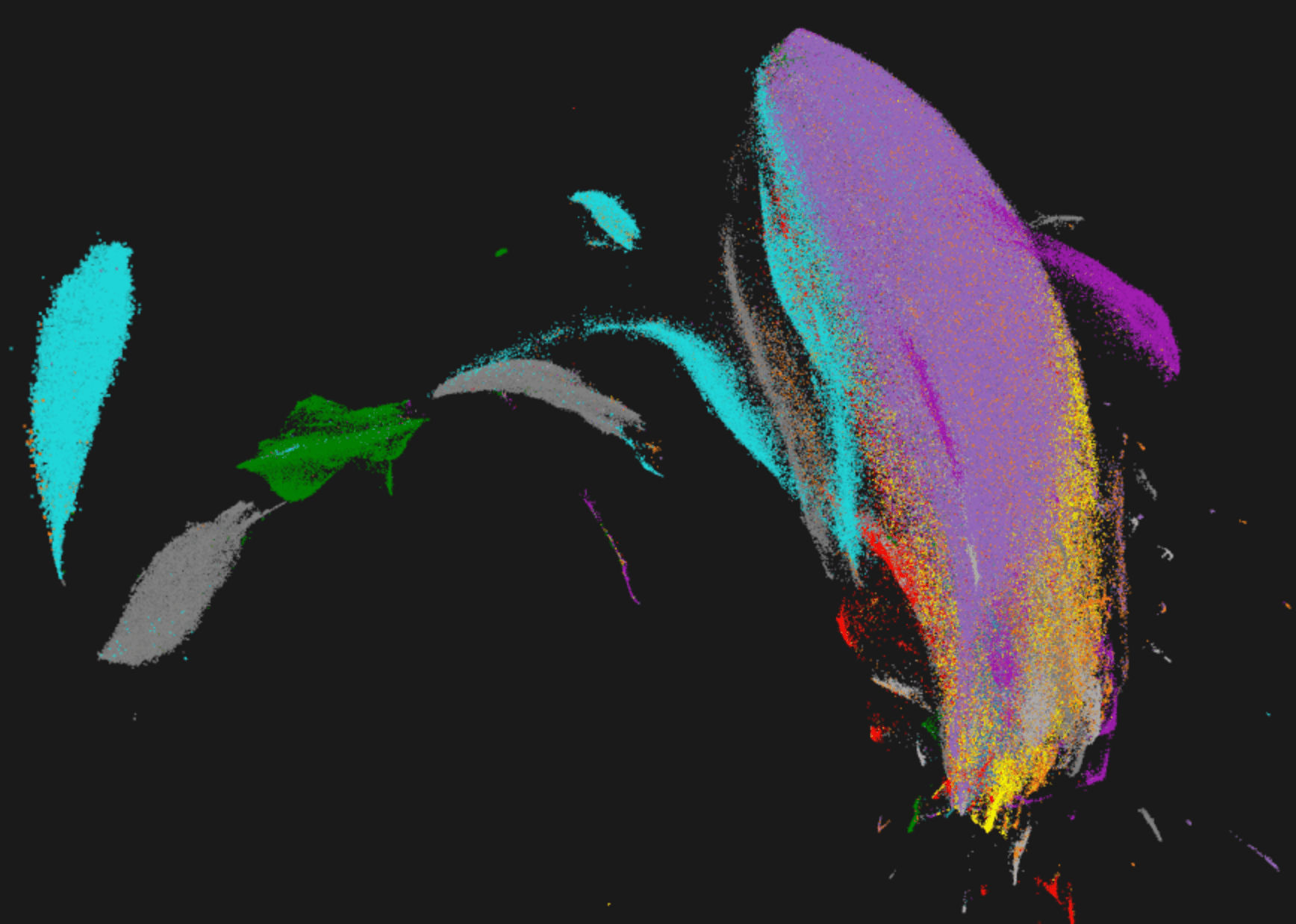}};
        \end{scope}
        \draw[white, line width=0.5pt, rounded corners=8pt] (\imgwidth+\hgap,0) rectangle (2*\imgwidth+\hgap,\imgheight);
        \node[below=0.1cm, font=\small\bfseries] at (1.5*\imgwidth+\hgap,0) {Pythia-31M};

        \begin{scope}[shift={(0,\imgheight+\vgap)}]
            \clip[rounded corners=8pt] (0,0) rectangle (\imgwidth,0.9*\imgheight);
            \node[anchor=south west,inner sep=0] at (0,0)
                {\includegraphics[width=\imgwidth]{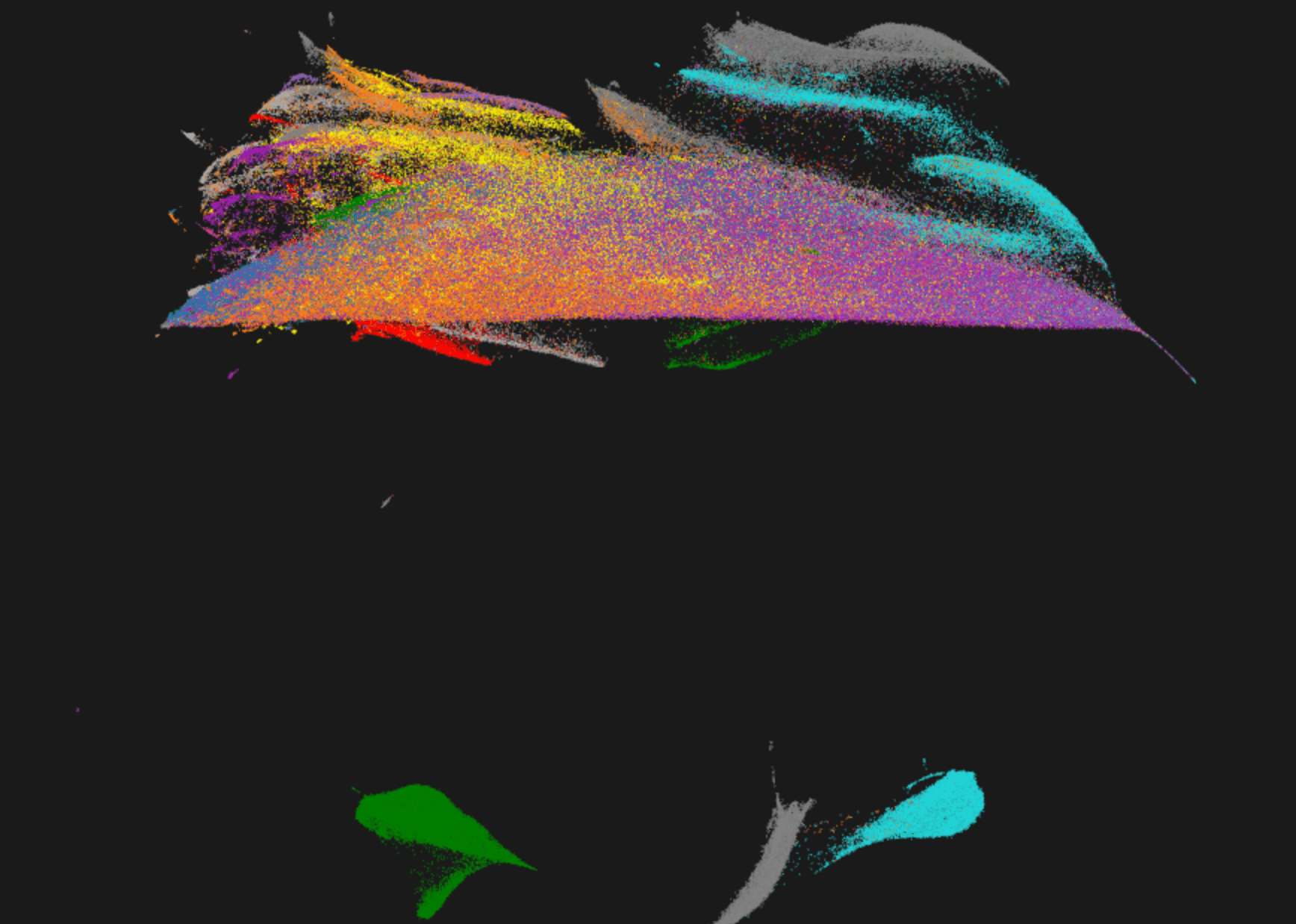}};
        \end{scope}
        \draw[white, line width=0.5pt, rounded corners=8pt] (0,\imgheight+\vgap) rectangle (\imgwidth,2*\imgheight+\vgap);
        \node[below=0.1cm, font=\small\bfseries] at (0.5*\imgwidth,\imgheight+\vgap) {Pythia-70M};

        \begin{scope}[shift={(\imgwidth+\hgap,\imgheight+\vgap)}]
            \clip[rounded corners=8pt] (0,0) rectangle (\imgwidth,0.9*\imgheight);
            \node[anchor=south west,inner sep=0] at (0,0)
                {\includegraphics[width=\imgwidth]{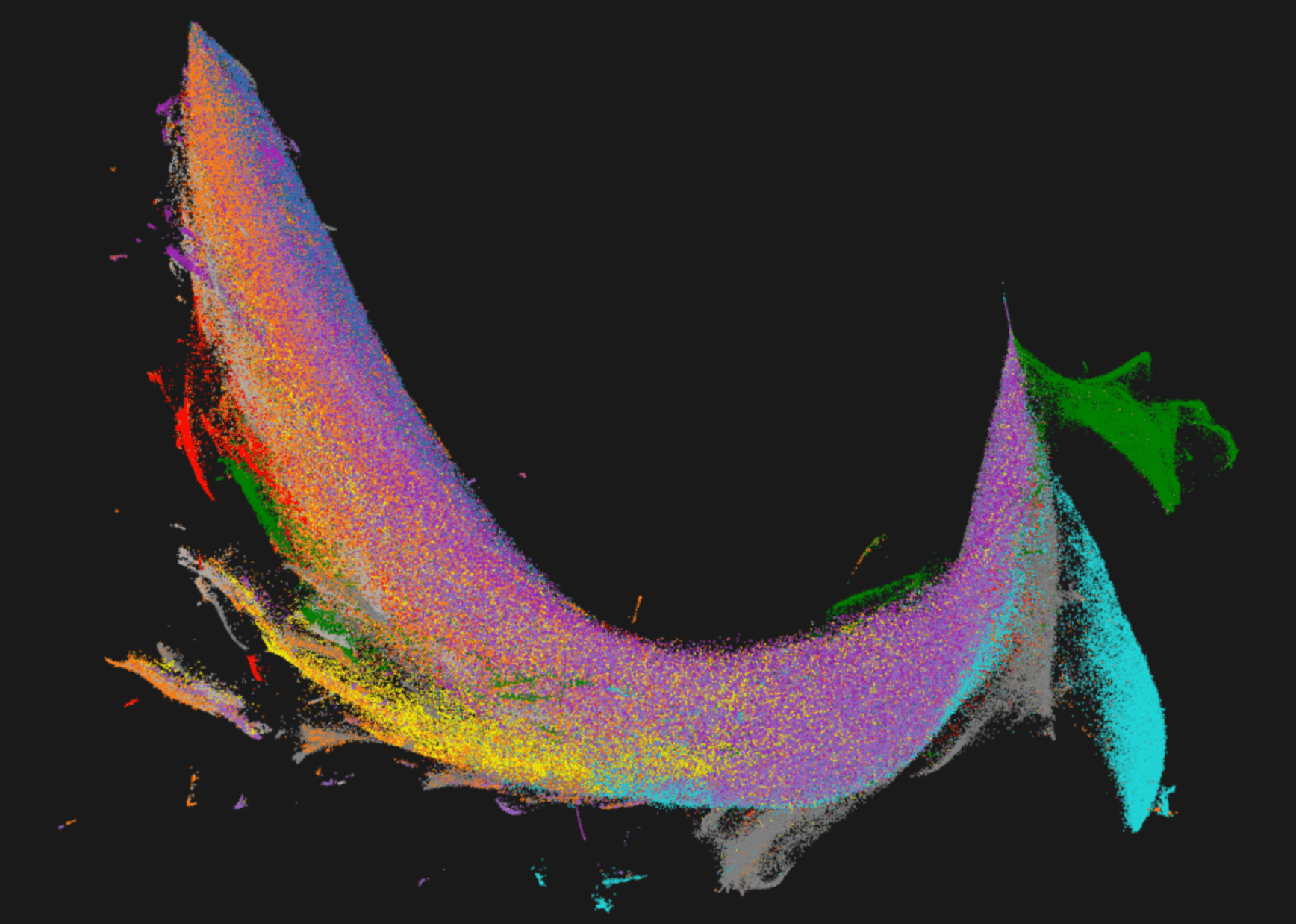}};
        \end{scope}
        \draw[white, line width=0.5pt, rounded corners=8pt] (\imgwidth+\hgap,\imgheight+\vgap) rectangle (2*\imgwidth+\hgap,2*\imgheight+\vgap);
        \node[below=0.1cm, font=\small\bfseries] at (1.5*\imgwidth+\hgap,\imgheight+\vgap) {Pythia-160M};

        \begin{scope}[shift={(0,2*\imgheight+2*\vgap)}]
            \clip[rounded corners=8pt] (0,0) rectangle (\imgwidth,0.9*\imgheight);
            \node[anchor=south west,inner sep=0] at (0,0)
                {\includegraphics[width=\imgwidth]{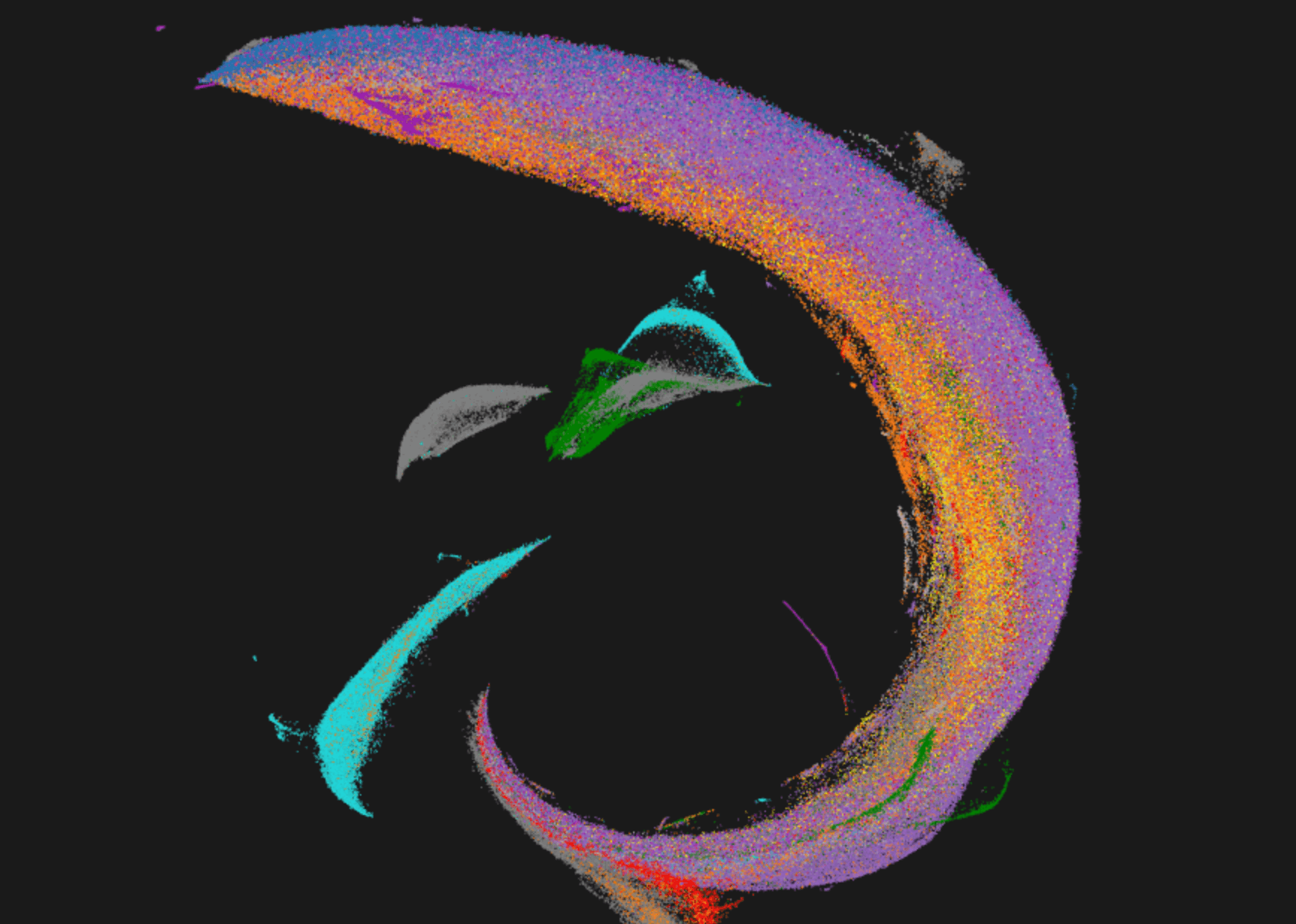}};
        \end{scope}
        \draw[white, line width=0.5pt, rounded corners=8pt] (0,2*\imgheight+2*\vgap) rectangle (\imgwidth,3*\imgheight+2*\vgap);
        \node[below=0.1cm, font=\small\bfseries] at (0.5*\imgwidth,2*\imgheight+2*\vgap) {Pythia-410M};

        \begin{scope}[shift={(\imgwidth+\hgap,2*\imgheight+2*\vgap)}]
            \clip[rounded corners=8pt] (0,0) rectangle (\imgwidth,0.9*\imgheight);
            \node[anchor=south west,inner sep=0] at (0,0)
                {\includegraphics[width=\imgwidth]{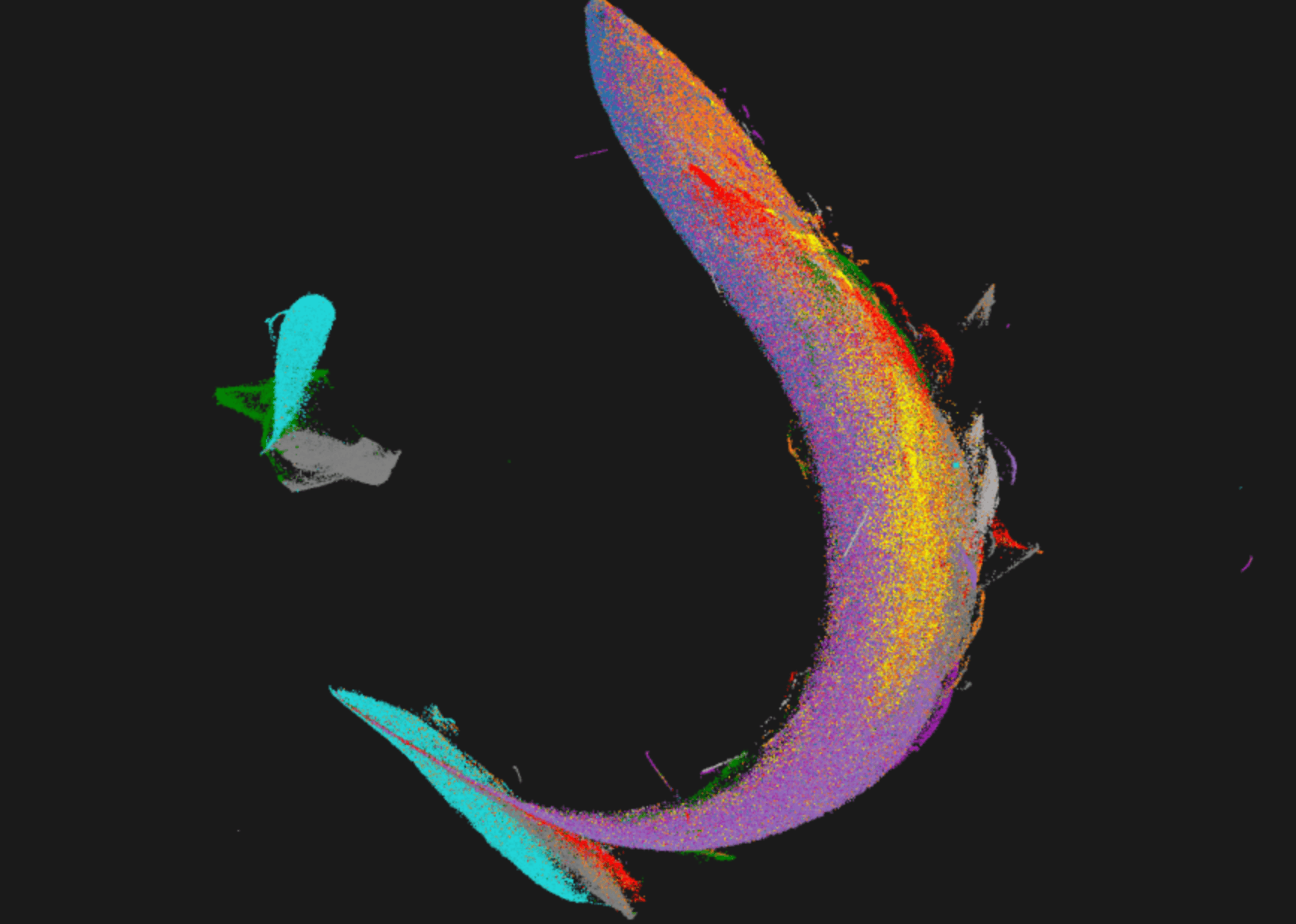}};
        \end{scope}
        \draw[white, line width=0.5pt, rounded corners=8pt] (\imgwidth+\hgap,2*\imgheight+2*\vgap) rectangle (2*\imgwidth+\hgap,3*\imgheight+2*\vgap);
        \node[below=0.1cm, font=\small\bfseries] at (1.5*\imgwidth+\hgap,2*\imgheight+2*\vgap) {Pythia-1.4B};
    \end{tikzpicture}

    \caption{\textbf{UMAP visualizations of susceptibility vectors across the Pythia family.} Each panel shows the low-dimensional representation of susceptibility vectors for the same set of token sequences, computed from models ranging from 14M to 1.4B parameters. Points are colored by token pattern type. The overall structure -- including the separation of spacing tokens, the clustering of similar patterns, and the broad organization into regions -- persists across model scales, though the precise geometry varies.}
    \label{fig:pythia_family_umaps}
\end{figure}

\section{Additional Linked Clusters}\label{appendix:additional_linked_clusters}

In \cref{section:networks_linked} we introduced the notion of \emph{linked clusters}: clusters that correspond to modes at different context lengths which are related because the reason $y$ follows $x$ depends on earlier structure in the context. The main text presents the HTML tag network as a detailed example. Here we provide additional examples from code, mathematics, and scientific typesetting. These networks illustrate how the model develops coordinated cluster structure to handle sequential, nested, or multi-phase patterns in the training data.

\paragraph{Code Block network}
Similarly, the model has distinct clusters related to nesting depth and scope of programming logic (specifically in C-style languages). The sequence often starts with conditional keywords like \tok{if (} (\clu{97}). The opening brace \tok{\{} marks the start of the block (\clu{86}, \cluq{267}, \cluq{337}). The structure of the body of the code block is governed by clusters such as 4-space indent (\clu{106}) versus deep 8-space indent (\clu{17}) and more subtle spacing (\clu{48}, \cluq{60}, \cluq{103}, \cluq{340}, \cluq{392}, \cluq{422}, \cluq{449}).

Note that in both cases the primary role of the clusters seems to be ``structural'' in the sense that it is the \emph{form} rather than the \emph{content} of both HTML tags and code blocks that seems to attract dedicated structure.

\subsubsection{Mathematical Reasoning and Typesetting}

\paragraph{Math Problem Network}
The \tok{dm\_mathematics} dataset consists of contexts with highly repetitive question formats generated by an algorithm, and provides a rich source of patterns for language models to learn. The model appears to treat math problems as a three-act structure. The problem begins with clusters for definition keywords like \tok{Let} or \tok{Suppose} (\textbf{C333, 386, 458}), followed by variable assignments like ``Let $x$ be...'' (\textbf{C70, 129, 194, 360, 428, 476}). The model pivots to the question phase with imperative clusters like \tok{Calculate} (\textbf{C438, 439}) or \tok{What is} (\textbf{C169}). The sequence concludes with clusters specialized for answer formats, such as boolean \tok{True}/\tok{False} tokens (\textbf{C250}).

\paragraph{Multiple Choice Network}
A network of clusters seems related to the patterns involved in the rigid formatting of multiple-choice options, ensuring consistent delimiters and spacing. Options are enclosed by specific opening parentheses \tok{(} (\textbf{C241, 316}) and closing parentheses \tok{)} (\textbf{C37, 437}). The labels themselves form a sequence, with specific clusters for \tok{a} (\textbf{C285}) and \tok{b} (\textbf{C315}). A unique cluster is dedicated to the double-space \tok{~~} separator (\textbf{C100, 264}) that visually isolates options from one another.

\paragraph{LaTeX Math Mode Network}
For scientific typesetting, the model toggles between text mode and math mode using dedicated clusters. The dollar sign delimiter \tok{\$} is handled by \textbf{C49} (opening) and \textbf{C68} (closing). Within math mode, clusters track backslash commands like \tok{\textbackslash frac} (\textbf{C150}) and the mandatory opening brace \tok{\{} that follows them (\textbf{C173, 249}).

\section{SAE Comparison}\label{section:sae-methodology}

To validate that our susceptibility clusters capture structure also identified by other interpretability methods, we compare them to sparse autoencoder (SAE) features. We evaluate pre-trained SAEs from \citet{lan2025sparse} on Pythia-70M against the token-in-context examples that populate each cluster. We use Pythia-70M because pre-trained SAEs were not available for Pythia-14M; the conductance analysis in \cref{section:scaling} provides evidence that Pythia-14M clusters remain coherent in larger models. The SAEs were trained on the residual stream activations and are available for each layer; we use layers $2$--$4$ (middle layers typically capture the most interpretable features, and we use three layers for redundancy).

The SAEs follow a standard architecture with pre-centering. Given a residual stream activation $a \in \mathbb{R}^{d}$ where $d = 512$ for Pythia-70M, the encoder produces a sparse latent representation $z \in \mathbb{R}^{D}$ with $D = 32768$ latent dimensions:
\begin{align}
z &= \mathrm{ReLU}\big( W_{\mathrm{enc}} (a - b_{\mathrm{dec}}) + b_{\mathrm{enc}} \big) \label{eq:sae-encode}
\end{align}
where $W_{\mathrm{enc}} \in \mathbb{R}^{D \times d}$ is the encoder weight matrix, $b_{\mathrm{enc}} \in \mathbb{R}^D$ is the encoder bias, and $b_{\mathrm{dec}} \in \mathbb{R}^d$ is the decoder bias which serves as a pre-centering term. The decoder reconstructs the input as
\begin{align}
\hat{a} &= W_{\mathrm{dec}} z + b_{\mathrm{dec}} \label{eq:sae-decode}
\end{align}
where $W_{\mathrm{dec}} \in \mathbb{R}^{d \times D}$. The ReLU activation in \eqref{eq:sae-encode} induces sparsity in $z$, with each component $z_i$ corresponding to a learned ``feature.''

For each context belonging to a susceptibility cluster, we extract a window of 60 tokens centered at the target position. Specifically, if the target token $y$ appears at position $p$ in the full context, we extract tokens from position $\max(0, p - 30)$ to $\min(L, p + 30)$ where $L$ is the sequence length. We then run Pythia-70M on this window and extract residual stream activations at each position, obtaining SAE latent activations $z^{(t)} \in \mathbb{R}^D$ for each token position $t$ in the window.

\begin{figure}[t]
      \centering
      \includegraphics[width=0.8\textwidth]{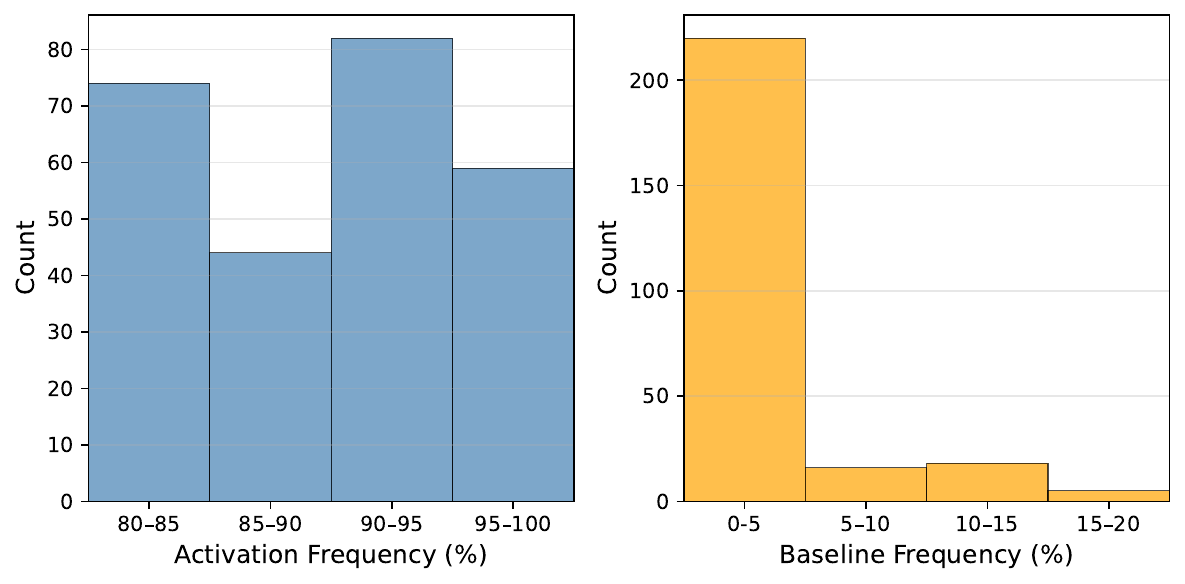}
      \caption{\textbf{Distribution of activation and baseline frequencies for matched SAE features}.
      \emph{Left}: Activation frequencies for 259 matched features.
      \emph{Right}: Baseline frequencies for the same features when measured across random clusters,
      demonstrating specificity with most baselines below 5\%.
      Matched features are defined as those with activation frequency $\geq 80\%$.}
      \label{fig:activation_frequencies}
  \end{figure}

\begin{figure}[p]
    \centering
    \includegraphics[width=\textwidth]{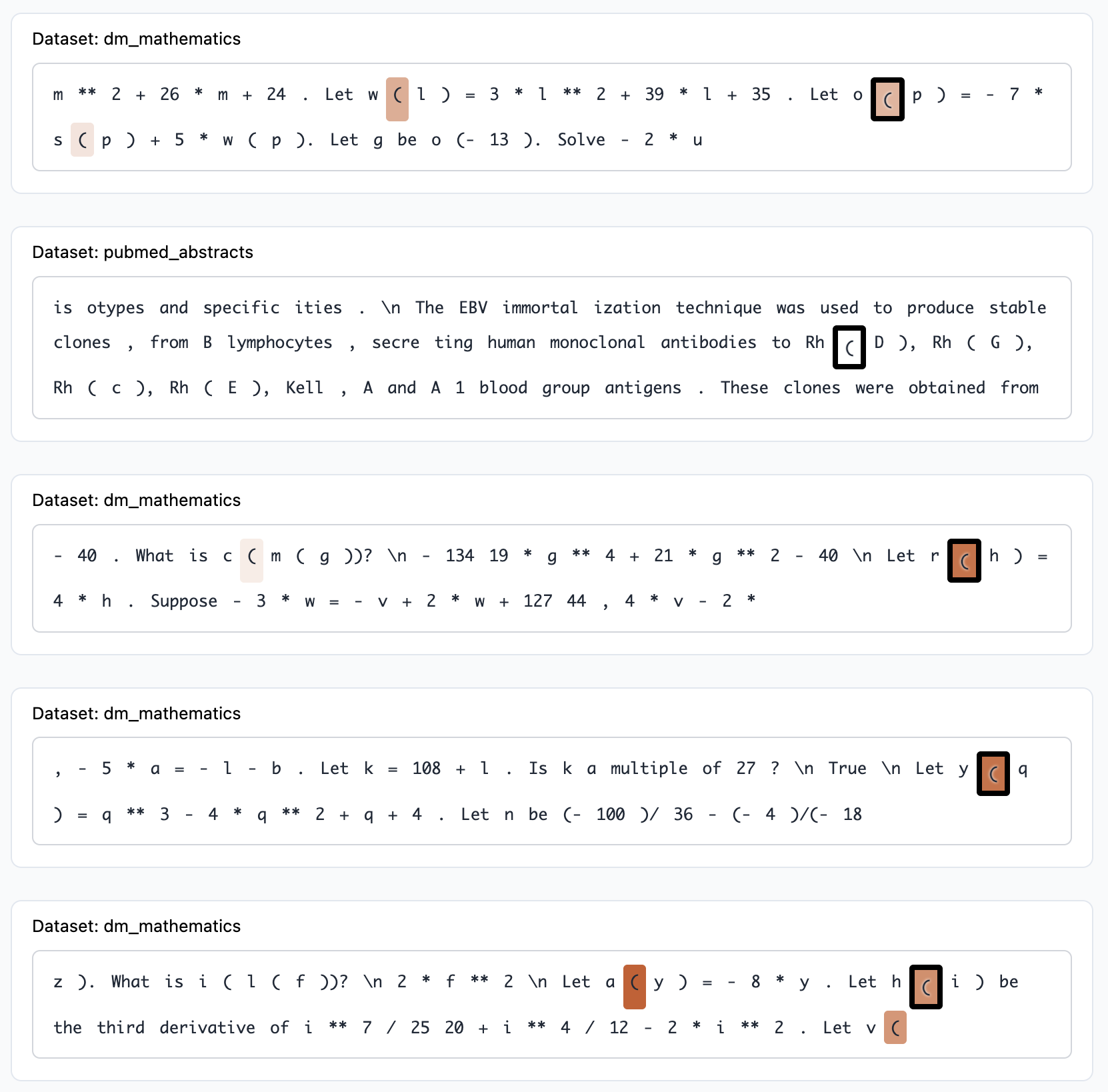}
    \vspace{1em}
    \caption{\textbf{Matching SAE for cluster \clu{95}}. Shown are five contexts from across \pilesub{dm\_mathematics} and \pilesub{pubmed\_abstracts} appearing in \clu{95}, with the outlined token being $y$. We show the activation of SAE feature 834 from layer $3$ of Pythia-70M, which is strongly activated on these $y$ tokens. The activation ranges from $0.0$ (white) to $6.33$ (orange). The label for this cluster is: opening parenthesis \tok{(} after function names in \dataset{dm\_mathematics} algebra problems. The activation frequency of this feature on this cluster is 93.3\% and baseline frequency 4.3\%.}
    \label{fig:cluster_95_matching}
\end{figure}

\begin{figure}[p]
    \centering
    \includegraphics[width=\textwidth]{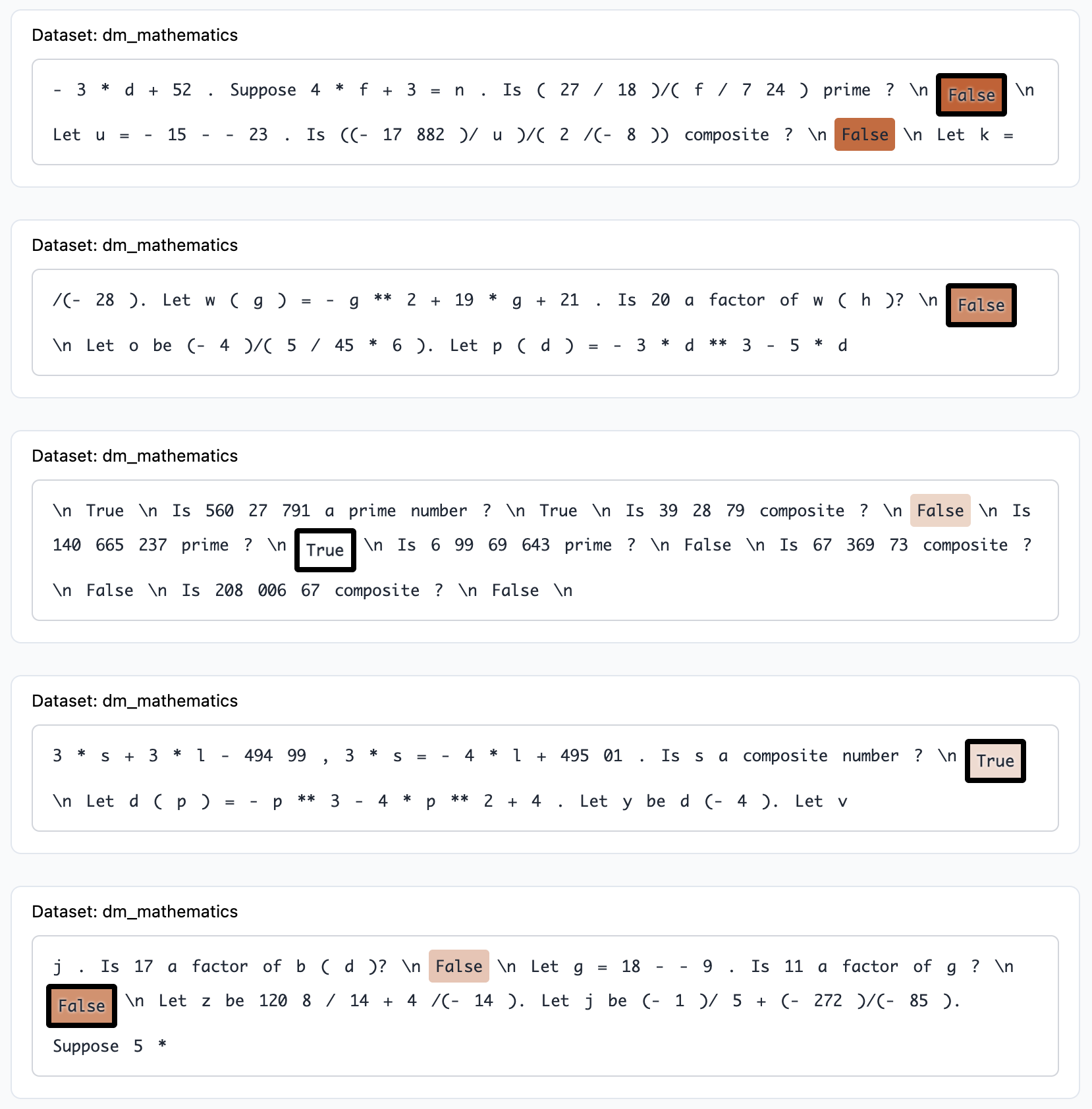}
    \vspace{1em}
    \caption{\textbf{Matching SAE for cluster \clu{250}}. Shown are five contexts from across \pilesub{dm\_mathematics} appearing in \clu{250}, with the outlined token being $y$. We show the activation of SAE feature 16089 from layer $4$ of Pythia-70M, which is strongly activated on these $y$ tokens. The activation ranges from $0.0$ (white) to $5.95$ (orange). The label for this cluster is: \tok{True}/\tok{False} boolean answers after newlines in \dataset{dm\_mathematics} math problems. The activation frequency of this feature on this cluster is 90\% and baseline frequency 0.2\%.}
    \label{fig:cluster_250_matching}
\end{figure}

To determine whether an SAE feature $i$ ``matches'' a susceptibility cluster, we introduce a binarized activation metric. For a given context, let $z_i^{(t)}$ denote the activation of feature $i$ at position $t$, and let $z_i^{(*)}$ denote its activation at the target token position. We compute:
\begin{enumerate}
    \item The 90th percentile $\tau_{90}$ of $\{z_i^{(t)}\}_{t=1}^{T}$ across all positions in the window
    \item An effective threshold $\tau = \max(\tau_{90}, 0.1)$
    \item A binary indicator $\mathbf{1}[z_i^{(*)} \geq \tau]$
\end{enumerate}
The \emph{activation frequency} of feature $i$ for a cluster $\mathcal{C}$ is then the fraction of contexts in $\mathcal{C}$ for which this indicator equals 1:
\begin{equation}
\mathrm{ActFreq}_i(\mathcal{C}) = \frac{1}{|\mathcal{C}|} \sum_{(x,y) \in \mathcal{C}} \mathbf{1}\big[z_i^{(*)}(x,y) \geq \tau(x,y)\big]\,.
\end{equation}
Intuitively, this measures how often feature $i$ is ``unusually active'' (relative to other positions in the same context) precisely at the target token. We say the SAE feature $i$ is a \emph{match} for cluster $\mathcal{C}$ if $\mathrm{ActFreq}_i(\mathcal{C}) \geq 0.8$.

To assess whether a matching feature is specific to a cluster rather than generically active, we compute a baseline activation frequency. For a given feature $i$ identified as matching cluster $\mathcal{C}$, we sample 20 clusters $\mathcal{C}_1, \ldots, \mathcal{C}_{20}$ uniformly at random from all clusters excluding $\mathcal{C}$. For each baseline cluster, we sample up to 30 contexts $\mathcal{C}'_j \subseteq \mathcal{C}_j$ and compute the same binarized activation measure. The baseline is then:
\begin{equation}
\mathrm{Baseline}_i = \frac{1}{\sum_j |\mathcal{C}'_j|} \sum_{j=1}^{20} \sum_{(x,y) \in \mathcal{C}'_j} \mathbf{1}\big[z_i^{(*)}(x,y) \geq \tau(x,y)\big]
\end{equation}
A feature with high activation frequency on its matched cluster but low baseline frequency provides evidence that the SAE has learned a representation corresponding to the same pattern captured by the susceptibility cluster. 

For each cluster, we analyze up to 30 sampled contexts and rank SAE features by their average activation at the target position. From the top-ranked features, we identify the first (if any) with activation frequency $\geq 0.8$ as the candidate match, and compute its baseline.

We find that out of 510 susceptibility clusters, 259 (50.8\%) have a matched SAE feature. Almost all the baseline frequencies for our matched SAE features are less than 5\% (\cref{fig:activation_frequencies}).

\clearpage

\newpage

\section{Gaussian Posterior Susceptibilities}\label{appendix:gaussian-posterior}

The SLT-based approach to interpretability developed in \citet{baker2025studyingsmalllanguagemodels,wang2025lang2pt5} and this paper rests on a key hypothesis: that the internal structure of neural networks is encoded in the geometry of the loss landscape, and can be accessed by estimating posterior expectation values. It is therefore crucial that our methods actually probe the \emph{posterior} -- which is shaped by the loss landscape -- rather than just a generic distribution around the trained weights.

In this appendix we test this by comparing susceptibilities to a simpler baseline that uses Gaussian noise instead of posterior samples. If the Gaussian baseline produced similar results, it would undermine the claim that loss landscape geometry matters; the fact that it performs substantially worse validates that the posterior structure is doing real work.

The per-token \emph{Gaussian Posterior Susceptibility} of a component $C$ for a token context pair $(x,y)$ is
\begin{equation}\label{GSP-def}
-\text{Cov}_{N(w^*, \lambda I)}\left[\phi_C, \ell_{xy}(w) - L(w)\right].
\end{equation}

This is identical to susceptibilities as usually defined, but the role of the quenched posterior has been replaced with a Gaussian distribution with uniform covariance.

Gaussian posterior susceptibilities are easily computed by modifying the experiment details in section \ref{appendix:susceptibilities-hparams} to have $n\beta=0$. They have a straightforward interpretation as a tool of mechanistic interpretability. Up to scale and constant term \eqref{GSP-def} measures the covariance between $\ell_{xy}$ and $L$ when the weights in component $C$ are perturbed by Gaussian noise.

We contrast the approach taken in the main text with this one by computing Gaussian posterior susceptibilities for Pythia-14M on the same set of data points, and repeating our analysis. \cref{fig:apostrophe_clusters_nbeta0} shows these new susceptibilities.
\begin{figure}
    \centering
    \includegraphics[width=0.9\linewidth]{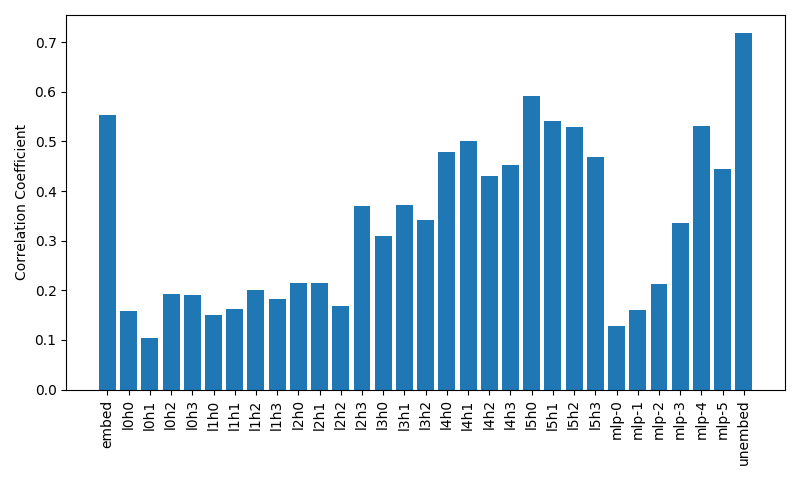}
    \caption{\textbf{Component-wise correlation between susceptibility types.} Pearson correlation between Gaussian posterior and usual susceptibilities, computed separately for each of the 32 components.}
    \label{fig:nb0_corr}
\end{figure}
We observe low to moderate correlation between the two datasets, as described in Figure \ref{fig:nb0_corr}.
\begin{figure}
    \centering
    \includegraphics[width=0.9\linewidth]{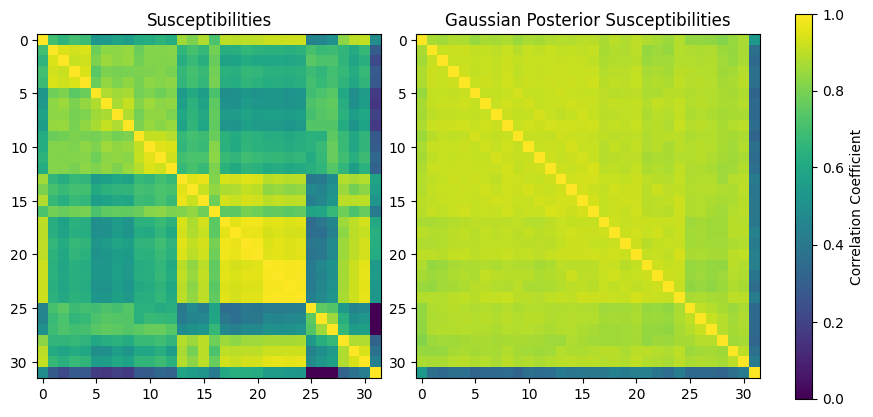}
    \caption{\textbf{Inter-component correlation structure.} Correlation matrices for usual susceptibilities (left) and Gaussian posterior susceptibilities (right) across the 32 components. The higher off-diagonal correlations on the right indicate that Gaussian posterior susceptibilities are less sensitive to differences between components.}
    \label{fig:nb0_corr_heatmap}
\end{figure}
We also observe that the component susceptibilities for Gaussian posterior susceptibilities are extremely well correlated with each other, as visible in Figure \ref{fig:nb0_corr_heatmap}. This is also seen in PCA analysis. The top PC for Pythia-14M susceptibility data explains 71\% of the variance, while the top PC for Gaussian posterior susceptibilities on the same dataset explains 87\% of the variance.

From this, we conclude that Gaussian posterior susceptibilities are \emph{meaningfully different} from susceptibilities done using the localized Gibbs posterior, and in many ways inferior.  The high inter-component correlations suggest that the Gaussian posterior susceptibilities are less sensitive to difference between components.

This affects cluster analysis as well, since the clustering algorithm implicitly depends on the Euclidean metric on susceptibility space, having multiple highly correlated coefficients can distort distances between points and make low conductance sets harder to identify.

Running the clustering algorithm on the Gaussian posterior susceptibilities for Pythia-14M identified 288 clusters, compared to the 510 found for the normal data.  There isn't a clear metric for cluster quality, but clusters found on normal susceptibility data were $72\%$ larger on average (mean size $976$ vs $567$). Further the clusters for normal susceptibilities contained, on average,  more diverse sets of tokens as measured by entropy. The clusters found on normal data had an average token entropy $31\%$ higher ($1.23$ vs $0.94$).

In \cref{fig:apostrophe_clusters_nbeta0} we see that in contrast to \cref{fig:apostrophe_clusters} there is no separation of tokens containing double quotes into opening and closing clusters. Nor do these tokens appear to have any particular organization in the body of the UMAP.

In \cref{fig:bracket_clusters_nbeta0} we compare the susceptibility data with the original hyperparameters, which we refer to as $n \beta > 0$, to that for $n \beta = 0$. We perform this comparison for sequences $xy$ where $y$ contains ``<'' when decoded. In the $n \beta > 0$ data we see a separation into two regions (A) for \tok{<} from opening tags (\clu{18}, \cluq{151}, \cluq{242}, \cluq{506}) and (B) for \tok{</} from closing tags (\clu{190},\cluq{266}). In the $n \beta = 0$ data there is a single \tok{</} cluster (\textbf{D87}) but no \tok{<} cluster for opening HTML tags, which is consistent with the more uniform appearance of these tokens in the susceptibility UMAP.

\begin{figure}[p]
    \centering
    \begin{tikzpicture}
        \begin{scope}
            \clip[rounded corners=8pt] (0,0) rectangle (0.48\textwidth,0.25\textwidth);
            \node[anchor=south west,inner sep=0] (image1) at (0,0) 
                {\includegraphics[width=0.48\textwidth]{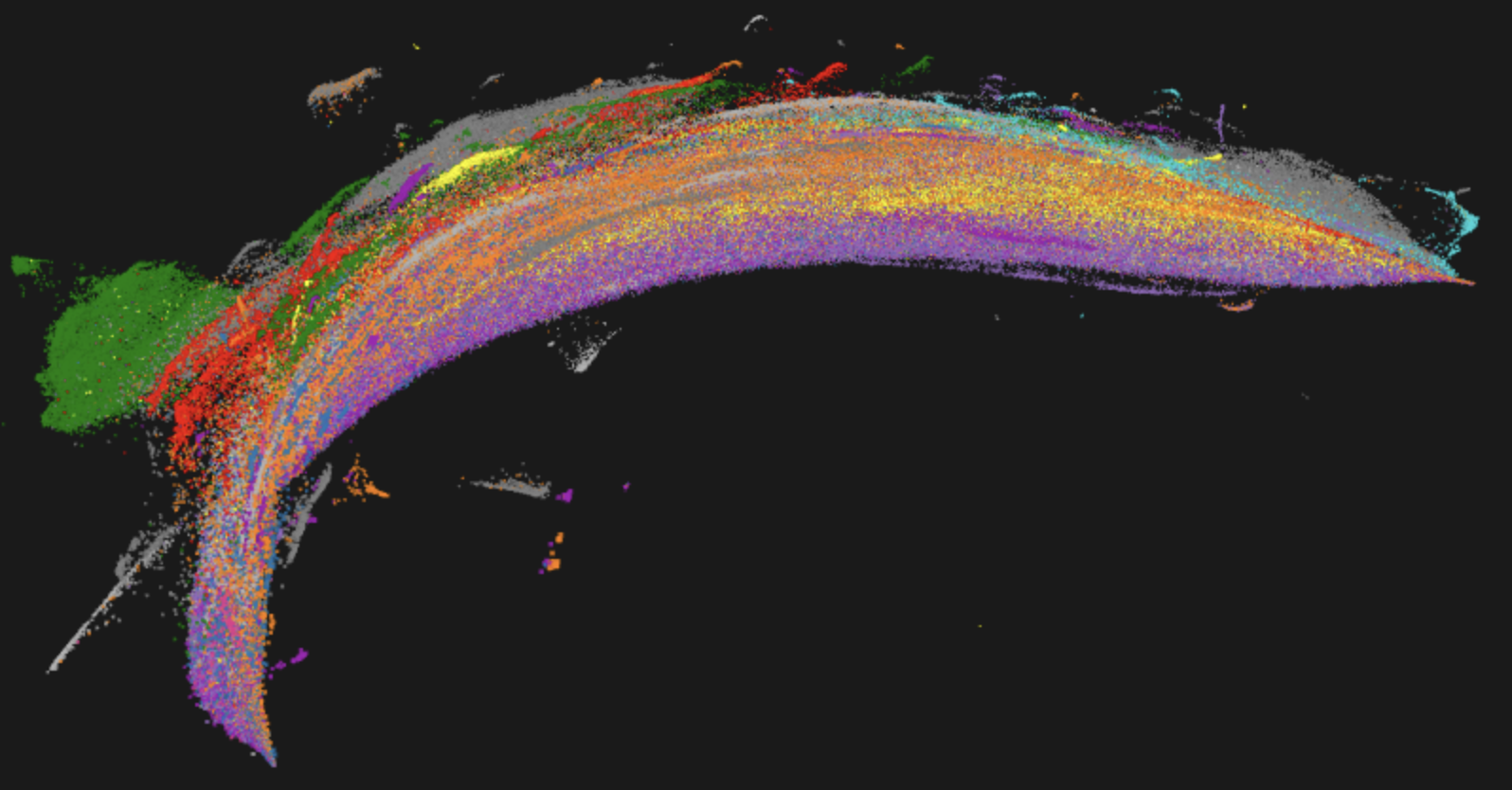}};
        \end{scope}
        \draw[white, line width=0.5pt, rounded corners=8pt] (0,0) rectangle (0.48\textwidth,0.25\textwidth);
        
        \begin{scope}[shift={(0.52\textwidth,0)}]
            \clip[rounded corners=8pt] (0,0) rectangle (0.48\textwidth,0.25\textwidth);
            \node[anchor=south west,inner sep=0] (image2) at (0,0) 
                {\includegraphics[width=0.48\textwidth]{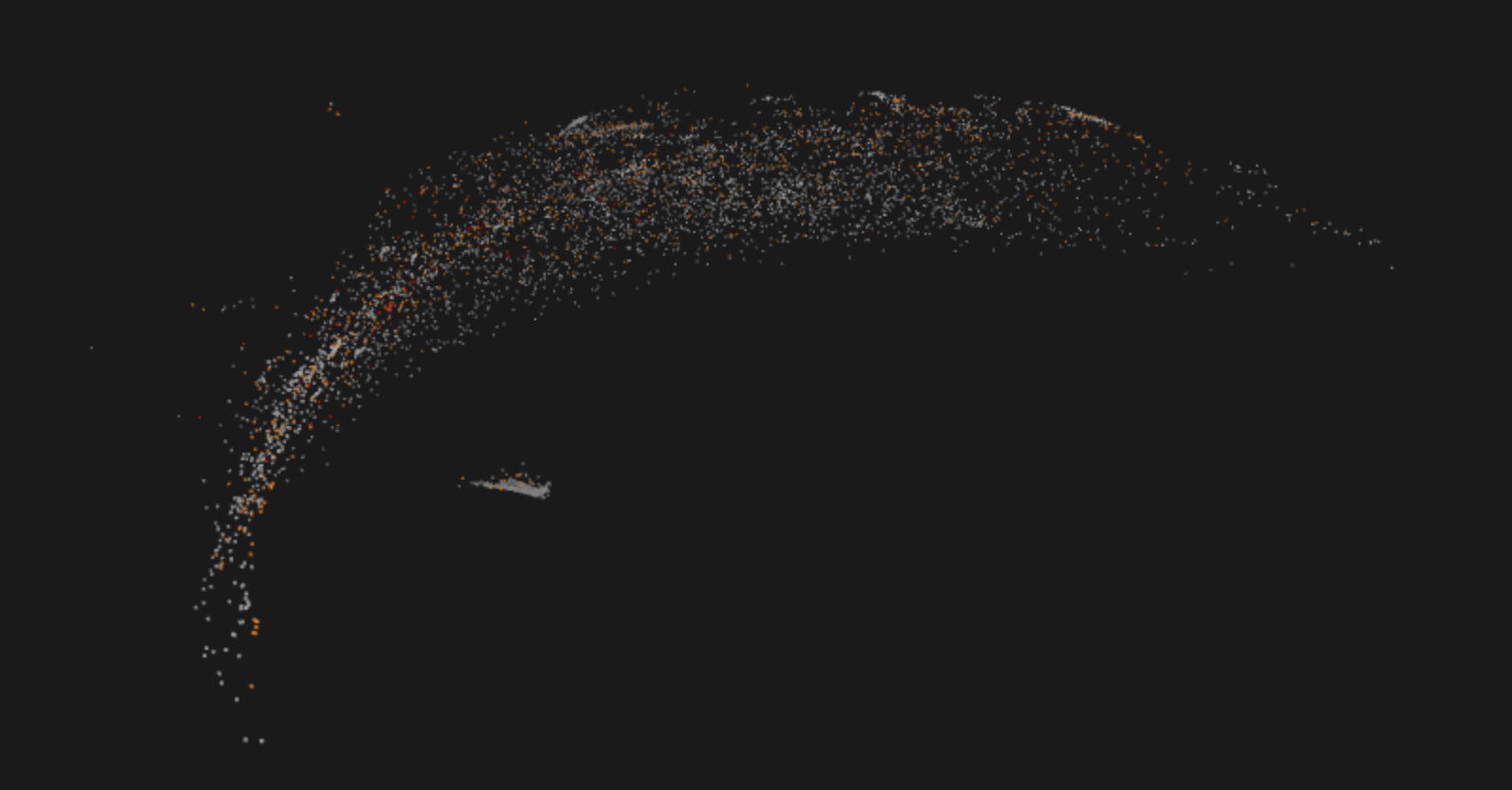}};
        \end{scope}
        \draw[white, line width=0.5pt, rounded corners=8pt] (0.52\textwidth,0) rectangle (\textwidth,0.25\textwidth);

        \begin{scope}[shift={(0.52\textwidth,0)}]
            \pgfmathsetmacro{\imgw}{0.48*\textwidth}
            \pgfmathsetmacro{\imgh}{0.25*\textwidth}
            
            \node[white, font=\bfseries\small] (A) at (0.5*0.48\textwidth,0.2*0.25\textwidth) {(A)};
            \draw[->,thick,white] (A) -- (0.35*0.48\textwidth,0.35*0.25\textwidth);
        \end{scope}
        
        \node[below=0.1cm, font=\small] at (0.24\textwidth,0) {All tokens};
        \node[below=0.1cm, font=\small] at (0.76\textwidth,0) {Double quote tokens};
        
    \end{tikzpicture}
    
\vspace{0.2em}

    \caption{
        \textbf{Double quote token clusters in $n \beta = 0$ data.} 
        \emph{Left:} Full low-dimensional representation of susceptibility vectors computed from Pythia-14M, with points colored by pattern type. 
        \emph{Right:} Filtered view showing only tokens which, when decoded, contain double quotes (e.g. \tok{"}, \tok{."}). The cluster (A) consists of \tok{="} tokens. We do not see separate clusters for \tok{"} with the sense ``opening'' vs ``closing'' in this data. Compare with \cref{fig:apostrophe_clusters}.
    }
    \label{fig:apostrophe_clusters_nbeta0}
\end{figure}

\begin{figure}[p]
    \centering
    \begin{tikzpicture}
        \begin{scope}
            \clip[rounded corners=8pt] (0,0) rectangle (0.48\textwidth,0.35\textwidth);
            \node[anchor=south west,inner sep=0] (image1) at (0,0) 
                {\includegraphics[width=0.48\textwidth]{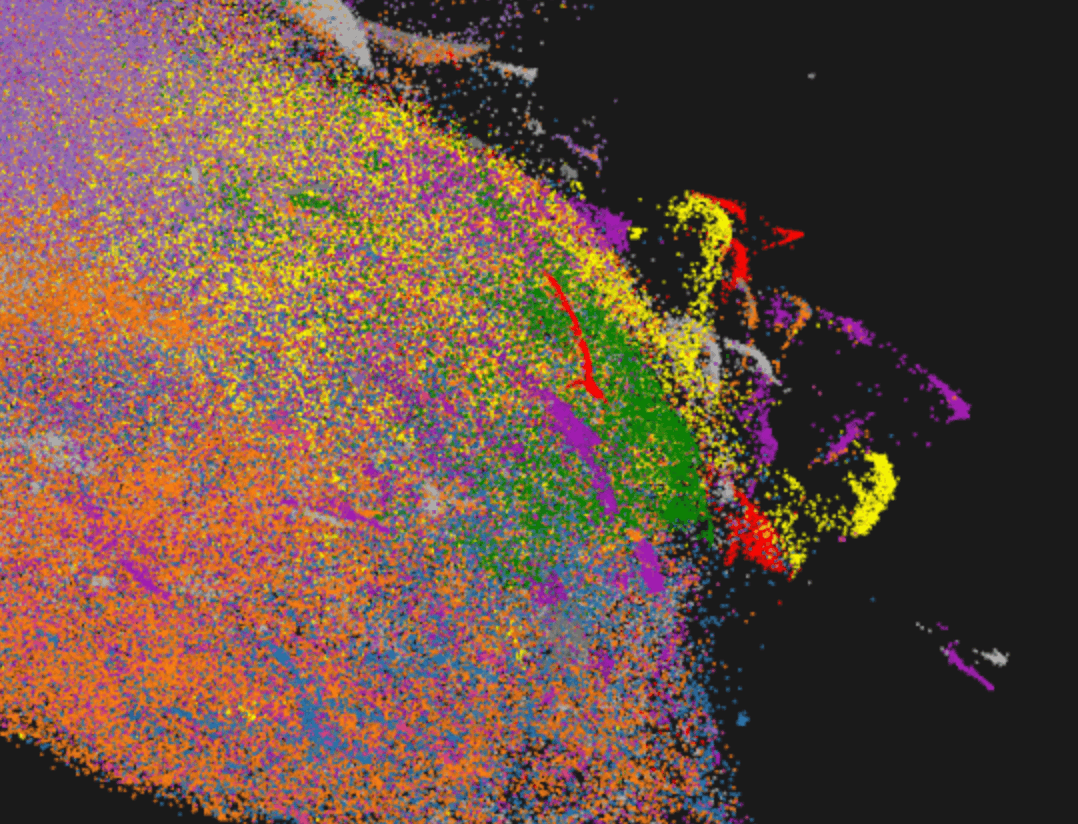}};
        \end{scope}
        \draw[white, line width=0.5pt, rounded corners=8pt] (0,0) rectangle (0.48\textwidth,0.35\textwidth);
        
        \begin{scope}[shift={(0.5\textwidth,0)}]
            \clip[rounded corners=8pt] (0,0) rectangle (0.48\textwidth,0.35\textwidth);
            \node[anchor=south west,inner sep=0] (image2) at (0,0) 
                {\includegraphics[width=0.48\textwidth]{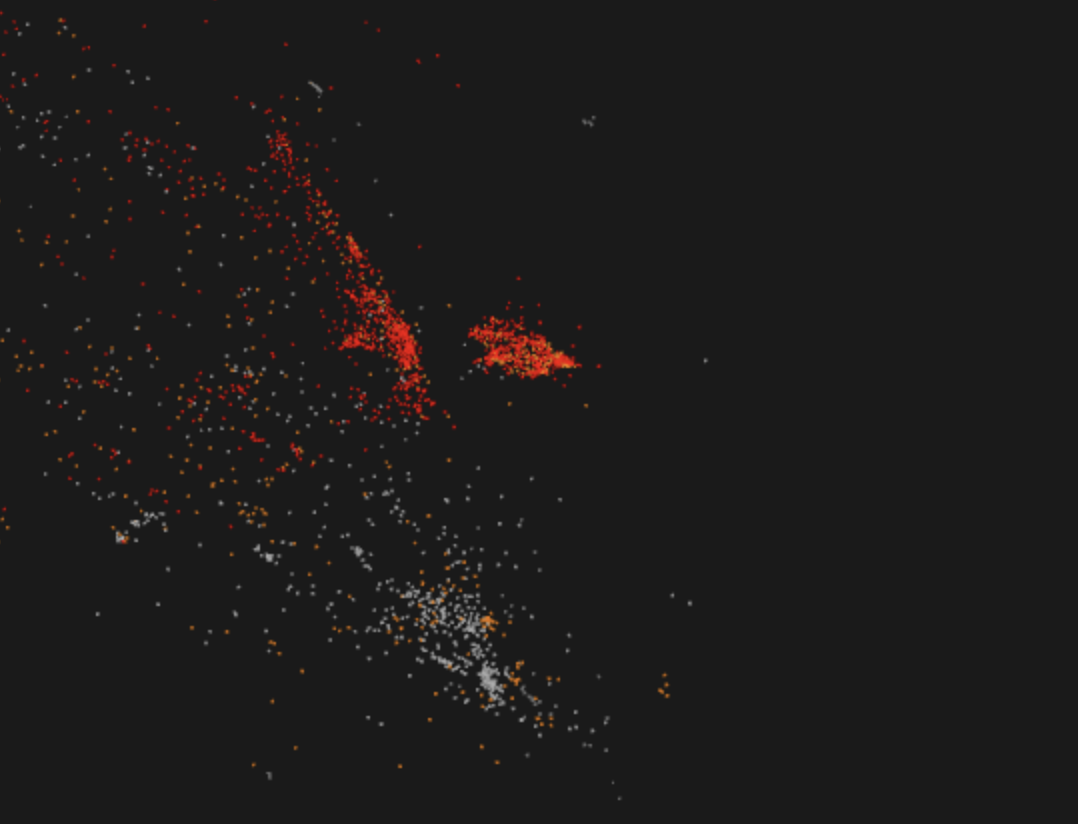}};
        \end{scope}
        \draw[white, line width=0.5pt, rounded corners=8pt] (0.5\textwidth,0) rectangle (\textwidth,0.35\textwidth);

        \begin{scope}[shift={(0.5\textwidth,0)}]
            \pgfmathsetmacro{\imgw}{0.48*\textwidth}
            \pgfmathsetmacro{\imgh}{0.35*\textwidth}
            
            \node[white, font=\bfseries\small] (A) at (0.5*0.48\textwidth,0.8*0.35\textwidth) {(A)};
            \draw[->,thick,white] (A) -- (0.38*0.48\textwidth,0.65*0.35\textwidth);

            \node[white, font=\bfseries\small] (B) at (0.75*0.48\textwidth,0.7*0.35\textwidth) {(B)};
            \draw[->,thick,white] (B) -- (0.52*0.48\textwidth,0.63*0.35\textwidth);
        \end{scope}        
    \end{tikzpicture}

    \vspace{0.7em}

    \begin{tikzpicture}
        \begin{scope}
            \clip[rounded corners=8pt] (0,0) rectangle (0.48\textwidth,0.35\textwidth);
            \node[anchor=south west,inner sep=0] (image1) at (0,0) 
                {\includegraphics[width=0.48\textwidth]{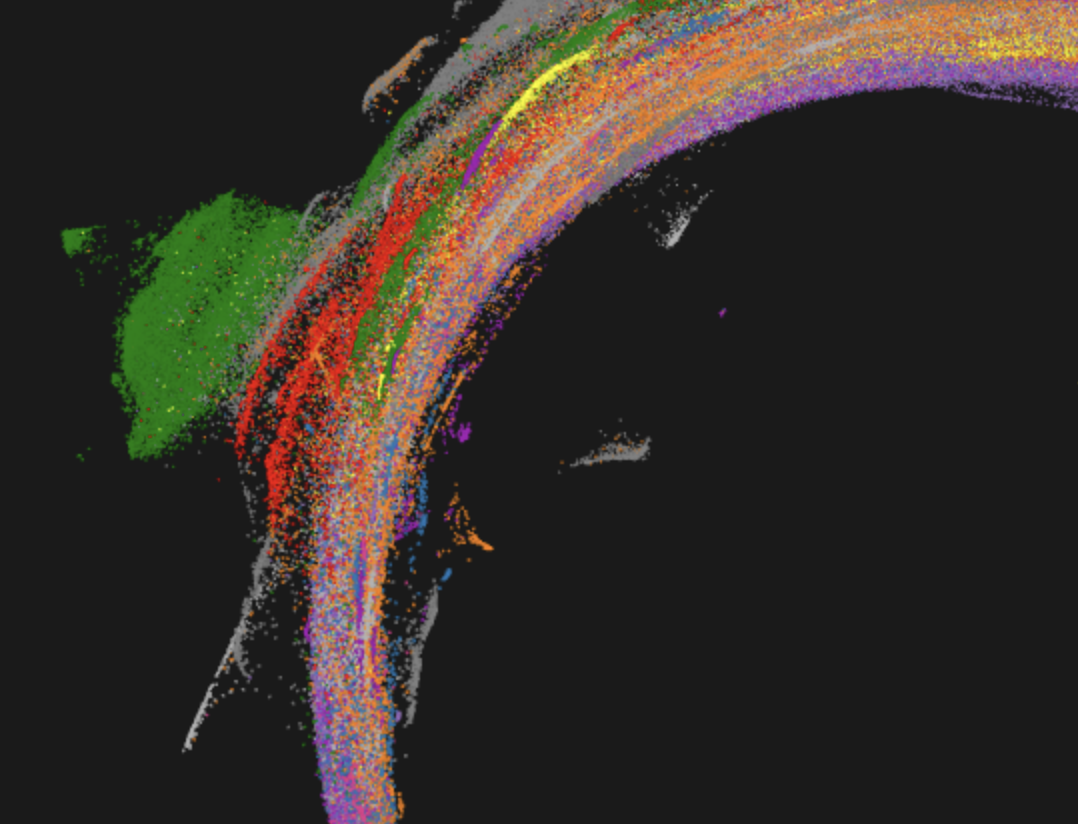}};
        \end{scope}
        \draw[white, line width=0.5pt, rounded corners=8pt] (0,0) rectangle (0.48\textwidth,0.35\textwidth);
        
        \begin{scope}[shift={(0.5\textwidth,0)}]
            \clip[rounded corners=8pt] (0,0) rectangle (0.48\textwidth,0.35\textwidth);
            \node[anchor=south west,inner sep=0] (image2) at (0,0) 
                {\includegraphics[width=0.48\textwidth]{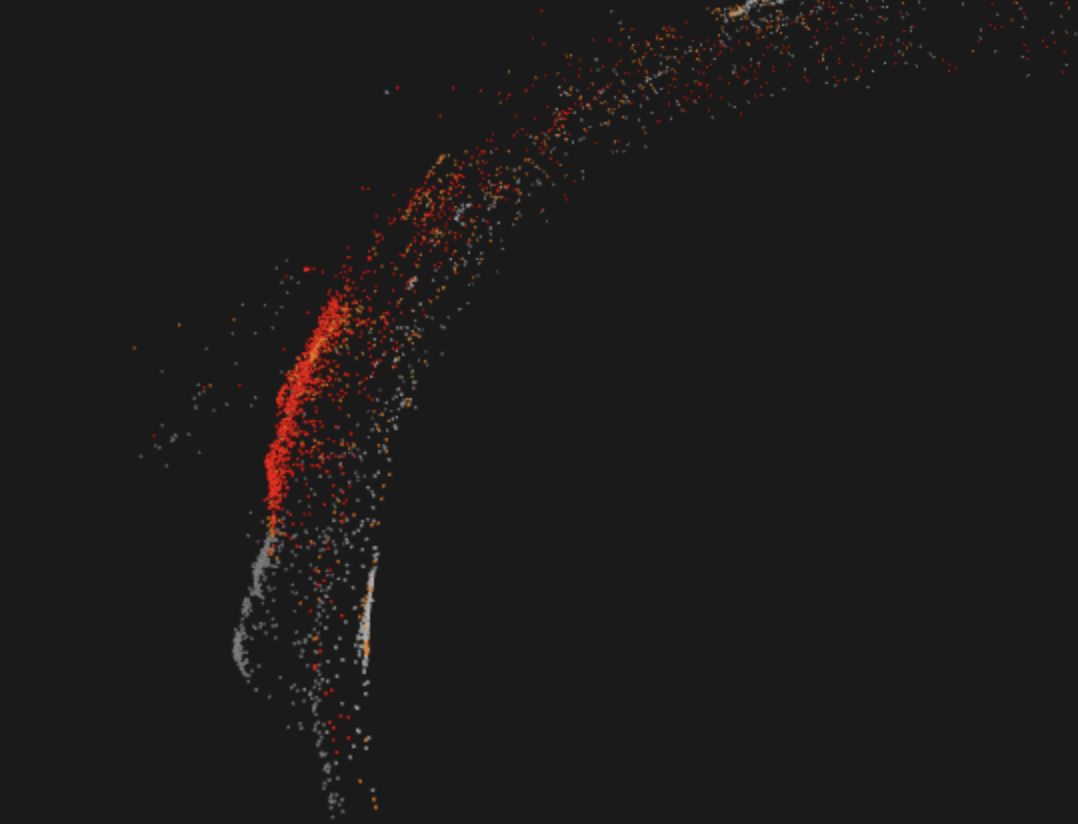}};
        \end{scope}
        \draw[white, line width=0.5pt, rounded corners=8pt] (0.5\textwidth,0) rectangle (\textwidth,0.35\textwidth);

        \begin{scope}[shift={(0.52\textwidth,0)}]
            \pgfmathsetmacro{\imgw}{0.48*\textwidth}
            \pgfmathsetmacro{\imgh}{0.35*\textwidth}
            
            \node[white, font=\bfseries\small] (C) at (0.5*0.48\textwidth,0.5*0.35\textwidth) {(C)};
            \draw[->,thick,white] (C) -- (0.25*0.48\textwidth,0.5*0.35\textwidth);
        \end{scope}
        
        \node[below=0.1cm, font=\small] at (0.24\textwidth,0) {All tokens};
        \node[below=0.1cm, font=\small] at (0.76\textwidth,0) {Tokens containing ``<''};
        
    \end{tikzpicture}
    
\vspace{0.2em}

    \caption{
        \textbf{Tokens containing ``<'' in both $n \beta = 0$ and $n \beta > 0$ data.} \emph{First row:} data from Pythia-14M, usual hyperparameters. \emph{Second row:} data from Pythia-14M with hyperparameter $n\beta = 0$. \emph{Left:} Full low-dimensional representation of susceptibility vectors colored by pattern type. 
        \emph{Right:} Filtered view showing only tokens when decoded contain ``<'' (e.g. \tok{<}, \tok{</}). Region (A) contains \tok{<} from opening tags while (B) contains tokens \tok{</} from closing HTML tags. Region (C) contains a mix of both \tok{</}, \tok{<} with more of the latter towards the bottom of the region and more of the former towards the top.
    }
    \label{fig:bracket_clusters_nbeta0}
\end{figure}

\clearpage

\section{Clusters}

\subsection{Pythia-14M clusters}\label{section:cluster_table}

We divide token clusters into four main categories (and implicitly ``Other''). These are
\begin{itemize}
\item Grammar: Syntax and grammatical patterns
\item Math: Mathematical and scientific notation
\item Code: Code, markup and technical syntax
\item Formatting: Document structure and formatting
\item Bigrams: Fixed phrases and bigrams (examples in \cref{tab:bigram_clusters})
\end{itemize}
For details of this taxonomy see \cref{section:results_pythia_14m_taxonomy}. The complete list of clusters is given in \cref{table:pythia14m_cluster_list}. Note that \clu{x} stands for the cluster with index $1 \le x \le 510$ and \sae{a}{b} stands for the layer $a$ Pythia-70M SAE feature with index $b$. When an SAE feature appears in the row for a cluster it means they are a match in the sense of \cref{section:sae-methodology}.

\subsection{Bigrams and fixed phrases}

The simplest example of a mode in \citet{modes2} is an ``absolute'' bigram, where a pair of tokens only ever occur together. Given the conjectured relation between modes and clusters we therefore expect that bigrams will appear in classifications of clusters in susceptibility UMAPs.

And indeed, the set of clusters for Pythia-14M contains numerous examples of bigrams and fixed phrases, by which we mean a cluster $\mathcal{C}$ consisting of token sequences $xy$ where $y$ is the same across almost all token sequences in the cluster, and $x = x'x''$ for some common sequence $x'' \in \Sigma^{k'}$. In \cref{tab:bigram_clusters} we enumerate all the bigrams and fixed phrases found among our clusters (see \cref{section:cluster_table} for the complete list).

\begin{table}[t]
\centering


\subsection{Cluster taxonomy in Pythia-14M}\label{section:results_pythia_14m_taxonomy}

Below we provide a taxonomy of the susceptibility clusters, classifying them into six categories: Formatting, Grammar, Code, Math, Bigrams and Other. 

\subsubsection{Syntactic \& Grammatical Patterns}

\begin{itemize}
    \item \textbf{Auxiliary \& Modal Verbs}
    \begin{itemize}
        \item Modal verbs like \tok{~can}, \tok{~may} \textbf{C14, 63, 400} and the infinitive \tok{~be} \textbf{C453}.
        \item Auxiliary verbs \tok{~is}, \tok{~has}, \tok{~have} \textbf{C1, 140, 488}.
    \end{itemize}

    \item \textbf{Determiners \& Articles}
    \begin{itemize}
        \item \tok{~a} following prepositions \textbf{C4, 192}.
        \item \tok{~an} \textbf{C26, 121, 232}.
        \item \tok{~the} in specific bigrams like ``that the'' or ``to the'' \textbf{C53, 270, 408}.
    \end{itemize}

    \item \textbf{Pronouns \& References}
    \begin{itemize}
        \item Relative pronouns \tok{~which}, \tok{~who}, \tok{~whose} \textbf{C104, 116, 283}.
        \item Third-person pronouns \tok{~he}, \tok{~she}, \tok{~they} \textbf{C74, 124, 227}.
    \end{itemize}

    \item \textbf{Prepositions \& Conjunctions}
    \begin{itemize}
        \item Conjunctions \tok{~and}, \tok{~but}, \tok{~or} \textbf{C55, 113, 299}.
        \item Prepositions \tok{~in}, \tok{~on}, \tok{~at}, \tok{~with} \textbf{C11, 28, 64}.
    \end{itemize}

    \item \textbf{Other}:
    \begin{itemize}
        \item Passive constructions like \tok{~by} after past participle \tok{~caused}, \tok{~supported} \textbf{C41}.
    \end{itemize}
\end{itemize}

\subsubsection{Mathematical \& Scientific Notation}

\begin{itemize}
    \item \textbf{Algebraic Operations}
    \begin{itemize}
        \item Exponentiation \tok{**} \textbf{C2, 40, 108}.
        \item Multiplication \tok{*} \textbf{C15, 57, 155}.
        \item Division/Fractions \tok{/} \textbf{C145, 224}.
        \item Equality \tok{~=} \textbf{C54, 356}.
    \end{itemize}

    \item \textbf{Variable Definitions}
    \begin{itemize}
        \item ``Let [variable] be...'' \textbf{C70, 129, 194}.
        \item Single-letter variables (\tok{x}, \tok{y}, \tok{n}) \textbf{C10, 51, 67}.
    \end{itemize}

    \item \textbf{Problem Statements}
    \begin{itemize}
        \item ``What is...'' \textbf{C169}.
        \item ``Calculate...'' \textbf{C439}.
        \item ``Collect the terms...'' \textbf{C344}.
        \item ``nearest to'' \textbf{C471}.
    \end{itemize}

    \item \textbf{LaTeX Typesetting}
    \begin{itemize}
        \item Math delimiters \tok{\$} \textbf{C49, 68, 374}.
        \item Commands \tok{\textbackslash begin}, \tok{\textbackslash frac}, \tok{\textbackslash usepackage} \textbf{C150, 249, 399}.
    \end{itemize}

    \item \textbf{Numeric \& Unit Formatting}
    \begin{itemize}
        \item Decimals and floating points \textbf{C23, 127, 496}.
        \item Negative numbers/Minus signs \textbf{C7, 204, 343}.
    \end{itemize}
\end{itemize}

\subsubsection{Code, Markup \& Technical Syntax}

\begin{itemize}
    \item \textbf{Markup \& Tags (HTML/XML)}
    \begin{itemize}
        \item Opening/closing tags \tok{<} \tok{>} \textbf{C52, 170, 242}.
        \item Attributes \tok{ref-type}, \tok{class} \textbf{C330, 358}.
        \item Specific tags \tok{div}, \tok{span}, \tok{li} \textbf{C426}.
    \end{itemize}

    \item \textbf{Control Structures \& Syntax}
    \begin{itemize}
        \item Conditionals \tok{if (} \textbf{C97, 339}.
        \item Braces \tok{\{} \tok{\}} for blocks \textbf{C86, 162, 337}.
        \item Comments \tok{//} \textbf{C404}.
    \end{itemize}

    \item \textbf{Indentation \& Whitespace}
    \begin{itemize}
        \item Indentation (4 or 8 spaces) \textbf{C17, 106}.
        \item Newlines in code blocks \textbf{C60, 103, 491}.
    \end{itemize}

    \item \textbf{Infrastructure \& Metadata}
    \begin{itemize}
        \item URL protocols \tok{http}\tok{://} \textbf{C195, 286}.
        \item Domain suffixes \tok{.}\tok{com}, \tok{.}\tok{org} \textbf{C107, 296}.
        \item Email headers ``Sent:'' \textbf{C446}.
    \end{itemize}
\end{itemize}

\subsubsection{Document Structure \& Formatting}

\begin{itemize}
    \item \textbf{Sentence \& Document Boundaries}
    \begin{itemize}
        \item End of text token \tok{<|endoftext|>} \textbf{C45, 171}.
        \item Sentence starters (Capitalized words) \textbf{C36, 96, 384}.
        \item Question marks \textbf{C87, 131}.
    \end{itemize}

    \item \textbf{Citations \& References}
    \begin{itemize}
        \item Brackets in citations and references within text \textbf{C20, 256}.
        \item Legal citations ``U.S.C.'', ``F.2d'' \textbf{C210, 215}.
        \item Figure references \textbf{C33}.
    \end{itemize}

    \item \textbf{Visual Separators}
    \begin{itemize}
        \item Horizontal rules \tok{---}, \tok{===} \textbf{C229, 306, 406}.
        \item Double newlines for paragraphs \textbf{C22, 325, 483}.
    \end{itemize}

    \item \textbf{Lists \& Enumeration}
    \begin{itemize}
        \item Multiple choice options ``(a)'', ``(b)'' \textbf{C315, 437}.
        \item List bullets and numbers \textbf{C166}.
    \end{itemize}
\end{itemize}

\subsubsection{Bigrams}

\begin{itemize}
\item \textbf{Bigrams}
    \begin{itemize}
        \item ``United \tok{States}'' \textbf{C373}.
        \item ``associated\tok{~with}'' \textbf{C346}.
        \item ``Thank \tok{you}'' \textbf{C457}.
        \item ``based\tok{~on}'' \textbf{C99}.
    \end{itemize}
\end{itemize}

\subsection{Low-level vs high-level patterns}\label{appendix:cluster-levels}

The clusters discovered by our methodology vary in their level of abstraction. We attempt here to characterize the distinction between ``low-level'' or ``syntactic'' clusters and ``high-level'' or ``abstract'' clusters. The distinction is not sharp, and even our most abstract examples are still quite low-level. Nonetheless this is an interesting qualitative aspect of the clusters to track as we look at larger models in the future: one can see for example a distinction between the rather low-level SAEs in \citet{bricken2023monosemanticity} for a one-layer transformer with the much more abstract examples in \citet{templeton2024scaling}.

Low-level clusters tend to be defined by a specific token in a specific positional pattern, depending on immediate, local features such as ``what character precedes this one?'' Examples include \clu{148} (the digit \tok{2} as an exponent), \clu{17} (8-space indentation after newlines), \clu{154} (comma after \tok{ECT} in Enron email address lists), and \clu{389} (time format \tok{00} after colons in timestamps).

Higher-level clusters tend to abstract over token identity to capture a shared functional role, or require recognizing compositional structure across longer distances.

\begin{itemize}
    \item \clu{10}, \cluq{58}, \cluq{269}, \cluq{319}: Variable names in \pilesub{dm\_mathematics}
    \item \clu{11}: Prepositions (\tok{~to}, \tok{~on}, \tok{~by}, \tok{~in}, \tok{~for})
    \item \clu{19}: Comparison words (\tok{~smaller}, \tok{~greater}, \tok{~bigger})
    \item \clu{63}: Modal verbs (\tok{~will}, \tok{~is}, \tok{~should}, \tok{~would}, \tok{~must}) following subjects
    \item \textbf{C100}: Double space separator between multiple choice options in \dataset{dm\_mathematics}
    \item \clu{125}: Superlative/ordinal adjectives (\tok{~first}, \tok{~most}, \tok{~best}, \tok{~following}) in natural language
    \item \clu{183}: Superlative adjectives (\tok{~most}, \tok{~best}, \tok{~last}, \tok{~fastest})
    \item \clu{217}: Content of closing HTML tags (e.g. \tok{a}, \tok{b}, \tok{td})
    \item \clu{247}: Common verbs after plural subjects (\tok{~have}, \tok{~are}, \tok{~know}, \tok{~to})
    \item \clu{252}: Digits after minus sign in \dataset{dm\_mathematics} 
    \item \textbf{C280}: Sentence-initial conjunctive adverbs (\tok{~However}, \tok{~Thus}, \tok{~Therefore}, \tok{~Furthermore}, \tok{~Moreover})
    \item \clu{410}: Modal verb completions (\tok{~be}, \tok{~been}, \tok{~not}, \tok{~go}, \tok{~also})
    \item \clu{424}: Conjunctive adverbs (\tok{~thus}, \tok{~therefore}, \tok{~hence})
    \item \clu{426}: HTML tag contents (\tok{li}, \tok{td}, \tok{b}), both opening and closing but mostly the former
    \item \clu{440}: Uppercase letter sequences in base64/encoded data
\end{itemize}

\subsection{Detailed Example of Clusters}\label{appendix:detailed_examples}

Below we give 30 randomly selected samples from each of the twelve clusters showcased in \cref{fig:cluster_cards}, as well as an evaluation of whether this sample is ``on theme''.

Note that the clusters are strongly monothematic. \clu{455} (which was selected for having high entropy among its tokens) is the only cluster with fewer than 75\% of contexts clearly having the same theme. Seven out of the twelve clusters (including every cluster that was randomly selected) have greater than 90\% of examples matching the cluster's theme. We view this as compelling evidence that the clusters are identifying context-token pairs that are related to each other, with little outside noise, and that the labels given to each cluster are broadly accurate.

It is also interesting to note that when contexts in a cluster are ``off-theme'', they are frequently related. For example in \clu{294} ( months and seasons) the examples that fall out of the theme include years and the word ``date''.

\includepdf[pages=-,pagecommand={}]{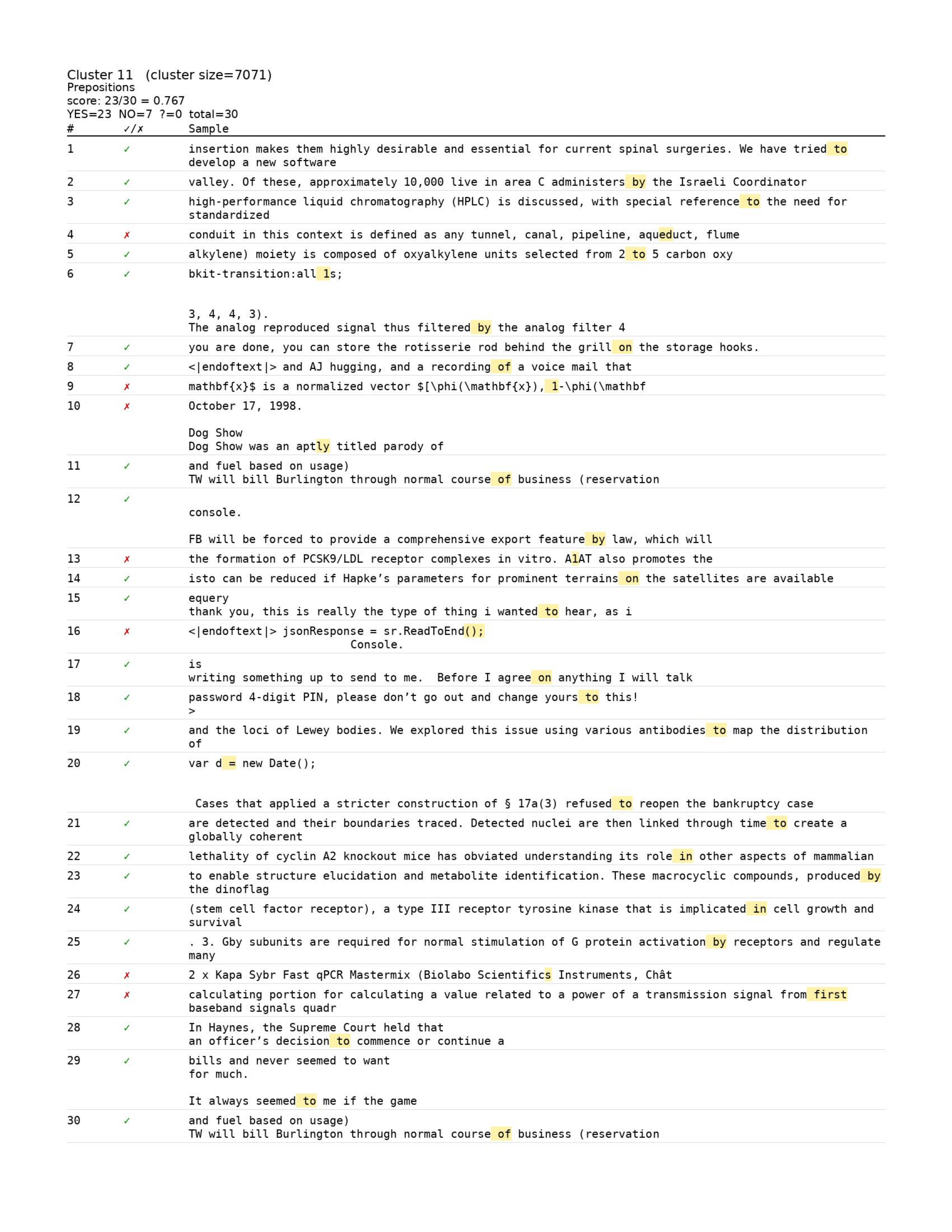}\label{fig:cluster_detail_examples}

\end{document}